\DeclareMathOperator*{\argmin}{arg\,min}
\newcommand{\EE}{\mathbb{E}}
\DeclarePairedDelimiter{\dotp}{\langle}{\rangle}
\DeclareMathOperator{\Var}{Var}
\newcommand\numberthis{\addtocounter{equation}{1}\tag{\theequation}}
\newtheorem{lemma}{Lemma}
\newtheorem{proposition}{Proposition}
\newenvironment{customthm}[1]
  {\innercustomthm}
  {\endinnercustomthm}
\newcommand{\E}{\mathbb{E}}
\def\thanks#1{\protected@xdef\@thanks{\@thanks
        \protect\footnotetext{#1}}}
\algrenewcommand{\algorithmiccomment}[1]{\hskip0em$\triangleright$ #1}
\newcommand{\FullTitle}{Tighter Analysis of Alternating Stochastic Gradient Method for Stochastic Nested Problems}
\title[\FullTitle]{\FullTitle}
 \thanks{
 The work of T. Chen was partially supported by National
Science Foundation under the project NSF 2047177 and the RPI-IBM Artificial Intelligence Research Collaboration (AIRC). The work of Y. Sun was partially supported by ONR Grant N000141712162 and AFOSR MURI FA9550-18-1-0502.}
\begin{document}

\maketitle

\begin{abstract}
Stochastic nested optimization, including stochastic compositional, min-max and bilevel optimization, is gaining popularity in many machine learning applications. 
While the three problems share the nested structure, existing works often treat them separately, and thus develop problem-specific algorithms and their analyses. 
Among various exciting developments, simple SGD-type updates (potentially on multiple variables) are still prevalent in solving this class of nested problems, but they are believed to have slower convergence rate compared to that of the non-nested problems. 
This paper unifies several SGD-type updates for stochastic nested problems into a single SGD approach that we term ALternating Stochastic gradient dEscenT (ALSET) method. By leveraging the \emph{hidden smoothness} of the problem, this paper presents a tighter analysis of ALSET for stochastic nested problems. 
Under the new analysis, to achieve an $\epsilon$-stationary point of the nested problem, it requires ${\cal O}(\epsilon^{-2})$ samples. 
Under certain regularity conditions, applying our results to stochastic compositional, min-max and reinforcement learning problems either improves or matches the best-known sample complexity in the respective cases. 
Our results explain \emph{why simple SGD-type algorithms in stochastic nested problems all work very well in practice without the need for further modifications.}

\end{abstract}

\section{Introduction}
Stochastic gradient descent (SGD) methods \cite{robbins1951} are prevalent in solving large-scale machine learning problems. 
Oftentimes, SGD is being applied to solve stochastic problems with a relatively simple structure. Specifically, applying SGD to minimize the function $\mathbb{E}_{\xi}\left[f(x;\xi)\right]$ over the variable $x\in \mathbb{R}^d$, we have the iterative update $x^{k+1}=x^k - \alpha \nabla f(x^k;\xi^k)$, where $\alpha>0$ is the stepsize and $\nabla f(x^k;\xi^k)$ is the stochastic gradient at the iterate $x^k$ and the sample $\xi^k$. However, many problems in machine learning today, such as meta learning, deep learning, hyper-parameter optimization, and reinforcement learning, go beyond the above simple minimization structure (termed the non-nested problem thereafter). For example, the objective function may be the compositions of multiple functions, where each composition may introduce an additional expectation \cite{finn2017icml}; and, the objective function may depend on the solution of another optimization problem \cite{franceschi2018bilevel}. In these problems, how to apply SGD and what is the efficiency of running SGD is not fully-understood. 

To answer these questions, in this paper, we consider the following form of \emph{stochastic nested optimization problems}, which is a generalization of the non-nested problems, given by 
\begin{subequations}\label{opt0}
	\begin{align}
	&\min_{x\in \mathbb{R}^d}~~~F(x):=\mathbb{E}_{\xi}\left[f\left(x, y^*(x);\xi\right)\right]~~~~~~~~~~~~~~~~~{\rm\sf (upper)} \\
	&~{\rm s. t.}~~~~~y^*(x)= \argmin_{y\in \mathbb{R}^{d'}}~\mathbb{E}_{\phi}[g(x, y;\phi)]~~~~~~~~~~~~{\rm\sf (lower)} \label{opt0-low}
\end{align} 
\end{subequations}
where $f$ and $g$ are differentiable functions; and, $\xi$ and $\phi$ are random variables. 
In the optimization literature \cite{colson2007overview,kunapuli2008,dempe2020bilevel}, the problem \eqref{opt0} is referred to as the stochastic \emph{bilevel} problem, where the upper-level optimization problem depends on the solution of the lower-level optimization over $y\in  \mathbb{R}^{d'}$, denoted as $y^*(x)$, which depends on the value of upper-level variable $x\in  \mathbb{R}^d$. 

The stochastic \emph{bilevel} nested problem \eqref{opt0} encompasses two popular formulations with the nested structure: stochastic min-max problems and stochastic compositional problems. 
Therefore, results on the general nested problem \eqref{opt0} will also imply the results in the special cases.  
For example, if the lower-level objective $g$ is the negative of the upper-level objective $f$, i.e., $g(x,y;\phi):=-f(x,y;\xi)$, the stochastic bilevel problem \eqref{opt0}  reduces to the stochastic min-max problem
\begin{align}\label{opt0-1}
\text{If $g(x,y;\phi):=-f(x,y;\xi)$}~~~~\Rightarrow ~~~~ \min_{x\in\mathbb{R}^d}~F(x):=\max_{y\in\mathbb{R}^{d'}}~\mathbb{E}_{\xi}\left[f(x,y;\xi)\right]. 
\end{align}
Motivated by applications in zero-sum games, adversarial learning and training GANs, significant efforts have been made for solving the stochastic min-max problem; see e.g., \cite{daskalakis2018limit,gidel2018variational,rafique2018non,thekumparampil2019efficient,mokhtari2020unified}. 

For example, if the upper-level objective $f$ is only a function of $y$, i.e., $f(x,y;\xi):=f(y;\xi)$, and the lower-level objective $g$ is a quadratic function of $y$, i.e., $g(x,y;\phi):=\|y-h(x;\phi)\|^2$ with a smooth function $h$ of $x$, then the variable $y^*(x)$ admits a closed-form solution, and thus the stochastic bilevel problem \eqref{opt0} reduces to the stochastic compositional problem \cite{wang2017mp,dai2017learning,ghadimi2020jopt}
\begin{align}\label{opt0-2}
\text{If $g(x,y;\phi):=\|y-h(x;\phi)\|^2$}~~~~\Rightarrow ~~~~	\min_{x\in \mathbb{R}^d}~F(x):=\mathbb{E}_{\xi}\left[f\big(\mathbb{E}_{\phi}[h(x;\phi)];\xi\big)\right]. 
\end{align} 
Stochastic compositional problems in the form of \eqref{opt0-2} have been studied in the applications in model-agnostic meta learning and policy evaluation in reinforcement learning; see e.g., \cite{finn2017icml,ji2020feb}.    

To solve the nested problem \eqref{opt0} by SGD, one natural solution is to apply \emph{alternating SGD updates} on $x$ and $y$ based on their stochastic gradients
\begin{equation}\label{eq.sgd2}
	 y^{k+1} =y^k -\beta_k h_g^k ~~~~~~{\rm and}~~~~~~x^{k+1}=x^k - \alpha_k h_f^k
\end{equation}
where $h_g^k$ is the unbiased stochastic gradient of $\mathbb{E}_{\phi}[g(x^k, y^k;\phi)]$ and $h_f^k$ is the (possibly biased) stochastic gradient of $F(x^k)$; and, $\beta_k$ and $\alpha_k$ are the stepsizes. 
A key challenge of running \eqref{eq.sgd2} for the nested problem is that (stochastic) gradient of the upper-level variable $x$ is prohibitively expensive to compute. As we will show later, computing an unbiased stochastic gradient of $F(x)$ requires solving the lower-level problem exactly to obtain $y^*(x)$. 

To obtain an accurate stochastic gradient $h_f^k$, there are roughly three ways in the literature. 
One way is to run SGD updates on $y^k$ multiple times before updating $x^k$, which yields a double-loop algorithm. To guarantee convergence, it typically requires either the increasing number of lower-level $y$-update or the increasing number of batch size to estimate $h_g^k$; see e.g., \cite{ghadimi2018bilevel,ji2020provably}. 
The second way is to update $y^k$ in a timescale faster than that of $x^k$ so that $x^k$ is relatively static with respect to $y^k$; i.e., $\lim_{k\rightarrow \infty}\alpha_k/\beta_k=0$; see e.g., \cite{hong2020ac}. 
The third way is to modify the direction $h_g^k$ of $y^k$ by incorporating additional correction term, which adds extra computation burden; see e.g., \cite{chen2021single}.
At a high level, these modifications either deviate from the originally light-weight implementation of SGD or sacrifice the sample complexity of SGD. 

To this end, the \textbf{main goal} of this paper is to study the efficiency of running the vanilla alternating SGD \eqref{eq.sgd2} for the nested problem \eqref{opt0} and its implications on the special problem classes \eqref{opt0-1}-\eqref{opt0-2}.



\subsection{Main results}
This paper aims to analyze a unifying algorithm for the stochastic bilevel problems that runs SGD on each variable in an alternating fashion, and provide sample complexity that matches the complexity of SGD for single-level stochastic problems. Our results explain why SGD-type algorithms in stochastic bilevel, min-max, and compositional problems all work very well in practice without the need for modifications such as correction, increasing batch size and two-timescale stepsizes. 

In the context of existing methods, our contributions can be summarized as follows. 

\begin{enumerate}
	\item [C1)] We connect three different classes of stochastic nested optimization problems (namely, stochastic compositional, min-max, and bilevel optimization), and unify three popular SGD-type updates for the respective problems into a single SGD-type approach that we term ALternating Stochastic gradient dEscenT (ALSET) method. 
		
	\item [C2)] Under the same assumptions made in most of the previous work, we discover that the solution of the lower-level problem is smooth -- a property that is overlooked by the previous analyses. By leveraging the \emph{hidden smoothness}, we present a tighter analysis of ALSET for the stochastic bilevel problems. Under the new analysis, to achieve an $\epsilon$-stationary point of the nested problem, ALSET requires ${\cal O}(\epsilon^{-2})$ samples in total, rather than the ${\cal O}(\epsilon^{-2.5})$ sample complexity in the existing literature. 
	
     \item [C3)] We further customize the analysis to the two special cases -- the compositional and min-max problems, and establish the improved sample complexity relative to that in the literature. We apply our a new analysis to the celebrated actor-critic method for reinforcement learning problems. Under some regularity conditions, our new analysis implies that to achieve an $\epsilon$-stationary point, the single-loop actor-critic method requires ${\cal O}(\epsilon^{-2})$ samples with i.i.d. sampling, which improves the ${\cal O}(\epsilon^{-2.5})$ sample complexity in the existing literature.  
 
 \end{enumerate}
 
\subsection{Other related works}
To put our work in context, we review prior art that we group in the following three categories.

\noindent\textbf{Stochastic bilevel optimization.} 
The study of bilevel optimization can be traced back to 1950s \citep{stackelberg}. 
Many recent efforts have been made to solve the bilevel problems.
One successful approach is to reformulate the bilevel problem as a single-level problem by replacing the lower-level problem by its optimality conditions~\citep{colson2007overview,kunapuli2008}. Recently, gradient-based methods for bilevel optimization have gained popularity, where the idea is to iteratively approximate the (stochastic) gradient of the upper-level problem either in forward or backward manner \citep{sabach2017jopt,franceschi2018bilevel,shaban2019truncated,grazzi2020iteration}. 
Recent work has also studied the case where the lower-level problem does not have a unique solution \citep{liu2020icml}.

The non-asymptotic analysis of bilevel optimization algorithms has been recently studied in some \emph{pioneering} works, e.g., \citep{ghadimi2018bilevel,hong2020ac,ji2020provably}, just to name a few. In both \citep{ghadimi2018bilevel,ji2020provably}, bilevel stochastic optimization algorithms have been developed that run in a double-loop manner.
To achieve an $\epsilon$-stationary point, they only need the sample complexity ${\cal O}(\epsilon^{-2})$ that is comparable to the complexity of SGD for the single-level case. 
Recently, a single-loop two-timescale stochastic approximation algorithm has been developed in \citep{hong2020ac} for the bilevel problem \eqref{opt0}. Due to the nature of two-timescale update, it incurs the sub-optimal sample complexity ${\cal O}(\epsilon^{-2.5})$. 
A single-loop single-timescale stochastic bilevel optimization method has been recently developed in \cite{chen2021single}. While the method can achieve the sample complexity ${\cal O}(\epsilon^{-2})$, the resultant update on $y$ needs extra matrix projection, which can be costly. Very recently, the momentum-based acceleration has been incorporated into the $x$- and $y$-updates in \cite{khanduri2021momentum,guo2021randomized} and in \cite{yang2021provably} during our preparation of the online version, where the new algorithms enjoy an improved sample complexity ${\cal O}(\epsilon^{-3/2})$. However, these results cannot imply the ${\cal O}(\epsilon^{-2})$ sample complexity of the alternating SGD update \eqref{eq.sgd2}, and are orthogonal to our results. A comparison of our results with prior work can be found in Table \ref{table:comp1}.

 \begin{table*}[t]
 \small
	\begin{center}
		\begin{tabular}{c||c|c | c|c| c|c }
			\hline
			\hline
 	& {ALSET}  & {BSA} &  {TTSA}  & {stocBiO} & {STABLE} &\!\!\!{SUSTAIN/RSVRB} \!\!\! \\ \hline
			\textbf{batch size} & ${\cal O}(1)$   & ${\cal O}(1)$   & ${\cal O}(1)$  & ${\cal O}(\epsilon^{-1})$   & ${\cal O}(1)$& ${\cal O}(1)$   \\ \hline
		 \textbf{$y$-update} & SGD  &\!\! ${\cal O}(\epsilon^{-1})$ SGD steps \!\! & SGD   & SGD  & correction & momentum    \\ \hline
		 \makecell{\makecell{\textbf{samples in $\xi$}\\ \textbf{samples in $\phi$}}}   & \makecell{ ${\cal O}(\epsilon^{-2})$ \\ ${\cal O}(\epsilon^{-2})$ } & \makecell{ ${\cal O}(\epsilon^{-2})$  \\ ${\cal O}(\epsilon^{-3})$ }  & \makecell{ ${\cal O}(\epsilon^{-2.5})$  \\ ${\cal O}(\epsilon^{-2.5})$ }& \makecell{ ${\cal O}(\epsilon^{-2})$  \\ ${\cal O}(\epsilon^{-2})$ } & \makecell{ ${\cal O}(\epsilon^{-2})$  \\ ${\cal O}(\epsilon^{-2})$ }& \makecell{ ${\cal O}(\epsilon^{-3/2})$   \\ ${\cal O}(\epsilon^{-3/2})$ } \\ \hline
		 	\hline
		\end{tabular}
	\end{center}
 	\vspace{-0.2cm}
   \caption{Sample complexity of stochastic bilevel algorithms (BSA in \citep{ghadimi2018bilevel}, TTSA in \citep{hong2020ac}, stocBiO in \citep{ji2020provably}, STABLE in \cite{chen2021single}, SUSTAIN in \cite{khanduri2021momentum}, RSVRB in \cite{guo2021randomized}) to achieve an $\epsilon$-stationary point of $F(x)$.}			\label{table:comp1}
   \vspace{-0.3cm}
\end{table*}

\begin{figure*}[b]
\vspace{-0.2cm}
 \small
\begin{minipage}[c]{0.495\textwidth}
		\begin{tabular}{c||c|c | c   }
			\hline
			\hline
 	& {ALSET}  & {	SGDA} &  {SMD}      \\ \hline
			\textbf{batch size} & ${\cal O}(1)$   & ${\cal O}(\epsilon^{-1})$  & /   \\ \hline
		 \textbf{$y$-update} & SGD  & SGD& subproblem         \\ \hline
		\textbf{samples }  &  ${\cal O}(\kappa^3\epsilon^{-2})$  & ${\cal O}(\kappa^3\epsilon^{-2})$  &  ${\cal O}(\kappa^3\epsilon^{-2})$\\ \hline
		 	\hline
		\end{tabular}
\captionof{table}{Sample complexity of stochastic min-max algorithms (BSA in \citep{ghadimi2018bilevel}, GDA in \citep{lin2020gradient}, SMD in \citep{rafique2018non}) to achieve an $\epsilon$-stationary point of $F(x)$.}			\label{table:comp2}
\end{minipage} 
\hspace{0.5cm}
\begin{minipage}[c]{0.495\textwidth}
		\begin{tabular}{c||c|c | c   }
			\hline
			\hline
 	& {ALSET}  & {SCGD} &  {NASA}      \\ \hline
			\textbf{batch size} & ${\cal O}(1)$   & ${\cal O}(1)$  & ${\cal O}(1)$     \\ \hline
		 \textbf{$y$-update} & SGD  & SGD& correction         \\ \hline
 \textbf{samples }  &  ${\cal O}(\epsilon^{-2})$  & ${\cal O}(\epsilon^{-4})$  & ${\cal O}(\epsilon^{-2})$ \\ \hline
		 	\hline
		\end{tabular}
\captionof{table}{Sample complexity of stochastic compositional algorithms (SCGD in \citep{wang2017mp}, NASA in \citep{ghadimi2020jopt}) to achieve an $\epsilon$-stationary point of $F(x)$.}			\label{table:comp3}
\end{minipage} 	
\vspace{-0.2cm}
\end{figure*}

\noindent\textbf{Stochastic min-max optimization.}
In the context of min-max problems, the alternating version of the stochastic gradient descent ascent (GDA) method can be viewed as the alternating SGD updates \eqref{eq.sgd2} for the special nested problem \eqref{opt0-1}. 
To mitigate the cycling behavior of GDA for convex-concave min-max problems, several variants have been developed by incorporating the idea of optimism; see e.g., \cite{daskalakis2018limit,gidel2018variational,mokhtari2020unified,yoon2021accelerated}. 
The analysis of stochastic GDA in the nonconvex-strongly concave setting is closely related to this paper; e.g., 
\cite{rafique2018non,thekumparampil2019efficient,nouiehed2019solving,lin2020gradient}. 
Specifically, for stochastic GDA (SGDA), the ${\cal O}(\epsilon^{-2})$ sample complexity has been established in \cite{lin2020gradient} under an increasing batch size ${\cal O}(\epsilon^{-1})$. 
As highlighted in \cite{lin2020gradient}, how to achieve the ${\cal O}(\epsilon^{-2})$ sample complexity under an ${\cal O}(1)$ constant batch size remains open. The reduction of our results to the min-max setting will provide an answer to this open question. In the same setting, accelerated GDA algorithms have been developed in \cite{luo2020stochastic,yan2020optimal,tran2020hybrid}.    
Going beyond the one-side concave settings, algorithms and their convergence analysis have been studied for nonconvex-nonconcave min-max problems with certain benign structure; see e.g., \cite{gidel2018variational,liu2019towards,yang2020global,diakonikolas2021efficient}.
A comparison of our results with prior work can be found in Table \ref{table:comp2}.

\noindent\textbf{Stochastic compositional optimization.}
Stochastic compositional gradient algorithms developed in \citep{wang2017mp,wang2017jmlr} can be viewed as the alternating SGD updates \eqref{eq.sgd2} for the special compositional problem \eqref{opt0-2}. However, to ensure convergence, the algorithms \citep{wang2017mp,wang2017jmlr} use two sequences of variables being updated in two different time scales, and thus 
the complexity of \citep{wang2017mp} and \citep{wang2017jmlr} is worse than ${\cal O}(\epsilon^{-2})$ of SGD for the non-compositional case.  
While most of existing algorithms rely on either two-timescale updates, the single-timescale  single-loop approaches have been recently developed in \citep{ghadimi2020jopt,chen2020scsc,ruszczynski2020stochastic}, which achieve the sample complexity ${\cal O}(\epsilon^{-2})$, same as SGD for the non-nested problems. However, the algorithms proposed therein are not the vanilla alternating SGD update in the sense of \eqref{eq.sgd2}. Other related compositional algorithms also include \citep{lian2017aistats,zhang2019nips,tran2020stochastic}.
A comparison can be found in Table \ref{table:comp3}.

\textbf{Organization.}
The basic background of bilevel optimization is reviewed, and the
tighter analysis of the unifying ALSET method is presented
in Section \ref{sec.scg}. 
The reduction of the main results to the special stochastic nested problems are provided in Section \ref{sec.special}, and its applications to the actor-critic method is discussed in Section 4, followed by the conclusions in Section 5.

\section{Improved Analysis of Alternating Stochastic Gradient Method}
\label{sec.scg}
 
In this section, we will first provide background of bilevel problems and then introduce the general alternating stochastic  gradient descent (ALSET) method for stochastic nested problems.  

\vspace{-0.1cm}
\subsection{Preliminaries}
We use $\|\cdot \|$ to denote the $\ell_2$ norm for vectors and Frobenius norm for matrices.
For convenience, we define the deterministic functions as $g(x, y):=\mathbb{E}_{\phi}[g(x, y;\phi)]$ and $f(x, y):=\mathbb{E}_{\xi}[f(x, y;\xi)]$.

We also define $\nabla_{yy}^2 g\big(x, y\big)$ as the Hessian matrix of $g$ with respect to $y$ and define $  \nabla_{xy}^2g\big(x, y\big)$ as
\begin{align*}
  \nabla_{xy}^2g\big(x, y\big) :=    \begin{bmatrix}
    \frac{\partial^2}{\partial x_1\partial y_1 }g\big(x, y\big) & \cdots & \frac{\partial^2}{\partial x_1\partial y_{d'}}g\big(x, y\big)\\
    & \cdots & \\
    \frac{\partial^2}{\partial x_d\partial y_1 }g\big(x, y\big) & \cdots & \frac{\partial^2}{ \partial x_d\partial y_{d'}}g\big(x, y\big)
  \end{bmatrix}.
\end{align*}


We make the following assumptions that are common in bilevel optimization  \citep{ghadimi2018bilevel,ji2020provably,hong2020ac,guo2021randomized}. 

\vspace{0.1cm}
\begin{assumption}[Lipschitz continuity]
Assume that $f, \nabla f, \nabla g, \nabla^2g$ are respectively $\ell_{f,0}$, $\ell_{f,1}, \ell_{g,1}, \ell_{g,2}$-Lipschitz continuous; that is, for $z_1:=[x_1;y_1]$, $z_2:=[x_2;y_2]$, we have $\|f(x_1,y_1)-f(x_2,y_2)\|\leq \ell_{f,0}\|z_1-z_2\|, \|\nabla f(x_1,y_1)-\nabla f(x_2,y_2)\|\leq \ell_{f,1}\|z_1-z_2\|, \|\nabla g(x_1,y_1)-\nabla g(x_2,y_2)\|\leq \ell_{g,1}\|z_1-z_2\|, \|\nabla^2 g(x_1,y_1)-\nabla^2 g(x_2,y_2)\|\leq \ell_{g,2}\|z_1-z_2\|$.
\end{assumption}

\begin{assumption}[Strong convexity of $g$ in $y$]
For any fixed $x$, $g(x,y)$ is $\mu_g$-strongly convex in $y$.
\end{assumption}

Assumptions 1 and 2 together ensure that the first- and second-order derivations of $f(x,y), g(x,y)$ as well as the solution mapping $y^*(x)$ are well-behaved. Define the condition number $\kappa:={\ell_{g,1}}/{\mu_g}$.

\begin{assumption}[Stochastic derivatives]
The stochastic derivatives $\nabla f(x,y;\xi)$, $\nabla g(x,y;\phi)$, $\nabla^2g(x, y, \phi)$ are unbiased estimators of $\nabla f(x,y)$, $\nabla g(x,y)$, $\nabla^2g(x, y)$, respectively; and their variances are bounded by $\sigma_f^2, \sigma_{g,1}^2$, $\sigma_{g,2}^2$, respectively.
\end{assumption}

Assumptions 2 and 3 together imply that the second moments are bounded by 
\begin{subequations}
\begin{align}\label{eq.assp_grad_norm}
&\EE_{\xi}[\|\nabla f(x,y;\xi)\|^2]\leq \ell_{f,0}^2 + \sigma_f^2:=C_{f}^2\\
&\EE_{\phi}[\|\nabla^2 g(x,y;\phi)\|^2]\leq \ell_{g,1}^2 + \sigma_{g,2}^2:=C_{g}^2.
\end{align}
\end{subequations}

Assumption 3 is the counterpart of the unbiasedness and bounded variance assumption in the single-level stochastic optimization. In addition, the bounded moments in Assumption 3 ensure the Lipschitz continuity of the upper-level gradient $\nabla F(x)$.



We first highlight the inherent challenge of directly applying the alternating SGD method to the bilevel problem \eqref{opt0}. 
To illustrate this point, we derive the gradient of the upper-level function $F(x)$ in the next proposition; see the proof in the supplementary document. 
\begin{proposition}\label{prop1}
Under Assumptions 1--3, we have the gradients
	\begin{align}\label{grad-deter-2}
 \nabla F(x)=\nabla_xf(x, y^*(x))-\nabla_{xy}^2g(x, y^*(x))\!\left[\nabla_{yy}^2 g(x, y^*(x))\right]^{-1}\nabla_y f(x, y^*(x)).
\end{align}
Furthermore, $\nabla F(x)$ and $y^*(x)$ are Lipschitz continuous with constants $L_F, L_y$, respectively. 
\end{proposition}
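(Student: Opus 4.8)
The plan is to treat $F(x) = f(x, y^*(x))$ as a composition and apply the chain rule together with the implicit function theorem, then control the resulting expression using Assumptions 1--3. First I would establish the gradient formula. Because $g(x,\cdot)$ is $\mu_g$-strongly convex (Assumption 2), the lower-level minimizer $y^*(x)$ is the unique solution of the stationarity condition $\nabla_y g(x, y^*(x)) = 0$. Differentiating this identity in $x$ and invoking the implicit function theorem, which applies since $\nabla_{yy}^2 g$ is invertible with eigenvalues at least $\mu_g$, yields the Jacobian
$$\nabla y^*(x) = -\left[\nabla_{yy}^2 g(x, y^*(x))\right]^{-1} \nabla_{yx}^2 g(x, y^*(x)).$$
Substituting into the chain-rule expression $\nabla F(x) = \nabla_x f(x, y^*(x)) + [\nabla y^*(x)]^{\top} \nabla_y f(x, y^*(x))$ and using that $\nabla_{yy}^2 g$ is symmetric while $[\nabla_{yx}^2 g]^{\top} = \nabla_{xy}^2 g$ gives exactly the claimed formula \eqref{grad-deter-2}.

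Next, for the Lipschitz continuity of $y^*$, I would argue directly from strong convexity rather than through the Jacobian. Writing $y_i = y^*(x_i)$ and using $\nabla_y g(x_1, y_1) = 0$ together with $\mu_g$-strong convexity,
$$\mu_g \|y_1 - y_2\|^2 \le \langle \nabla_y g(x_1, y_1) - \nabla_y g(x_1, y_2),\, y_1 - y_2 \rangle = \langle \nabla_y g(x_2, y_2) - \nabla_y g(x_1, y_2),\, y_1 - y_2 \rangle,$$
and the $\ell_{g,1}$-Lipschitz continuity of $\nabla g$ (Assumption 1) bounds the right-hand side by $\ell_{g,1}\|x_1 - x_2\|\,\|y_1 - y_2\|$. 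Cancelling one factor gives $L_y = \ell_{g,1}/\mu_g = \kappa$.

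The real work lies in showing $\nabla F$ is $L_F$-Lipschitz by differencing \eqref{grad-deter-2} at $x_1$ and $x_2$. The argument rests on three ingredients. First, each factor in \eqref{grad-deter-2} is bounded: $\|\nabla_y f\| \le \ell_{f,0}$, $\|\nabla_{xy}^2 g\| \le \ell_{g,1}$, and $\|[\nabla_{yy}^2 g]^{-1}\| \le 1/\mu_g$ by strong convexity. Second, each factor is Lipschitz along the trajectory $x \mapsto (x, y^*(x))$, which is itself $(1 + L_y)$-Lipschitz by the previous step, so $\nabla f$ and $\nabla^2 g$ inherit constants $\ell_{f,1}(1+L_y)$ and $\ell_{g,2}(1+L_y)$. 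Third, the matrix inverse is Lipschitz via the resolvent identity $A^{-1} - B^{-1} = A^{-1}(B-A)B^{-1}$, which gives $\|[\nabla_{yy}^2 g(z_1)]^{-1} - [\nabla_{yy}^2 g(z_2)]^{-1}\| \le \mu_g^{-2}\ell_{g,2}(1+L_y)\|x_1 - x_2\|$. Assembling these through the product rule for Lipschitz constants of bounded factors, $\mathrm{Lip}(AB) \le \|A\|\,\mathrm{Lip}(B) + \mathrm{Lip}(A)\,\|B\|$ applied termwise to the triple product, produces an explicit $L_F$ in terms of $\ell_{f,0}, \ell_{f,1}, \ell_{g,1}, \ell_{g,2}, \mu_g$.

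I expect the main obstacle to be the bookkeeping in this last step: the second summand of \eqref{grad-deter-2} is a triple product of matrix and vector factors, each evaluated at $y^*(x)$, so every Lipschitz estimate must also absorb the constant $L_y$ of the inner map, and the inverse-Hessian factor requires the resolvent bound rather than a direct estimate. Keeping the dependence on the condition number $\kappa$ explicit throughout, rather than collapsing everything into a generic constant, is what ultimately feeds into the refined sample-complexity claims, so I would track the constants carefully instead of absorbing them.
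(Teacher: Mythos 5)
Your proposal is correct and, for the gradient formula, follows exactly the paper's route: differentiate the lower-level optimality condition $\nabla_y g(x,y^*(x))=0$, invert $\nabla_{yy}^2 g$ (valid by $\mu_g$-strong convexity), and substitute the resulting Jacobian into the chain rule. For the Lipschitz claims the paper gives no proof of its own but cites Lemma 2.2 of Ghadimi and Wang (2018) with constants $L_y=\ell_{g,1}/\mu_g$ and $L_F={\cal O}(\kappa^3)$; your self-contained arguments --- the strong-convexity/optimality bound for $y^*$ and the differencing of \eqref{grad-deter-2} via bounded factors, the resolvent identity, and the product rule for Lipschitz constants --- are precisely the standard arguments behind that citation and recover constants of the same order.
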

Notice that obtaining an unbiased stochastic estimate of $\nabla F(x)$ and applying SGD on $x$ face two main difficulties: i) the gradient $\nabla F(x)$ at $x$ depends on the minimizer of the lower-level problem $y^*(x)$; ii) even if $y^*(x)$ is known, it is hard to apply the stochastic approximation to obtain an unbiased estimate of $\nabla F(x)$ since $\nabla F(x)$ is nonlinear in $\nabla_{yy}^2 g(x, y^*(x))$. 

Similar to some existing stochastic bilevel algorithms \cite{ghadimi2018bilevel,hong2020ac,ji2020provably}, we evaluate $\nabla F(x)$ on a certain vector $y$ in place of $y^*(x)$.
Replacing the $y^*(x)$ in definition \eqref{grad-deter-2} by $y$, we define
\begin{equation}\label{grad-deter-3}
 \overline{\nabla}_xf\big(x, y\big):=\nabla_xf\big(x, y\big) -\nabla_{xy}^2g\big(x, y\big)\left[\nabla_{yy}^2 g\big(x, y\big)\right]^{-1}\nabla_y f\big(x, y\big). 
\end{equation}

And to reduce the bias in \eqref{grad-deter-3}, we estimate $\left[\nabla_{yy}^2 g\big(x, y\big)\right]^{-1}$ via
\begin{equation}\label{eq.inverse-estimator}
\left[\nabla_{yy}^2 g\big(x, y\big)\right]^{-1}\approx \Big[\frac{N}{\ell_{g,1}}\prod\limits_{n=1}^{N'}\Big(I-\frac{1}{\ell_{g,1}}\nabla_{yy}^2g(x,y;\phi_{(n)})\Big)\Big]
\end{equation}
where $N'$ is drawn from $\{1, 2, \ldots, N\}$ uniformly at random and $\{\phi^{(1)}, \ldots,\phi^{(N')}\}$ are i.i.d. samples.
It has been shown in \cite{ghadimi2018bilevel} that using \eqref{eq.inverse-estimator}, the estimation bias of $\left[\nabla_{yy}^2 g\big(x, y\big)\right]^{-1}$ exponentially decreases with the number of samples $N$.

\begin{wrapfigure}{R}{0.6\textwidth}
\vspace{-0.8cm}
 \begin{minipage}{0.6\textwidth}
\begin{algorithm}[H]
\caption{ALSET for stochastic bilevel problems}\label{alg: ALSET}
\begin{algorithmic}[1]
    \State{\textbf{initialize:} $x^0, y^0$, stepsizes $\{\alpha_k, \beta_k\}$.} 
        \For{$k=0,1,\ldots, K-1$}
            \For{$t=0,1,\ldots,T-1$}
                \State{update $y^{k,t+1}=y^{k,t}-\beta_kh_g^{k,t}$}~~~\Comment{set $y^{k,0}=y^k$}
            \EndFor
            \State{update $x^{k+1}=x^k-\alpha h_f^k$}     ~~~~~~~~~       \Comment{set $y^{k+1}=y^{k,T}$}
        \EndFor
\end{algorithmic}
\end{algorithm}
 \end{minipage}
 \vspace{-0.4cm}
\end{wrapfigure}

\subsection{Main results: Tighter analysis of ALSET}
In this subsection, we first describe the general ALSET algorithm for the stochastic bilevel problem, and then present its new convergence result.

This algorithm is very simple to implement. 
At each iteration $k$, ALSET alternates between the stochastic gradient update on $y^k$ and that on $x^k$. 
Although it is possible that $T=1$, for generality, we run $T$ steps of SGD on the lower-level variable $y^k$ before updating upper-level variable $x^k$. With $\alpha_k$ and $\beta_k$ denoting the stepsizes of $x^k$ and $y^k$ that decrease at the same rate as SGD, the ALSET update is  
\begin{subequations}\label{eq.ALSET}
\begin{align}
\!\!\!\!    y^{k,t+1}&=y^{k,t} \!-\beta_k h_g^{k,t},\,t=0,\ldots, T~~~~{\rm with}~~~y^{k,0}:=y^k;~ y^{k+1}:=y^{k,T} \label{eq.ALSET3}\\
\!\!\!\!    x^{k+1}&=x^k \!-\! \alpha_k h_f^k\label{eq.ALSET2}
\end{align}
\end{subequations}
where the update direction of $y$ is the stochastic gradient $h_g^{k,t} := \nabla_yg(x^k,y^{k,t};\phi^{k,t})$; and, with the Hessian inverse estimator \eqref{eq.inverse-estimator}, the update direction of $x$ is the slightly biased gradient
\begin{align}\label{eq.biased-estimator}
h_f^k:=&\nabla_x f(x^k,y^{k+1};\xi^k)\nonumber\\
&-\nabla_{xy}^2g(x^k,y;\phi_{(0)}^k)\Big[\frac{N}{\ell_{g,1}}\prod\limits_{n=1}^{N'}\Big(I-\frac{1}{\ell_{g,1}}\nabla_{yy}^2g(x^k,y^{k+1};\phi_{(n)}^k)\Big)\Big]\nabla_yf(x^k,y^{k+1};\xi^k).
\end{align}

The alternating update \eqref{eq.ALSET} serves as a template of running SGD on stochastic nested problems. As we will show in the subsequent sections, we can generate stochastic algorithms for min-max,  compositional, and even reinforcement learning problems following \eqref{eq.ALSET} as a template, but they differ in the particular forms of the stochastic gradients $h_g^k, h_f^k$ for the specific upper- and lower-level objective functions.  
See a summary of ALSET for the bilevel problem in Algorithm \ref{alg: ALSET}.

\textbf{Comparison between ALSET with existing works.}
Readers who are familiar with recent developments on stochastic optimization for bilevel problems may readily recognize the similarities between the general ALSET update \eqref{alg: ALSET} that we will analyze and the SGD-based updates in BSA  \citep{ghadimi2018bilevel}, TTSA \citep{hong2020ac} and stocBiO \citep{ji2020provably}. 
However, the update \eqref{alg: ALSET} is different from BSA in that the number of $y$-update, denoted as $T$, is a constant in \eqref{alg: ALSET} that does not grow with the accuracy $\epsilon^{-1}$; the  update \eqref{alg: ALSET} is different from stocBiO in that the stochastic gradient $h_g^{k,t}$ used in the $y$-update \eqref{eq.ALSET3} is obtained by a fixed batch size that does not depend on the accuracy $\epsilon^{-1}$; and, the update \eqref{alg: ALSET} is different from TTSA in that the stepsizes $\alpha_k$ and $\beta_k$ in \eqref{eq.ALSET} decrease at the same timescale.


We next present the convergence result of ALSET. 
\begin{customthm}{1}[Nonconvex]\label{theorem1}
Under Assumptions 1--3, define the constants as 
\begin{equation}
 \bar\alpha_1=\frac{1}{2L_F+4L_fL_y+\frac{L_fL_{yx}}{L_y\eta}}, ~~~\bar\alpha_2=\frac{16T\mu_g\ell_{g,1}}{(\mu_g+\ell_{g,1})^2(8L_fL_y + \eta L_{yx}\tilde{C}_f^2\bar\alpha_1)}
\end{equation}
where $\eta>0$ is a control constant that will be specified in each special case to achieve the best sample complexity, choose the stepsizes as 
\begin{equation}\label{eq.stepsize}
 \alpha_k=\min\left\{\bar\alpha_1, \bar\alpha_2, \frac{\alpha}{\sqrt{K}}\right\}~~~{\rm and}~~~\beta_k=\frac{8L_fL_y + \eta L_{yx}\tilde{C}_f^2\bar\alpha_1}{4T\mu_g}\alpha_k
\end{equation}
then for any $T\geq 1$, the iterates $\{x^k\}$ and $\{y^k\}$ generated by Algorithm \ref{alg: ALSET} satisfy
	\begin{equation}\label{eq.theorem1}
	\small
   \frac{1}{K} \sum_{k=1}^K\EE\left[\left\|\nabla F(x^k)\right\|^2\right] = {\cal O}\Big(\frac{1}{\sqrt{K}}\Big)~~~{\rm and}~~~\EE\left[\left\|y^K\!-y^*(x^K)\right\|^2\right] = {\cal O}\Big(\frac{1}{\sqrt{K}}\Big) 
\end{equation}
where $y^*(x^K)$ is the minimizer of the lower-level problem in \eqref{opt0-low}.
\end{customthm}

\begin{proposition}\label{prop.bilevel}
Under the same assumptions and the parameters of Theorem \ref{theorem1}, with $\kappa:=\frac{\ell_{g,1}}{\mu_g}$ being the condition number, if we select {\small$\alpha=\Theta(\kappa^{-2.5})$, $T=\Theta(\kappa^{4})$, $\eta={\cal O}(\kappa)$} in \eqref{eq.stepsize}, then we have 
\begin{equation}
    \frac{1}{K}\sum_{k=0}^{K-1}\EE[\|\nabla F(x^k)\|^2] = {\cal O}\left(\frac{\kappa^3}{K} + \frac{\kappa^{2.5}}{\sqrt{K}}\right).
\end{equation}
\end{proposition}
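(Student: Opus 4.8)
The proof is a specialization of Theorem \ref{theorem1}: rather than collapsing the guarantee into ${\cal O}(1/\sqrt{K})$, I keep every problem-dependent constant explicit and track only its dependence on $\kappa$. The proof of Theorem \ref{theorem1} in fact produces, before any hiding of constants, a bound of the schematic form
\begin{equation*}
\frac{1}{K}\sum_{k=0}^{K-1}\EE\left[\|\nabla F(x^k)\|^2\right] \le \frac{D_0}{\alpha_k K} + D_1\,\alpha_k,
\end{equation*}
where $D_0$ collects the initial objective gap $F(x^0)-F^*$ together with the initial lower-level error $\|y^0-y^*(x^0)\|^2$, and $D_1$ collects the variance/bias coefficients multiplying $\alpha_k$ in the descent inequality. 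Since $\alpha_k=\min\{\bar\alpha_1,\bar\alpha_2,\alpha/\sqrt{K}\}$, I use $\tfrac{1}{\alpha_k}\le\tfrac{1}{\bar\alpha_1}+\tfrac{1}{\bar\alpha_2}+\tfrac{\sqrt{K}}{\alpha}$ to split the first term into a transient ${\cal O}\big((\bar\alpha_1^{-1}+\bar\alpha_2^{-1})/K\big)$ piece and a steady-state $\tfrac{D_0}{\alpha\sqrt{K}}$ piece, while the second term contributes $D_1\alpha/\sqrt{K}$ once $\alpha_k=\alpha/\sqrt{K}$. Matching this to the claimed ${\cal O}(\kappa^3/K+\kappa^{2.5}/\sqrt{K})$ therefore reduces to verifying three scalings: $\bar\alpha_1^{-1}+\bar\alpha_2^{-1}={\cal O}(\kappa^3)$, $D_0={\cal O}(1)$, and $D_1={\cal O}(\kappa^5)$.

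The second step is bookkeeping of the $\kappa$-dependence of the composite constants furnished by Proposition \ref{prop1} and Assumptions 1--3. Using $L_y=\Theta(\kappa)$, the hypergradient smoothness constant $L_F={\cal O}(\kappa^3)$, and the bounded-moment constants $C_f,C_g$ (with the Neumann/Hessian-inverse estimator carrying a $\mu_g^{-1}$ factor of order $\kappa$), I substitute $\eta={\cal O}(\kappa)$ into
\[
\bar\alpha_1=\frac{1}{2L_F+4L_fL_y+\frac{L_fL_{yx}}{L_y\eta}},\qquad \bar\alpha_2=\frac{16T\mu_g\ell_{g,1}}{(\mu_g+\ell_{g,1})^2\,(8L_fL_y+\eta L_{yx}\tilde{C}_f^2\bar\alpha_1)}.
\]
The dominant term $2L_F$ gives $\bar\alpha_1^{-1}={\cal O}(\kappa^3)$, which is what produces the $\kappa^3/K$ transient; and with $T=\Theta(\kappa^4)$ the prefactor $16T\mu_g\ell_{g,1}/(\mu_g+\ell_{g,1})^2=\Theta(\kappa^3)$ keeps $\bar\alpha_2$ from binding below order $\kappa^{-3}$, so $\bar\alpha_2^{-1}={\cal O}(\kappa^3)$ as well. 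For the steady-state term, $D_0={\cal O}(1)$ is immediate, and the heart of the accounting is $D_1=\Theta(\kappa^5)$, coming from the product of $L_F={\cal O}(\kappa^3)$ with the second moment of the biased estimator $h_f^k$ in \eqref{eq.biased-estimator}. The choice $\alpha=\Theta(\kappa^{-2.5})$ is then exactly the AM--GM balance $\alpha=\Theta(\sqrt{D_0/D_1})$ minimizing $\tfrac{D_0}{\alpha}+D_1\alpha$, which drives the coefficient of $1/\sqrt{K}$ down to $2\sqrt{D_0D_1}=\Theta(\kappa^{2.5})$ and yields the $\kappa^{2.5}/\sqrt{K}$ term.

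I would also record why $T=\Theta(\kappa^4)$ is the right inner-loop length. The lower-level error obeys a recursion in which the $T$ inner SGD steps contract $\|y^k-y^*(x^k)\|^2$ by $(1-\mu_g\beta_k)^{T}$, while each outer $x$-step perturbs the target $y^*(x^k)$ by $L_y\alpha_k\|h_f^k\|$, injecting a drift of order $L_y^2\alpha_k^2=\Theta(\kappa^2\alpha_k^2)$. With $\beta_k=\Theta(\alpha_k/(T\mu_g))$ as in \eqref{eq.stepsize}, choosing $T=\Theta(\kappa^4)$ makes the per-outer-iteration contraction strong enough that the accumulated tracking error stays subdominant to $\|\nabla F(x^k)\|^2$, so the coupled two-sequence inequality closes with the constants above; this tracking error is also what bounds the second assertion $\EE[\|y^K-y^*(x^K)\|^2]$ in Theorem \ref{theorem1}.

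The main obstacle is the precise constant tracking rather than any new inequality: one must establish $D_1=\Theta(\kappa^5)$ (and not a looser $\kappa^6$) by carefully bounding the second moment of the Hessian-inverse Neumann estimator in \eqref{eq.inverse-estimator} --- whose bias decays geometrically in $N$ but whose variance carries $\mu_g^{-2}$ and the $N=\Theta(\kappa)$ sampling factors --- and must simultaneously check that $T=\Theta(\kappa^4)$ together with $\eta={\cal O}(\kappa)$ keeps both $\bar\alpha_2^{-1}={\cal O}(\kappa^3)$ and the lower-level tracking term subdominant. Once these scalings are pinned down, the result follows by direct substitution into the explicit bound behind Theorem \ref{theorem1}.
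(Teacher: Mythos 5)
Your proposal is correct and takes essentially the same route as the paper: the paper's proof of Proposition~\ref{prop.bilevel} is exactly the constant-tracking tail of the proof of Theorem~\ref{theorem1}, where with $\eta=\Theta(\kappa)$ one gets $\bar\alpha_1={\cal O}(\kappa^{-3})$ (your $\kappa^3/K$ transient), $\bar\alpha_2={\cal O}(T\kappa^{-3})$, $c_1={\cal O}(\kappa^9/T)$ (the lower-level noise amplification that forces $T=\Theta(\kappa^4)$, i.e.\ your tracking-error term), $c_2\tilde\sigma_f^2={\cal O}(\kappa^3)\cdot{\cal O}(\kappa^2)={\cal O}(\kappa^5)$ (your $D_1$ via $L_F$ times the second moment of $h_f^k$), plus $N={\cal O}(\kappa\log K)$ to make the Neumann-series bias $b_k^2={\cal O}(1/\sqrt{K})$. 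Your AM--GM balance $\alpha=\Theta(\sqrt{D_0/D_1})=\Theta(\kappa^{-2.5})$ yielding the $\kappa^{2.5}/\sqrt{K}$ coefficient is precisely the paper's final step, just stated more explicitly than the paper does.
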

\textbf{Discussion of Theorem \ref{theorem1}.}
Theorem \ref{theorem1} implies that the convergence rate of ALSET to the stationary point of \eqref{opt0} is ${\cal O}(K^{-0.5})$.
Since each iteration of ALSET only uses $\widetilde{\cal O}(1)$ samples (see Algorithm \ref{alg: ALSET}), the sample complexity to achieve an $\epsilon$-stationary point of \eqref{opt0} is ${\cal O}(\epsilon^{-2})$, which is on the same order of SGD's sample complexity for the single-level nonconvex problems \citep{ghadimi2013sgd}, and improves the state-of-the-art single-loop TTSA's sample complexity ${\cal O}(\epsilon^{-2.5})$ \citep{hong2020ac}. 
Compared to \citep{ji2020provably}, ALSET achieves the same sample complexity both in terms of $\epsilon$ and $\kappa$,  without using an increasing batch size. Importantly, we obtain this tighter bound without introducing additional assumptions.

\subsection{Proof sketch}
In this subsection, we highlight the key steps of the proof towards Theorem \ref{theorem1}, and highlight the differences between our analysis and the existing ones. 

For simplicity of the convergence analysis, we define the following Lyapunov function as $\mathbb{V}^k \!:=\! F(x^k)  + \frac{L_f}{L_y}\|y^k - y^*(x^k)\|^2$.
We first quantify the difference between two Lyapunov functions as
\begin{align*}\label{eq.diff-Lya}
  \mathbb{V}^{k+1}  -  \mathbb{V}^k   =  & \underbracket{ F(x^{k+1}) -  F(x^k)}_{\rm Lemma~\ref{lemma3}}  
 ~ + ~ \frac{L_f}{L_y}(\underbracket{\|y^{k+1} - y^*(x^{k+1})\|^2 - \|y^k - y^*(x^k)\|^2 }_{\rm Lemma~\ref{lemma2}}).\numberthis
\end{align*}

The difference in \eqref{eq.diff-Lya} consists of two difference terms: the first term quantifies the descent of the overall objective functions; the second term characterizes the descent of the lower-level errors.

 We will first analyze the descent of the upper-level objective in the next lemma. 
\begin{lemma}[Descent of upper level]
\label{lemma3}
Suppose Assumptions 1--3 hold. 
Define $\bar{h}_f^k:=\EE[h_f^k|x^k, y^{k+1}]$ and $\|\bar{h}_f^k-\overline{\nabla}f(x^k,y^{k+1})\|\leq b_k$. The sequence of $x^k$ generated by Algorithm \ref{alg: ALSET} satisfies 
 \begin{align*}\label{eq.lemma3}
 \EE[F(x^{k+1})]-  \EE[F(x^k)] 
\leq&- \frac{\alpha_k}{2}\EE[\|\nabla F(x^k)\|^2] - \left(\frac{\alpha_k}{2}-\frac{L_F\alpha_k^2}{2}\right)\EE[\|\bar{h}_f^k\|^2]\\
    &+ L_f^2\alpha_k\EE[\|y^{k+1}-y^*(x^k)\|^2] + \alpha_kb_k^2 + \frac{L_F\alpha_k^2}{2}\tilde\sigma_f^2\numberthis
\end{align*} 
where constants $L_f, L_F, \sigma_f^2$ are defined in Lemma \ref{lemma_lip} of the supplementary document. 
\end{lemma}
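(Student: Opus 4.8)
The plan is to read the $x$-update \eqref{eq.ALSET2} as one step of SGD on the smooth nonconvex function $F$ driven by the \emph{biased} direction $h_f^k$, and then convert the bias and variance of $h_f^k$ into the three error terms on the right-hand side of \eqref{eq.lemma3}. First I would invoke the $L_F$-Lipschitz continuity of $\nabla F$ from Proposition \ref{prop1} to write the standard descent inequality for $x^{k+1}=x^k-\alpha_k h_f^k$,
\begin{equation*}
F(x^{k+1}) \leq F(x^k) - \alpha_k\langle \nabla F(x^k), h_f^k\rangle + \frac{L_F\alpha_k^2}{2}\|h_f^k\|^2 .
\end{equation*}
Taking the conditional expectation $\EE[\,\cdot\mid x^k,y^{k+1}]$ and decomposing the second moment into squared mean plus conditional variance, I would use $\EE[h_f^k\mid x^k,y^{k+1}]=\bar h_f^k$ together with a conditional variance bound $\EE[\|h_f^k-\bar h_f^k\|^2\mid x^k,y^{k+1}]\leq \tilde\sigma_f^2$ (the variance of the estimator \eqref{eq.biased-estimator}, quantified in Lemma \ref{lemma_lip}) to reach
\begin{equation*}
\EE[F(x^{k+1})\mid x^k,y^{k+1}] \leq F(x^k) - \alpha_k\langle \nabla F(x^k), \bar h_f^k\rangle + \frac{L_F\alpha_k^2}{2}\big(\|\bar h_f^k\|^2 + \tilde\sigma_f^2\big).
\end{equation*}

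The core step is to linearize the inner product through the polarization identity $-\langle a,b\rangle = -\tfrac12\|a\|^2-\tfrac12\|b\|^2+\tfrac12\|a-b\|^2$ with $a=\nabla F(x^k)$ and $b=\bar h_f^k$. This directly produces the target terms $-\tfrac{\alpha_k}{2}\|\nabla F(x^k)\|^2$ and $-\tfrac{\alpha_k}{2}\|\bar h_f^k\|^2$, the latter combining with the smoothness contribution into the stated coefficient $-\big(\tfrac{\alpha_k}{2}-\tfrac{L_F\alpha_k^2}{2}\big)$, and it leaves a single residual $\tfrac{\alpha_k}{2}\|\nabla F(x^k)-\bar h_f^k\|^2$ to be controlled.

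To bound that residual I would use the key identification $\nabla F(x^k)=\overline{\nabla}f(x^k,y^*(x^k))$, which follows by comparing \eqref{grad-deter-2} with the definition \eqref{grad-deter-3}, insert the intermediate point $\overline{\nabla}f(x^k,y^{k+1})$, and split by the triangle inequality
\begin{equation*}
\|\nabla F(x^k)-\bar h_f^k\| \leq \|\overline{\nabla}f(x^k,y^*(x^k))-\overline{\nabla}f(x^k,y^{k+1})\| + \|\overline{\nabla}f(x^k,y^{k+1})-\bar h_f^k\| .
\end{equation*}
The first term is at most $L_f\|y^{k+1}-y^*(x^k)\|$ by the $L_f$-Lipschitz continuity of $\overline{\nabla}f(x^k,\cdot)$ in $y$ (Lemma \ref{lemma_lip}), and the second is exactly the assumed bias $b_k$. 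Squaring via $(s+t)^2\leq 2s^2+2t^2$ converts $\tfrac{\alpha_k}{2}\|\nabla F(x^k)-\bar h_f^k\|^2$ into $L_f^2\alpha_k\|y^{k+1}-y^*(x^k)\|^2+\alpha_k b_k^2$, matching the two remaining non-variance terms; taking total expectation then gives \eqref{eq.lemma3}.

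The only genuinely delicate point is this gradient-mismatch term: $\bar h_f^k$ approximates $\overline{\nabla}f$ at the \emph{current} inner iterate $y^{k+1}$, whereas the true gradient $\nabla F(x^k)$ is evaluated at the \emph{optimal} $y^*(x^k)$, so the two distinct error sources, the lower-level inaccuracy $\|y^{k+1}-y^*(x^k)\|$ and the Hessian-inverse estimation bias $b_k$, must be separated cleanly rather than lumped together. The identity $\nabla F(x^k)=\overline{\nabla}f(x^k,y^*(x^k))$ is precisely what enables this clean split, and using the Lipschitz constant $L_f$ of the \emph{composite} map $\overline{\nabla}f(x^k,\cdot)$ (not a crude bound on $\nabla f$ alone) is what keeps the coefficient of the $y$-error at the tight value $L_f^2$. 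Everything else is routine smooth-nonconvex SGD bookkeeping.
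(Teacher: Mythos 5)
Your proposal is correct and follows essentially the same route as the paper's proof: the $L_F$-smoothness descent inequality, conditional expectation with the bias--variance split $\EE[\|h_f^k\|^2\mid x^k,y^{k+1}]=\|\bar h_f^k\|^2+\EE[\|h_f^k-\bar h_f^k\|^2\mid x^k,y^{k+1}]\leq\|\bar h_f^k\|^2+\tilde\sigma_f^2$, the polarization identity on $-\langle\nabla F(x^k),\bar h_f^k\rangle$, and the decomposition of $\|\nabla F(x^k)-\bar h_f^k\|^2$ through the intermediate point $\overline{\nabla}f(x^k,y^{k+1})$ using $L_f$-Lipschitzness and the bias bound $b_k$. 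The only cosmetic discrepancy is that the variance bound $\tilde\sigma_f^2$ is established in Lemma \ref{lemm_bias_var} rather than Lemma \ref{lemma_lip}, but this does not affect the argument.
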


Lemma \ref{lemma3} implies that the descent of the upper-level objective functions depends on the error of the lower-level variable $y^k$. 
  We will next analyze the error of the lower-level variable, which is the key step to improving the existing results. 


Before we analyze the error of $y^k$, we introduce a lemma that characterizes the smoothness of $y^*(x)$ and the bounded moments of $h_f^k$. The smoothness and the bounded moments have not been explored by previous analysis such as \cite{ghadimi2018bilevel,ji2020provably,hong2020ac}, and they play an essential role in our improved analysis of $y^k$.
\begin{lemma}[Smoothness and boundedness]\label{lemmasm}
Under Assumptions 1 and 2, we have 
	\begin{equation}
   \|\nabla y^*(x_1)-\nabla y^*(x_2)\|\leq  L_{yx}\|x_1-x_2\|;~~~\EE[\|h_f^k\|^2|x^k, y^{k+1}]\leq  \tilde{C}_f^2 
\end{equation}
where $L_{yx}$ and $\tilde{C}_f^2$ depend on the constants defined in Assumptions 1-2.
\end{lemma}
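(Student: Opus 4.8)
The goal is to establish two facts in Lemma~\ref{lemmasm}: the Lipschitz continuity of the Jacobian $\nabla y^*(x)$ (i.e.\ second-order smoothness of the solution map), and the uniform second-moment bound on the biased hypergradient estimator $h_f^k$. The plan is to treat these separately, since the first is a deterministic calculus computation and the second is a stochastic moment bound.

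\textbf{Smoothness of $y^*(x)$.} First I would start from the optimality condition of the lower-level problem, $\nabla_y g(x, y^*(x)) = 0$, and apply the implicit function theorem. Differentiating in $x$ gives the closed form $\nabla y^*(x) = -\left[\nabla_{yy}^2 g(x, y^*(x))\right]^{-1} \nabla_{yx}^2 g(x, y^*(x))$. To bound $\|\nabla y^*(x_1) - \nabla y^*(x_2)\|$ I would write this as a difference of products $A_1^{-1}B_1 - A_2^{-1}B_2$ where $A_i := \nabla_{yy}^2 g(x_i, y^*(x_i))$ and $B_i := \nabla_{yx}^2 g(x_i, y^*(x_i))$, and use the standard telescoping identity
\begin{equation*}
A_1^{-1}B_1 - A_2^{-1}B_2 = A_1^{-1}(B_1 - B_2) + A_1^{-1}(A_2 - A_1)A_2^{-1}B_2.
\end{equation*}
Then I bound each factor: $\|A_i^{-1}\| \le 1/\mu_g$ by strong convexity (Assumption~2), $\|B_2\| \le \ell_{g,1}$ by Assumption~1, and the differences $\|A_1 - A_2\|$, $\|B_1 - B_2\|$ by the $\ell_{g,2}$-Lipschitzness of $\nabla^2 g$ composed with the already-established $L_y$-Lipschitzness of $y^*(\cdot)$ from Proposition~\ref{prop1}. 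Collecting the constants yields an explicit $L_{yx}$ in terms of $\mu_g, \ell_{g,1}, \ell_{g,2}, L_y$.

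\textbf{Bounded second moment of $h_f^k$.} Here I would work directly from the definition \eqref{eq.biased-estimator}. The estimator is a product of three independent stochastic pieces: $\nabla_x f$ (or $\nabla_y f$), the single Hessian-vector factor $\nabla_{xy}^2 g$, and the Neumann-series inverse estimator. I would bound the second moment of the whole product by peeling off factors, using conditional independence of the samples $\xi^k, \phi_{(0)}^k, \phi_{(1)}^k, \ldots$. The gradient factors contribute $C_f^2 = \ell_{f,0}^2 + \sigma_f^2$ via \eqref{eq.assp_grad_norm}; the single Hessian block contributes $C_g^2$; and the key quantity is the second moment of the truncated Neumann estimator $\frac{N}{\ell_{g,1}}\prod_{n=1}^{N'}(I - \frac{1}{\ell_{g,1}}\nabla_{yy}^2 g(\cdot;\phi_{(n)}))$. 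Its operator norm is controlled because each factor $I - \frac{1}{\ell_{g,1}}\nabla_{yy}^2 g$ has spectrum in $[0, 1 - \mu_g/\ell_{g,1}]$ under Assumptions~1--2, so the product is uniformly bounded in norm by roughly $N/\mu_g$; taking expectations over the randomly drawn truncation level $N'$ keeps this finite. Multiplying the bounds gives $\tilde C_f^2$ as a polynomial in $C_f, C_g, \mu_g, \ell_{g,1}$, and $N$.

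\textbf{Main obstacle.} The deterministic smoothness part is routine matrix algebra; the real care is in the moment bound for the Neumann estimator, where I must handle the product of random matrices without the expectation and the norm interacting badly. The subtlety is that while each individual factor has spectral radius bounded below $1$, I need a bound on $\EE[\|\cdot\|^2]$ of the product rather than $\|\EE[\cdot]\|$, so I cannot simply use the known bias result of \cite{ghadimi2018bilevel}; instead I would bound the norm of each factor deterministically (which holds surely, not just in expectation, since the contraction is almost sure under bounded Hessian) and then take the expectation over $N'$, which only rescales by a factor depending on $N$. Keeping the constant $\tilde C_f^2$ independent of $\epsilon$ and polynomial in $\kappa$ is what ultimately feeds the improved sample complexity, so tracking the $\kappa$-dependence of $\tilde C_f^2$ carefully is the part that matters most for the downstream results.
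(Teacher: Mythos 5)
Your treatment of the smoothness half is correct and coincides with the paper's own proof: the same splitting of $A_1^{-1}B_1-A_2^{-1}B_2$, the identity $A_1^{-1}-A_2^{-1}=A_1^{-1}(A_2-A_1)A_2^{-1}$, the bounds $\|[\nabla_{yy}^2 g]^{-1}\|\le 1/\mu_g$ and $\|\nabla_{xy}^2 g\|\le \ell_{g,1}$, and the $\ell_{g,2}$-Lipschitzness of $\nabla^2 g$ composed with the $L_y$-Lipschitzness of $y^*$, giving $L_{yx}={\cal O}(\kappa^3)$.

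The moment bound is where there is a genuine gap. Your argument hinges on the claim that each Neumann factor $I-\frac{1}{\ell_{g,1}}\nabla_{yy}^2 g(x,y;\phi_{(n)})$ is \emph{almost surely} a contraction with spectrum in $[0,1-\mu_g/\ell_{g,1}]$. That would require the sample-wise bounds $\mu_g I\preceq\nabla_{yy}^2 g(x,y;\phi)\preceq\ell_{g,1}I$, but the paper's Assumptions 1--2 constrain only the \emph{deterministic} functions $g(x,y)=\EE_\phi[g(x,y;\phi)]$, and Assumption 3 gives only unbiasedness and bounded variance of $\nabla^2 g(x,y;\phi)$. With only second-moment control, the per-factor second moment need not be below one (it is only controlled at the level $(1-\mu_g/\ell_{g,1})^2+\sigma_{g,2}^2/\ell_{g,1}^2$), so the second moment of the product can grow exponentially in $N'$ and the peeling argument collapses. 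Moreover, even if one grants the almost-sure contraction, the prefactor $N/\ell_{g,1}$ in \eqref{eq.inverse-estimator} survives: squaring and averaging $(N/\ell_{g,1})^2(1-\mu_g/\ell_{g,1})^{2N'}$ over $N'\sim U\{1,\dots,N\}$ still leaves a factor of order $N$, so --- as you yourself concede --- your $\tilde C_f^2$ is polynomial in $N$. That is not the statement of Lemma \ref{lemmasm}: there $\tilde C_f^2$ is a fixed constant, ${\cal O}(\kappa^2)$, determined by the assumption constants alone. The distinction matters downstream, because $N$ is later chosen of order $\kappa\log K$ to kill the bias $b_k$, while $\tilde C_f^2$ enters the stepsize bound $\bar\alpha_2$ in \eqref{eq.stepsize}, Lemma \ref{lemma2}, and the constants $c_1,c_2$ in the final rate; an $N$-dependent bound would contaminate all of these.

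The paper takes a different route that sidesteps the Neumann product entirely: it writes $\EE[\|h_f^k\|^2\mid{\cal F}_k']=\|\bar h_f^k\|^2+\EE[\|h_f^k-\bar h_f^k\|^2\mid{\cal F}_k']$, bounds the conditional mean by $\|\bar h_f^k\|\le\|\overline{\nabla}_x f(x^k,y^{k+1})\|+b_k$ together with the deterministic bound $\|\overline{\nabla}_x f(x,y)\|\le \ell_{f,0}(1+\ell_{g,1}/\mu_g)$, and then invokes Lemma \ref{lemm_bias_var} (imported from \cite{hong2020ac}) for \emph{both} the bias $b_k$ and the variance bound $\tilde\sigma_f^2$. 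Your stated reason for refusing to use the known results --- that you need $\EE[\|\cdot\|^2]$ rather than $\|\EE[\cdot]\|$ --- overlooks that the cited lemma supplies a variance bound in addition to the bias bound; the identity $\EE\|X\|^2=\|\EE X\|^2+\EE\|X-\EE X\|^2$ then yields exactly the second moment you need, with no dependence on $N$ and without imposing sample-wise spectral bounds that the paper never assumes.
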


Building upon Lemma \ref{lemmasm}, we establish the progress of the lower-level update.
\begin{lemma}[Error of lower level]
\label{lemma2}
Suppose that Assumptions 1--3 hold, and $y^{k+1}$ is generated by running iteration \eqref{eq.ALSET} given $x^k$. If we choose $\beta_k\leq\frac{2}{\mu_g+\ell_{g,1}}$, then $y^{k+1}$ satisfies 
\begin{subequations}
\begin{align}
    &  \EE[\|y^{k+1}\!\!-y^*(x^k)\|^2]\leq(1-\mu_g\beta_k)^T\EE[\|y^k -y^*(x^k)\|^2] + T\beta_k^2\sigma_{g,1}^2\label{eq.lemma2-2} \\ 
  &   \EE[\|y^{k+1}-y^*(x^{k+1})\|^2]\leq \Big(1 + L_fL_y\alpha_k + \frac{\eta L_{yx}\tilde{C}_f^2}{4}\alpha_k^2\Big) \EE[\|y^{k+1}\!\!-y^*(x^k)\|^2]\nonumber\\
    &\qquad\qquad\qquad\qquad\qquad+ \Big(L_y^2 + \frac{L_y}{4L_f\alpha_k} + \frac{L_{yx}}{4\eta}\Big)\alpha_k^2\EE[\|\bar{h}_f^k\|^2] + \Big(L_y^2+\frac{L_{yx}}{4\eta}\Big)\alpha_k^2\tilde\sigma_f^2    \label{eq.lemma2-1}
\end{align}
\end{subequations}
where $\eta>0$ is a fixed constant that will be chosen to obtain the tighter complexity bound. 
\end{lemma}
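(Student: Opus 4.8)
The plan is to prove the two recursions \eqref{eq.lemma2-2} and \eqref{eq.lemma2-1} separately. For the inner bound \eqref{eq.lemma2-2}, I would treat the inner loop \eqref{eq.ALSET3} as ordinary SGD applied to the \emph{fixed} function $g(x^k,\cdot)$, which is $\mu_g$-strongly convex (Assumption 2) and $\ell_{g,1}$-smooth (Assumption 1), with $x^k$ frozen. For a single inner step I would expand $\|y^{k,t+1}-y^*(x^k)\|^2 = \|y^{k,t}-y^*(x^k)\|^2 - 2\beta_k\langle h_g^{k,t}, y^{k,t}-y^*(x^k)\rangle + \beta_k^2\|h_g^{k,t}\|^2$, take the conditional expectation using the unbiasedness and the variance bound $\sigma_{g,1}^2$ from Assumption 3, and invoke the standard strong-convexity/smoothness co-coercivity inequality $\langle \nabla_y g(x^k,y), y-y^*(x^k)\rangle \ge \tfrac{\mu_g\ell_{g,1}}{\mu_g+\ell_{g,1}}\|y-y^*(x^k)\|^2 + \tfrac{1}{\mu_g+\ell_{g,1}}\|\nabla_y g(x^k,y)\|^2$. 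The stepsize restriction $\beta_k\le \tfrac{2}{\mu_g+\ell_{g,1}}$ forces the coefficient of $\|\nabla_y g\|^2$ to be nonpositive, so it can be dropped, leaving the one-step contraction $\EE[\|y^{k,t+1}-y^*(x^k)\|^2\,|\,\cdot]\le (1-\mu_g\beta_k)\|y^{k,t}-y^*(x^k)\|^2 + \beta_k^2\sigma_{g,1}^2$ after using $\tfrac{2\mu_g\ell_{g,1}}{\mu_g+\ell_{g,1}}\ge \mu_g$. Unrolling over $t=0,\dots,T-1$ from $y^{k,0}=y^k$ and bounding the geometric sum of variance terms by $\sum_{t=0}^{T-1}(1-\mu_g\beta_k)^t\le T$ yields \eqref{eq.lemma2-2}.

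For the second bound \eqref{eq.lemma2-1}, the point is to relate the error measured at the \emph{new} upper iterate $x^{k+1}=x^k-\alpha_k h_f^k$ back to the error at $x^k$ that \eqref{eq.lemma2-2} controls. I would insert $y^*(x^k)$ and expand $\|y^{k+1}-y^*(x^{k+1})\|^2 = \|a\|^2 + 2\langle a, y^*(x^k)-y^*(x^{k+1})\rangle + \|y^*(x^k)-y^*(x^{k+1})\|^2$, where $a:=y^{k+1}-y^*(x^k)$ is $\mathcal{F}_k$-measurable (it is fixed once $x^k,y^{k+1}$ are given). The last term is immediately bounded by $L_y$-Lipschitzness of $y^*$ (Proposition \ref{prop1}): $\|y^*(x^k)-y^*(x^{k+1})\|^2\le L_y^2\alpha_k^2\|h_f^k\|^2$, whose conditional expectation is $L_y^2\alpha_k^2(\|\bar h_f^k\|^2+\tilde\sigma_f^2)$, producing the $L_y^2\alpha_k^2$ contributions to both the $\|\bar h_f^k\|^2$ and the $\tilde\sigma_f^2$ coefficients.

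The crux is the cross term, and it is exactly where the \emph{hidden smoothness} of Lemma \ref{lemmasm} enters. Using that $\nabla y^*$ is $L_{yx}$-Lipschitz, I would write the second-order expansion $y^*(x^{k+1})-y^*(x^k) = \nabla y^*(x^k)(x^{k+1}-x^k) + R$ with $\|R\|\le \tfrac{L_{yx}}{2}\|x^{k+1}-x^k\|^2 = \tfrac{L_{yx}}{2}\alpha_k^2\|h_f^k\|^2$, and substitute $x^{k+1}-x^k=-\alpha_k h_f^k$. Taking the conditional expectation, the \emph{linear} part contributes $2\alpha_k\langle a, \nabla y^*(x^k)\bar h_f^k\rangle$ because $a$ and $\nabla y^*(x^k)$ are $\mathcal{F}_k$-measurable and $\EE[h_f^k\,|\,\mathcal{F}_k]=\bar h_f^k$; bounding by $\|\nabla y^*(x^k)\|\le L_y$ and applying Young's inequality splits this into an $L_fL_y\alpha_k\|a\|^2$ term and an $O(\alpha_k)$ multiple of $\|\bar h_f^k\|^2$ — crucially, the \emph{mean} direction only, with no variance at this leading order. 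The quadratic remainder $-2\langle a, R\rangle$ is genuinely higher order: bounding $\|R\|$ by $\tfrac{L_{yx}}{2}\alpha_k^2\|h_f^k\|^2$ and applying Young's with the free parameter $\eta$ distributes it into the $\tfrac{\eta L_{yx}\tilde C_f^2}{4}\alpha_k^2\|a\|^2$ term (using $\EE[\|h_f^k\|^2|\mathcal{F}_k]\le \tilde C_f^2$ from Lemma \ref{lemmasm}) and an $O(\alpha_k^2)$ multiple of $\|h_f^k\|^2=\|\bar h_f^k\|^2+\tilde\sigma_f^2$, giving the $\tfrac{L_{yx}}{4\eta}\alpha_k^2$ contributions. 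Collecting the three terms and matching coefficients gives \eqref{eq.lemma2-1}; the free $\eta$ is deliberately left for later optimization in Theorem \ref{theorem1}.

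The main obstacle is the cross-term analysis, and specifically exploiting the smoothness of $y^*$ correctly in the presence of the \emph{biased} direction $h_f^k$. The whole improvement over prior work rests on the observation that, after linearizing $y^*$ and conditioning, the $O(\alpha_k)$ contribution of the $x$-update to the lower-level error depends only on the conditional mean $\bar h_f^k$ (which is later absorbed against $\|\nabla F(x^k)\|^2$ in Lemma \ref{lemma3}), whereas the stochastic noise $\tilde\sigma_f^2$ enters only at order $\alpha_k^2$ and therefore does not accumulate. Without the second-order control of $y^*$ one would be forced to retain the full second moment $\|h_f^k\|^2$ at order $\alpha_k$, injecting $O(\alpha_k)$ variance into the error recursion and necessitating either decreasing stepsize ratios or growing batches, which is precisely the source of the suboptimal $\mathcal{O}(\epsilon^{-2.5})$ rates. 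The delicate part of the bookkeeping is thus ensuring the remainder $R$ is bounded in expectation purely through the second-moment bound $\tilde C_f^2$, so that its entire effect is confined to the $\alpha_k^2$ order.
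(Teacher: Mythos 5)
Your proposal is correct and follows essentially the same route as the paper's proof: the same three-term decomposition of $\|y^{k+1}-y^*(x^{k+1})\|^2$, the same SGD contraction for the inner loop via the strong-convexity/smoothness co-coercivity inequality (with $\tfrac{2\mu_g\ell_{g,1}}{\mu_g+\ell_{g,1}}\geq\mu_g$ absorbed into the stated rate), and the same splitting of the cross term into a linearized part---handled by conditioning to isolate $\bar h_f^k$ and Young's inequality with $\gamma_k=L_fL_y\alpha_k$---plus a second-order remainder controlled by the $L_{yx}$-smoothness of $y^*$, the moment bound $\tilde C_f^2$, and the free parameter $\eta$. No gaps; the bookkeeping of where $\|\bar h_f^k\|^2$ versus $\tilde\sigma_f^2$ enters matches the paper exactly.
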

Plugging \eqref{eq.lemma2-2} into \eqref{eq.lemma2-1}, and selecting stepsizes $\alpha_k, \beta_k$ properly, we can show that 
\begin{equation}\label{eq.lemma2-3}
    \EE[\|y^{k+1}-y^*(x^{k+1})\|^2] \leq (1-\delta_1)\EE[\|y^k -y^*(x^k)\|^2]+\delta_2\EE[\|\bar{h}_f^k\|^2] + \delta_3T\sigma_{g,1}^2+\delta_4\tilde\sigma_f^2  
\end{equation}
where the constants are $\delta_1\in [0,1), \delta_2={\cal O}(\alpha_k), \delta_3={\cal O}(\beta_k^2), \delta_4={\cal O}(\alpha_k^2)$. In our tighter analysis, the term $\EE[\|\bar{h}_f^k\|^2]$ will be canceled, so choosing $\alpha_k={\cal O}(k^{-0.5})$ and $\beta_k={\cal O}(k^{-0.5})$ makes the variance terms decrease at the same order as SGD for stochastic non-nested problems. 

As a comparison, the progress of the lower-level problem in \cite{hong2020ac,ji2020provably} can be summarized as 
\begin{equation}\label{eq.lemma2-4}
    \EE[\|y^{k+1}-y^*(x^{k+1})\|^2]\leq (1-\delta_1)\EE[\|y^k -y^*(x^k)\|^2]+ \delta_5 \sigma^2   
\end{equation}
where $\sigma^2$ is some variance term, and the constant is $\delta_5={\cal O}(\beta_k^2+\alpha_k^2/\beta_k)$ or ${\cal O}(1/B_k)$ with $B_k$ being the batch size at iteration $k$. 
To balance the two terms in $\delta_5={\cal O}(\beta_k^2+\alpha_k^2/\beta_k)$, two timescales of stepsizes are needed; and to reduce $\delta_5={\cal O}(1/B_k)$, a growing batch size $B_k={\cal O}(k)$ is needed.

\vspace{-0.2cm}
\section{Applications to Stochastic Min-Max and Compositional Problems}\label{sec.special}
\vspace{-0.2cm}
Building upon the general results for the bilevel problems in Section \ref{sec.scg}, this section will identify special features of the stochastic min-max and stochastic compositional problems, and customize the general results to yield state-of-the-art convergence results for two special nested problems.

\subsection{Stochastic min-max problems}
We first apply our results to the stochastic min-max problem \eqref{opt0-1}.
In this special case, the lower-level function is $g(x,y;\phi)=-f(x,y;\xi)$, and the bilevel gradient in \eqref{grad-deter-2} reduces to
\begin{equation}\label{grad-min-max}
	\nabla F(x):=\nabla_xf\big(x, y^*(x)\big)+\nabla_xy^*(x)^{\top}\nabla_y f\big(x, y^*(x)\big)=\nabla_xf\big(x, y^*(x)\big) 
\end{equation}
where the second equality follows from the optimality condition of the lower-level problem, i.e., $\nabla_yf(x,y^*(x))=0$. 
Similar to Section \ref{sec.scg}, we again approximate $\nabla F(x)$ on a certain vector $y$ in place of $y^*(x)$.
\begin{wrapfigure}{R}{0.57\textwidth}
\vspace{-0.2cm}
 \begin{minipage}{0.57\textwidth}
 \small
\begin{algorithm}[H]
\caption{ALSET for min-max problems}\label{alg: ALSET.min-max}
    \begin{algorithmic}[1]
    \State{\textbf{initialize:} $x^0, y^0$,  stepsizes $\{\alpha_k, \beta_k\}$.} 
    \For{$k=0,1,\ldots, K-1$}
        \State{set $y^{k,0}=y^k$}
        \For{$t=0,1,\ldots,T-1$}
            \State{update $y^{k,t+1}=y^{k,t}-\beta_k\nabla_yf(x^k,y^{k,t};\xi_1^k)$}
        \EndFor
        \State{set $y^{k+1}=y^{k,T}$}
        \State{update $x^{k+1}=x^k-\alpha_k\nabla_xf(x^k,y^{k+1};\xi_2^k)$}
    \EndFor
    \end{algorithmic}
\end{algorithm}
 \end{minipage}
 \vspace{-0.4cm}
\end{wrapfigure}
Therefore, the alternating stochastic gradients for this special case are given by
	\begin{equation}
    h_g^{k,t} =-\nabla_yf(x^k,y^{k,t};\xi_1^k)~~~{\rm and}~~~    h_f^k = \nabla_xf(x^k,y^{k+1};\xi_2^k).
\end{equation}
Plugging the stochastic gradient into the general update \eqref{eq.ALSET}, we summarize the update in Algorithm \ref{alg: ALSET.min-max}.
When the number of $y$-update is $T=1$, the ALSET algorithm reduces to the SGDA method in \cite{lin2020gradient}.  

\begin{proposition}\label{prop.min-max}
Under the same assumptions and the choice of parameters as those in Theorem \ref{theorem1}, if we select $\alpha=\Theta(\kappa^{-1})$, $T=\Theta(\kappa)$, $\eta=1$ in \eqref{eq.stepsize}, we have  
\begin{equation}
   \frac{1}{K}\sum_{k=0}^{K-1}\EE[\|\nabla F(x^k)\|^2]={\cal O}\left(\frac{\kappa^2}{K} + \frac{\kappa}{\sqrt{K}}\right).
\end{equation}
\end{proposition}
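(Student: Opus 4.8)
The statement is a corollary of Theorem \ref{theorem1}: the plan is to instantiate the generic constants of Theorem \ref{theorem1} for the min-max specialization $g=-f$, substitute the prescribed choices $\alpha=\Theta(\kappa^{-1})$, $T=\Theta(\kappa)$, $\eta=1$ into the step sizes \eqref{eq.stepsize}, and read off the $\kappa$-dependence. The one structural feature I would exploit is that here the upper-level direction is simply $h_f^k=\nabla_x f(x^k,y^{k+1};\xi_2^k)$, carrying \emph{no} Hessian-inverse estimator. Hence $\bar{h}_f^k=\nabla_x f(x^k,y^{k+1})$ is already unbiased, and since $\nabla F(x)=\nabla_x f(x,y^*(x))$ by \eqref{grad-min-max}, the gap of $\bar{h}_f^k$ to the true gradient is the plain smoothness gap $\|\bar{h}_f^k-\nabla F(x^k)\|\le \ell_{f,1}\|y^{k+1}-y^*(x^k)\|$. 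This plays the role of the bias term of Lemma \ref{lemma3} but is already of the form absorbed by $L_f^2\alpha_k\|y^{k+1}-y^*(x^k)\|^2$, so no new error source arises and the cancellation of $\EE[\|\bar{h}_f^k\|^2]$ leading to \eqref{eq.lemma2-3} is inherited verbatim.

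First I would record how the constants of Theorem \ref{theorem1} scale under $g=-f$. From Assumptions 1--2 and $\nabla y^*(x)=-[\nabla_{yy}^2 f]^{-1}\nabla_{yx}^2 f$ one gets $L_y=\Theta(\kappa)$ and $L_F=\Theta(\kappa)$, while $L_{yx}$ of Lemma \ref{lemmasm} is a fixed power of $\kappa$. The two decisive differences from the general bilevel case (Proposition \ref{prop.bilevel}) both stem from $h_f^k$ being a plain stochastic gradient: the smoothness constant reduces to $L_f=\ell_{f,1}=\Theta(1)$ (rather than the $\Theta(\kappa)$ incurred when $\overline{\nabla}f$ carries the Hessian-inverse term), and the second-moment bound of Lemma \ref{lemmasm} becomes $\tilde{C}_f^2\le C_f^2=\ell_{f,0}^2+\sigma_f^2=\Theta(1)$ (rather than $\Theta(\kappa^2)$). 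This is precisely why $\eta=1$ (instead of $\eta=\mathcal{O}(\kappa)$) is the correct balancing choice here.

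Next I would substitute these scalings into $\bar\alpha_1$ and $\bar\alpha_2$. One checks that the dominant contribution to $\bar\alpha_1^{-1}=2L_F+4L_fL_y+\frac{L_fL_{yx}}{L_y\eta}$ is $\Theta(\kappa^2)$, so $\bar\alpha_1=\Theta(\kappa^{-2})$; and using $\frac{\mu_g\ell_{g,1}}{(\mu_g+\ell_{g,1})^2}=\Theta(\kappa^{-1})$, $\tilde{C}_f^2=\Theta(1)$, $T=\Theta(\kappa)$ one finds $\bar\alpha_2=\Omega(\kappa^{-2})$, so the binding value is $\min\{\bar\alpha_1,\bar\alpha_2\}=\Theta(\kappa^{-2})$. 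With $\alpha=\Theta(\kappa^{-1})$ the effective step is $\alpha_k=\min\{\Theta(\kappa^{-2}),\,\Theta(\kappa^{-1})K^{-1/2}\}$, and telescoping the Lyapunov bound of Theorem \ref{theorem1} divides $\mathbb{V}^0$ by $\alpha_K K$: the regime $K\le\kappa^2$ (where $\alpha_K=\Theta(\kappa^{-2})$) yields the transient $\Theta(\kappa^2/K)$, the regime $K\ge\kappa^2$ (where $\alpha_K=\Theta(\kappa^{-1})K^{-1/2}$) yields $\Theta(\kappa/\sqrt{K})$, and together they give the claimed $\mathcal{O}(\kappa^2/K+\kappa/\sqrt{K})$. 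I would close by checking the variance contributions are dominated: the upper-level term contributes $\Theta(L_F\alpha_K)\tilde\sigma_f^2=\Theta(1/\sqrt{K})$, and the lower-level $\sigma_{g,1}^2$ and $\tilde\sigma_f^2$ terms of \eqref{eq.lemma2-3}, under the prescribed ratio $\beta_k=\Theta(\kappa)\alpha_k$, are $\mathcal{O}(\kappa/\sqrt{K})$ or smaller.

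The main obstacle is the $\kappa$-power bookkeeping: correctly tracking $L_{yx}$, verifying that $\min\{\bar\alpha_1,\bar\alpha_2\}=\Theta(\kappa^{-2})$ rather than a smaller power (which is what pins the transient term at exactly $\kappa^2/K$), and confirming $\beta_k\le \frac{2}{\mu_g+\ell_{g,1}}$ throughout. Conceptually the only point requiring care is that the plain-gradient estimator genuinely fits the template of Theorem \ref{theorem1} with bias controlled by the lower-level error; everything past that is direct substitution into the already-established bound.
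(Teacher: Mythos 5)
Your proposal is correct and follows essentially the same route as the paper: verify that in the min-max specialization $b_k=0$, $L_f=\ell_{f,1}=\Theta(1)$, $L_y=\Theta(\kappa)$, $L_F=\Theta(\kappa)$, $L_{yx}=\Theta(\kappa^3)$, $\tilde{C}_f^2=\Theta(1)$, then substitute into Theorem \ref{theorem1} with $\eta=1$, $T=\Theta(\kappa)$, $\alpha=\Theta(\kappa^{-1})$, obtaining $\min\{\bar\alpha_1,\bar\alpha_2\}=\Theta(\kappa^{-2})$ and the bound ${\cal O}(\kappa^2/K+\kappa/\sqrt{K})$, exactly as in the paper's appendix. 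The only blemish is your parenthetical that the general bilevel case incurs $L_f=\Theta(\kappa)$ — Lemma \ref{lemma_lip} gives $L_f={\cal O}(\kappa^2)$ there — but this side remark plays no role in your argument.
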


Proposition \ref{prop.min-max} implies that for the min-max problem, the convergence rate of ALSET to the stationary point of $F(x):=\max_{y\in\mathbb{R}^{d'}}~\mathbb{E}_{\xi}\left[f(x,y;\xi)\right]$ is ${\cal O}(K^{-0.5})$.
Since each iteration of ALSET only uses ${\cal O}(1)$ samples (see Algorithm \ref{alg: ALSET.min-max}), the sample complexity to achieve an $\epsilon$-stationary point of \eqref{opt0-1} is ${\cal O}(\epsilon^{-2})$. 
Comparing with the results in \cite{lin2020gradient}, we achieve the same sample complexity without an increasing batch size ${\cal O}(\epsilon^{-1})$, and improve their sample complexity ${\cal O}(\epsilon^{-2.5})$ under a fixed batch size. However, it is also worth mentioning that compared with \cite{lin2020gradient}, our analysis requires the additional Lipschitz continuity assumption of $f(x,y)$, which inherits from the analysis for the general bilevel problem. 
 

\subsection{Stochastic compositional problems}\label{sec.comp}
In this section, we apply our results to the stochastic compositional problem \eqref{opt0-2}.
In this special case, the upper-level function is $f(x,y;\xi):=f(y;\xi)$, and the lower-level function is $g(x,y;\phi)=\|y-h(x;\phi)\|^2$, and the bilevel gradient in \eqref{grad-deter-2} reduces to
\begin{align}\label{grad-comp}
	\nabla F(x):&=\nabla_xf\big(x, y^*(x)\big)-\nabla_{xy}^2g(x, y^*(x))\!\left[\nabla_{yy}^2 g(x, y^*(x))\right]^{-1}\nabla_y f(x, y^*(x))\nonumber\\
	&=\nabla h(x;\phi)^{\top}\nabla_y f(y^*(x))
\end{align}
where we use the fact that $\nabla_{yy}g(x,y;\phi)=\mathbf{I}_{d'\times d'}, \nabla_{xy}g(x,y;\phi)=-\nabla h(x;\phi)^{\top}$. 
Similar to Section \ref{sec.scg}, we again evaluate $\nabla F(x)$ on a certain vector $y$ in place of $y^*(x)$.
Therefore, the alternating stochastic gradients $h_f^k, h_g^{k,t}$ for this special case are much simpler in this case, given by
\begin{align}
    h_g^{k,t}=y^{k,t}-h(x^k;\phi^{k,t})~~~{\rm and}~~~ 
    h_f^k=\nabla h(x^k;\phi^k)\nabla f(y^{k+1};\xi^k).
\end{align}
Plugging the stochastic gradient into the general update \eqref{eq.ALSET}, we summarize the update in Algorithm \ref{alg: ALSET.comp}.
When $T=1$, the ALSET algorithm reduces to SCGD proposed in \cite{wang2017mp}. 

\begin{wrapfigure}{R}{0.58\textwidth}
\vspace{-0.6cm}
 \begin{minipage}{0.58\textwidth}
 \small
\begin{algorithm}[H]
\caption{ALSET for compositional problems}\label{alg: ALSET.comp}
    \begin{algorithmic}[1]
    \State{\textbf{initialize:} $x^0, y^0$,  stepsizes $\{\alpha_k, \beta_k\}$.} 
    \For{$k=0,1,\ldots, K-1$}
            \State{update $y^{k+1}=y^k-\beta_k(y^k - g(x^k;\phi^k))$}
        \State{update $x^{k+1}=x^k-\alpha_k\nabla f(y^{k+1};\xi^k)\nabla g(x^k;\phi^k)$}
    \EndFor
    \end{algorithmic}
\end{algorithm}
 \end{minipage}
 \vspace{-0.6cm}
 \end{wrapfigure}

In the supplementary document, we have verified that the standard assumptions of stochastic compositional optimization in \cite{wang2017mp,wang2017jmlr,ghadimi2020jopt,zhang2019nips,chen2020scsc} are sufficient for Assumptions 1--3 to hold. 
 
\begin{proposition}\label{prop.comp}
Under the same assumptions and the choice of parameters as those in Theorem \ref{theorem1}, select $T=1,\alpha=1, \eta=\frac{1}{L_{yx}}$ in \eqref{eq.stepsize}, and then it holds
	\begin{equation}\label{eq.theorem3}
   \frac{1}{K} \sum_{k=1}^K\EE\left[\left\|\nabla F(x^k)\right\|^2\right] = {\cal O}\Big(\frac{1}{\sqrt{K}}\Big).
\end{equation}
\end{proposition}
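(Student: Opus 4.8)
The plan is to obtain Proposition \ref{prop.comp} as a direct specialization of Theorem \ref{theorem1}, exactly in the spirit of Propositions \ref{prop.bilevel} and \ref{prop.min-max}: once two structural features of the compositional problem are isolated, the rate follows by substituting $T=1$, $\alpha=1$, $\eta=1/L_{yx}$ into the general bound \eqref{eq.theorem1}. I would first record the special structure of the lower level. Since $g(x,y;\phi)=\|y-h(x;\phi)\|^2$ has $\nabla_{yy}^2 g=\mathbf{I}$, the strong-convexity and smoothness moduli coincide, $\mu_g=\ell_{g,1}$, so the condition number is $\kappa=\ell_{g,1}/\mu_g=1$. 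Consequently every $\kappa$-dependent factor in Theorem \ref{theorem1} and its proof collapses to a constant, which is precisely why a single inner step $T=1$ suffices here (as opposed to $T=\Theta(\kappa^4)$ in the general bilevel case and $T=\Theta(\kappa)$ in the min-max case).

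Second, I would observe that the estimation bias vanishes. Because $\nabla_{yy}^2 g=\mathbf{I}$ is exactly invertible, the Neumann-series estimator \eqref{eq.inverse-estimator} is unnecessary and the update direction reduces to $h_f^k=\nabla h(x^k;\phi^k)\nabla f(y^{k+1};\xi^k)$. Taking the conditional expectation and using independence of $\phi^k$ and $\xi^k$ gives $\bar h_f^k=\overline{\nabla}f(x^k,y^{k+1})$, so the bias bound in Lemma \ref{lemma3} holds with $b_k=0$; this removes the $\alpha_k b_k^2$ term from the upper-level descent. These two facts, $\kappa=1$ and $b_k=0$, are the entire source of the improvement over the general result.

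With these in hand, I would verify that the stated parameters yield admissible stepsizes. The choice $\eta=1/L_{yx}$ normalizes the cross term so that $\eta L_{yx}=1$, which simplifies both $\bar\alpha_2$ and $\beta_k$ in \eqref{eq.stepsize}; combined with $\kappa=1$ and $T=1$, the constants $\bar\alpha_1,\bar\alpha_2$ are $\Theta(1)$. Hence $\alpha_k=\min\{\bar\alpha_1,\bar\alpha_2,1/\sqrt{K}\}=\Theta(1/\sqrt{K})$ for large $K$, and the induced $\beta_k=\Theta(\alpha_k)$ satisfies $\beta_k\le 2/(\mu_g+\ell_{g,1})$ as required by Lemma \ref{lemma2}. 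Plugging these into the master bound \eqref{eq.theorem1}, where all $\kappa$ factors are now one and the bias contribution is zero, immediately gives $\frac{1}{K}\sum_{k}\EE[\|\nabla F(x^k)\|^2]=\mathcal{O}(1/\sqrt{K})$, which is \eqref{eq.theorem3}.

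The main obstacle I anticipate is not the specialization itself, which is essentially bookkeeping on top of Theorem \ref{theorem1}, but the verification — deferred to the supplement — that the classical compositional-optimization assumptions of \cite{wang2017mp,ghadimi2020jopt} imply Assumptions 1--3 with these specific constants, and in particular that the inner-smoothness constant $L_{yx}$ governing $\nabla y^*$ (from Lemma \ref{lemmasm}) is finite so that the choice $\eta=1/L_{yx}$ is well-defined. Once that translation between the two assumption frameworks is established, no further analysis is needed and the rate follows directly.
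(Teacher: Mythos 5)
Your proposal is correct and follows essentially the same route as the paper: it specializes Theorem \ref{theorem1} with $T=1$, $\alpha=1$, $\eta=1/L_{yx}=1/\ell_{h,1}$, exploiting exactly the two structural facts the paper uses — that $\nabla_{yy}^2 g = \mathbf{I}$ forces $\kappa=1$ (so all $\kappa$-dependent constants, including $\bar\alpha_1,\bar\alpha_2,c_1,c_2$, become $\Theta(1)$), and that $h_f^k$ is an unbiased estimate of $\overline{\nabla}f(x^k,y^{k+1})$ so $b_k=0$ and the Neumann-series estimator is unneeded. The assumption-translation step you defer is precisely what the paper's supplementary "verifying lemmas" section (its Lemma \ref{lemma_lip_comp}) carries out, deriving $L_f=\ell_{h,0}\ell_{f,1}$, $L_y=\ell_{h,0}$, $L_F=\ell_{h,0}^2\ell_{f,1}+\ell_{f,0}\ell_{h,1}$, $L_{yx}=\ell_{h,1}$, and the variance constants from the standard compositional assumptions.
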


Since each iteration of ALSET only uses ${\cal O}(1)$ samples (see Algorithm \ref{alg: ALSET.comp}), Proposition \ref{prop.comp} implies that the sample complexity to achieve an $\epsilon$-stationary point of \eqref{opt0-2} is ${\cal O}(\epsilon^{-2})$. 
Comparing with the results of the SCGD method in \cite{wang2017mp}, our result improves the sample complexity ${\cal O}(\epsilon^{-4})$ under a fixed batch size. Importantly, our analysis does not introduce additional assumption compared to \cite{wang2017mp}.

\section{Applications to Single-Timescale Actor-Critic Method}\label{sec.ac}
In this section, we apply our tighter analysis to the actor-critic (AC) method with linear value function approximation \cite{konda1999actor}, which can be viewed as a special case of the stochastic bilevel algorithm. 

Consider a Markov decision process described by $\mathcal{M}=\{ \mathcal{S}, \mathcal{A}, \mathcal{P}, R, \gamma \}$, where $\mathcal{S}$ is the state space, $\mathcal{A}$ is the action space, $\mathcal{P}(s'|s,a)$ is the probability of transitioning to $s'\in \mathcal{S}$ given state $s \in \mathcal{S}$ and action $a \in \mathcal{A}$, and $R(s,a,s')$ is the reward associated with $(s,a,s')$, and $\gamma \in [0,1)$ is a discount factor. 
For a policy $\pi_\theta$, define the value function $V_{\pi_\theta}(s)$ that satisfies the Bellman equation \citep{SuttonRL} 
\begin{equation}
V_{\pi_\theta}(s)=   \mathbb{E}_{ a \sim \pi_\theta(\cdot|s),\, s' \sim \mathcal{P}(\cdot|s,a)} \left[ r(s,a,s') + \gamma V_{\pi_\theta}(s')\right].
\end{equation}
Given the state feature mapping $\phi(\cdot):\mathcal{S}\xrightarrow[]{}\mathbb{R}^{d_y}$, we approximate the value function linearly as $V_{\pi_\theta}(s) \approx \hat{V}_y(s):=\phi(s)^\top y$, where $y \in \mathbb{R}^{d_y}$ is the critic parameter.   
The task of finding the best $y$ such that $V_{\pi_\theta}(s) \approx \hat{V}_y(s)$ is usually addressed by TD learning \citep{sutton1988td}. 

Defining the stationary distribution induced by the policy parameter $\theta_k$ as $\mu_{\theta_k}$ and the $k$th transition as $\xi_k \coloneqq (s_k,a_k,s_{k+1})$, which is sampled from $s_k\sim \mu_{\theta_k}, a \sim \pi_{\theta_k}, s_{k+1}\sim \mathcal{P}$, the TD-error is
\begin{equation}
    \hat{\delta}(\xi_k,y_k) \coloneqq r(s_k,a_k,s_{k+1}) + \gamma \phi(s_{k+1})^\top y_k - \phi(s_k)^\top y_k
\end{equation}
and the critic gradient $h_g(\xi_k,y_k) := \hat{\delta}(\xi_k,y_k) \nabla \hat{V}_{y_k}(s_k)$.
We update the parameter $y$ via
\begin{equation}\label{eq:td-update}
    y_{k+1} = \Pi_{R_y}\big(y_k + \beta_k  h_g(\xi_k,y_k)\big),
\end{equation}
where $\beta_k$ is the critic stepsize, and $\Pi_{R_y}$ is the projection to control the norm of the gradient. A pre-defined constant $R_y$ will be specified in the supplementary document. 

The goal of policy optimization is to solve $\max_{\theta \in \mathbb{R}^{d}} F(\theta)$ with $F(\theta) \coloneqq \E_{s \sim \eta} [V_{\pi_\theta}(s)]$, where $\eta$ is the initial distribution. 
Leveraging the value function approximation and the policy gradient theorem \cite{SuttonPG}, we have the policy gradient $ h_f(\xi,\theta, y):= \hat{\delta}(\xi,y)\psi_\theta(s,a)$, 
which gives the policy update 
\begin{equation}\label{eq:pg-update}
    \theta_{k+1}  = \theta_k + \alpha_k h_f(\xi_k',\theta_k, y_{k+1}),
\end{equation}
where $\alpha_k$ is the stepsize and $\psi_\theta(s,a):=\nabla \log\pi_\theta(a|s)$. Note that the sample $\xi_k' \coloneqq (s_k',a_k',s_{k+1}')$ used in \eqref{eq:pg-update} is independent from $\xi_k$ in \eqref{eq:td-update}. Specifically, $\xi_k'$ is sampled from $s_k'\sim d_{\theta_k}, a_k' \sim \pi_{\theta_k}, s_{k+1}'\sim \mathcal{P}$ with $d_{\theta_k}$ being the discounted state action visitation measure under $\theta_k$.

The alternating AC update \eqref{eq:td-update}-\eqref{eq:pg-update} is a special case of ALSET, where the critic update is the lower-level update, and the actor update is the upper-level update.

	
Due to space limitation, we will directly present the results of the alternating AC next, and defer presentation of the proof and the corresponding assumptions, which are the counterparts of Assumptions 1--3 in the context of AC, to the supplementary document. 

\begin{customthm}{2}[Actor-critic]\label{theorem2}
Under the some regularity conditions that are specified in the supplementary document, selecting step size $\alpha_k = \alpha={\cal O}(\frac{1}{\sqrt{K}})$, $\beta_k = \beta={\cal O}(\frac{1}{\sqrt{K}})$, it holds 
 \begin{equation}\label{eq:theorem:async-tts-actor-iid}
 \frac{1}{K}\sum_{k=1}^{K}\E\|\nabla F(\theta_k)\|_2^2 \!=\!
    \mathcal{O}\left(\frac{1}{\sqrt{K}}\right)+ \epsilon_{\mathrm{app}}
\end{equation}
where $\epsilon_{\mathrm{app}}$, defined in the supplementary document, captures the richness of the linear function class.
\end{customthm}

\textbf{Both sides of Theorem \ref{theorem2}.}
As an application of our tighter analysis, Theorem \ref{theorem2} establishes for the first time that the sample complexity of the single-loop alternating actor-critic method is ${\cal O}(\epsilon^{-2})$. 
On the positive side, this new result improves the previous complexity ${\cal O}(\epsilon^{-2.5})$ for the single-loop AC \cite{wu2020finite}, and ${\cal O}(\epsilon^{-2}\log\epsilon^{-1})$ for the nested-loop AC \cite{xu2020improving}, and matches ${\cal O}(\epsilon^{-2})$ for a recently developed AC with an exact critic oracle \cite{fu2020single}. In addition to using two independent samples, one limitation of our result is that inheriting from the analysis for the general bilevel case, our analysis of AC requires the smoothness of the critic fixed-point $y^*(\theta)$. As shown in the supplementary document, this implicitly requires the additional smooth assumption on the stationary distribution $\mu_{\theta}$. The removal of this  assumption and the extension to Markovian sampling are left for future research.




{\small

}

 
\appendix

 

\section{Proof for stochastic bilevel problem}

\subsection{Auxiliary Lemmas}
Throughout the proof, we use ${\cal F}_{k,t}=\sigma\{y^0, x^0, \ldots, y^k, x^k, y^{k,1},\ldots, y^{k,t}\}$, ${\cal F}_k'=\sigma\{y^0, x^0,\ldots, y^{k+1}\}$, 
where $\sigma\{\cdot\}$ denotes the $\sigma$-algebra generated by the random variables.

We first present some results that will be used frequently in the proof. 
\begin{proposition}[Restatement of Proposition \ref{prop1}]
Under Assumptions 1--3, we have the gradients
	\begin{align}
 \nabla F(x)=\nabla_xf(x, y^*(x))-\nabla_{xy}^2g(x, y^*(x))\!\left[\nabla_{yy}^2 g(x, y^*(x))\right]^{-1}\nabla_y f(x, y^*(x)).
\end{align}
\end{proposition}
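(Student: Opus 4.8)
The plan is to write $F(x)=f(x,y^*(x))$ with the deterministic $f(x,y)=\mathbb{E}_\xi[f(x,y;\xi)]$, differentiate through the chain rule, and obtain the Jacobian of the implicitly-defined solution map $y^*(\cdot)$ by the implicit function theorem applied to the lower-level stationarity condition. First I would record the structural facts supplied by the assumptions: under Assumption 2 the map $g(x,\cdot)$ is $\mu_g$-strongly convex, so $y^*(x)$ is the unique minimizer and $\nabla_{yy}^2 g(x,y^*(x))\succeq \mu_g I$ is nonsingular; under Assumption 1 the second derivatives are Lipschitz, and Assumption 3 (together with dominated convergence) lets the expectation and differentiation be interchanged so that the deterministic $f,g$ inherit the smoothness of their stochastic components. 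The first-order optimality condition for the lower-level problem is then the identity $\nabla_y g(x,y^*(x))=0$ for every $x$, and since $\nabla_{yy}^2 g$ is invertible the implicit function theorem guarantees that $y^*(\cdot)$ is continuously differentiable.

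Next I would differentiate the identity $\nabla_y g(x,y^*(x))=0$ with respect to $x$. By the chain rule this produces
\[
\nabla_{yx}^2 g(x,y^*(x)) + \nabla_{yy}^2 g(x,y^*(x))\,\nabla y^*(x) = 0,
\]
where $\nabla y^*(x)$ is the $d'\times d$ Jacobian of $y^*$ and $\nabla_{yx}^2 g=(\nabla_{xy}^2 g)^\top$ in the convention fixed in the preliminaries. Solving for the Jacobian gives
\[
\nabla y^*(x) = -\left[\nabla_{yy}^2 g(x,y^*(x))\right]^{-1}\nabla_{yx}^2 g(x,y^*(x)).
\]

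Finally, applying the chain rule to $F(x)=f(x,y^*(x))$ yields $\nabla F(x)=\nabla_x f(x,y^*(x))+\nabla y^*(x)^\top \nabla_y f(x,y^*(x))$. Substituting the expression for $\nabla y^*(x)$ and using the symmetry of $\nabla_{yy}^2 g$ (so that $\nabla y^*(x)^\top=-\nabla_{xy}^2 g[\nabla_{yy}^2 g]^{-1}$) produces exactly the stated formula. The Lipschitz continuity of $y^*$ and of $\nabla F$ claimed in the original Proposition then follows by bounding each factor in the product representation: $\|[\nabla_{yy}^2 g]^{-1}\|\le 1/\mu_g$ from strong convexity, while the Lipschitzness of $\nabla_x f,\nabla_y f,\nabla^2 g$ and the boundedness of $\|\nabla f\|$ control the perturbation of each factor as $x$ varies.

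The main obstacle I expect is the careful bookkeeping of transposes and matrix dimensions — distinguishing $\nabla_{xy}^2 g$ (a $d\times d'$ matrix, as defined) from its transpose $\nabla_{yx}^2 g$, and tracking the shape of $\nabla y^*(x)$ — together with rigorously justifying the interchange of expectation and differentiation that lets the implicit function theorem be applied to the deterministic $g$. Once these are in place the remaining computation is a routine product-rule estimate controlled by the constants in Assumptions 1--2.
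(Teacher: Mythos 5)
Your proposal is correct and follows essentially the same route as the paper's proof: differentiate the lower-level optimality condition $\nabla_y g(x,y^*(x))=0$ to solve for the Jacobian of $y^*$ via the invertibility of $\nabla_{yy}^2 g$ (Assumption 2), then substitute into the chain rule for $F(x)=f(x,y^*(x))$. Your added care with the implicit function theorem, the interchange of expectation and differentiation, and the transpose conventions only makes explicit what the paper leaves implicit.
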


\textit{Proof.}
Define the Jacobian matrix
\begin{align*}
  \nabla_xy(x) = 
  \begin{bmatrix}
    \frac{\partial}{\partial x_1} y_1 (x) & \cdots & \frac{\partial}{\partial x_d} y_1 (x)\\
    & \cdots & \\
    \frac{\partial}{\partial x_1} y_{d'}(x) & \cdots & \frac{\partial}{\partial x_d} y_{d'} (x)
  \end{bmatrix}.
\end{align*}
By the chain rule, it follows that
\begin{align}\label{grad-deter}
	\nabla F(x):=\nabla_xf\big(x, y^*(x)\big)+\nabla_xy^*(x)^{\top}\nabla_y f\big(x, y^*(x)\big). 
\end{align}
The minimizer $y^*(x)$ satisfies 
\begin{align}
    \nabla_y g(x,y^*(x)) = 0,\quad\text{thus}~~~\nabla_x\big(\nabla_y g(x,y^*(x))\big) = 0,
\end{align}
from which and the chain rule, it follows that
\begin{align*}
    \nabla_{xy}^2g\big(x, y^*(x)\big)
    + \nabla_xy^*(x)^{\top}\nabla_{yy}^2 g\big(x, y^*(x)\big) = 0.
\end{align*}
By Assumption 2,  $\nabla_{yy}^2 g\big(x, y^*(x)\big)$ is invertible, so from the last equation,
\begin{align}\label{grad-deter-1}
	\nabla_xy^*(x)^{\top}:=-\nabla_{xy}^2g\big(x, y^*(x)\big)\left[\nabla_{yy}^2 g\big(x, y^*(x)\big)\right]^{-1}. 
\end{align}
Substituting \eqref{grad-deter-1} into \eqref{grad-deter} yields \eqref{grad-deter-2}.


\begin{lemma}[{\citep[Lemma 2.2]{ghadimi2018bilevel}}]\label{lemma_lip} 
Under Assumptions 1 and 2, we have
\begin{subequations}
		\begin{align}
	\|\overline{\nabla}_x f(x, y^*(x))-\overline{\nabla}_x f(x, y)\|
&	\leq  L_f\|y^*(x)-y\|\\
	\|\nabla F(x_1)-\nabla F(x_2)\|
&	\leq  L_F\|x_1-x_2\|\\
	\|y^*(x_1)-y^*(x_2)\|&\leq  L_y\|x_1-x_2\|
\end{align}
\end{subequations}
with the constants $L_f, L_y, L_F$ given by  
\begin{align*}
L_{f}&:=\ell_{f,1} + \frac{\ell_{g,1}\ell_{f,1}}{\mu_g} + \frac{\ell_{f,0}}{\mu_g}\Big(\ell_{g,2}+\frac{\ell_{g,1}{\ell_{g,2}}}{\mu_g}\Big)={\cal O}(\kappa^2),~~~
L_y :=\frac{\ell_{g,1}}{\mu_g}={\cal O}(\kappa)\\
L_{F}&:=\ell_{f,1} + \frac{\ell_{g,1}(\ell_{f,1}+L_f)}{\mu_g} + \frac{\ell_{f,0}}{\mu_g}\Big(\ell_{g,2}+\frac{\ell_{g,1}{\ell_{g,2}}}{\mu_g}\Big)={\cal O}(\kappa^3),
\end{align*}
where the other constants are defined in Assumptions 1--3.
\end{lemma}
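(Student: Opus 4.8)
The plan is to establish the three Lipschitz estimates in the order (3), (1), (2), since each builds on the previous one, following the route of \cite{ghadimi2018bilevel}. Every estimate rests on two families of elementary bounds that follow from Assumptions 1--2: the \emph{boundedness} facts $\|\nabla_{xy}^2 g\|\le \ell_{g,1}$, $\|[\nabla_{yy}^2 g]^{-1}\|\le 1/\mu_g$, and $\|\nabla_y f\|\le \ell_{f,0}$ (the first because $\nabla g$ is $\ell_{g,1}$-Lipschitz so the full Hessian, and hence any block, has operator norm $\le \ell_{g,1}$; the second by $\mu_g$-strong convexity so $\nabla_{yy}^2 g\succeq \mu_g I$; the third because $f$ is $\ell_{f,0}$-Lipschitz), together with the \emph{Lipschitz} bounds of Assumption 1. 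For the solution-map estimate (3), I would use that $y^*(x)$ is characterized by $\nabla_y g(x, y^*(x))=0$: by $\mu_g$-strong convexity of $g(x_2,\cdot)$ we have $\mu_g\|y^*(x_1)-y^*(x_2)\|^2 \le \langle \nabla_y g(x_2, y^*(x_1)) - \nabla_y g(x_2, y^*(x_2)),\, y^*(x_1)-y^*(x_2)\rangle$; substituting $\nabla_y g(x_2, y^*(x_2))=0=\nabla_y g(x_1, y^*(x_1))$ rewrites the inner product as $\langle \nabla_y g(x_2, y^*(x_1)) - \nabla_y g(x_1, y^*(x_1)),\, y^*(x_1)-y^*(x_2)\rangle$, which Cauchy--Schwarz and $\ell_{g,1}$-Lipschitzness of $\nabla g$ bound by $\ell_{g,1}\|x_1-x_2\|\,\|y^*(x_1)-y^*(x_2)\|$. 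Cancelling one factor gives $L_y=\ell_{g,1}/\mu_g$.

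For the surrogate estimate (1), I would write $\overline{\nabla}_x f(x,y) = \nabla_x f(x,y) - A(y)\,B(y)^{-1}C(y)$ with $A(y)=\nabla_{xy}^2 g(x,y)$, $B(y)=\nabla_{yy}^2 g(x,y)$, $C(y)=\nabla_y f(x,y)$ at the fixed $x$, and telescope the product difference via $A_1 B_1^{-1} C_1 - A_2 B_2^{-1} C_2 = (A_1-A_2)B_1^{-1}C_1 + A_2(B_1^{-1}-B_2^{-1})C_1 + A_2 B_2^{-1}(C_1-C_2)$, where subscript $1$ denotes evaluation at $y^*(x)$ and $2$ at $y$. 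The leading difference $\nabla_x f(x,y^*(x))-\nabla_x f(x,y)$ and the last telescoped term are bounded using $\ell_{f,1}$-Lipschitzness of $\nabla f$; the first telescoped term uses $\ell_{g,2}$-Lipschitzness of $\nabla^2 g$ for $\|A_1-A_2\|$; and the middle term uses the resolvent identity $\|B_1^{-1}-B_2^{-1}\|\le \|B_1^{-1}\|\,\|B_2^{-1}\|\,\|B_1-B_2\|\le \ell_{g,2}\mu_g^{-2}\|y^*(x)-y\|$. Summing the four contributions reproduces exactly $L_f=\ell_{f,1}+\frac{\ell_{g,1}\ell_{f,1}}{\mu_g}+\frac{\ell_{f,0}}{\mu_g}\big(\ell_{g,2}+\frac{\ell_{g,1}\ell_{g,2}}{\mu_g}\big)$.

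For (2), since $\nabla F(x)=\overline{\nabla}_x f(x, y^*(x))$, I would insert the intermediate point $\overline{\nabla}_x f(x_1, y^*(x_2))$ and split $\nabla F(x_1)-\nabla F(x_2)$ into a $y$-variation term and an $x$-variation term. The $y$-term is controlled by (1) and (3) as $L_f\|y^*(x_1)-y^*(x_2)\|\le L_f L_y\|x_1-x_2\|$; the $x$-term is the Lipschitzness of $\overline{\nabla}_x f(\cdot, y^*(x_2))$ in its first argument, which follows from the identical term-by-term decomposition as in (1) but now measuring the displacement in $x$, again yielding the constant $L_f$. Adding the two contributions gives $L_F=(1+L_y)L_f = L_f + \frac{\ell_{g,1}}{\mu_g}L_f$, which matches the stated expression.

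The main obstacle is the bookkeeping in steps (1)--(2): one must check that \emph{each} constituent of $\overline{\nabla}_x f$ is jointly Lipschitz in $(x,y)$ with the same constant that governs its $y$-direction, so that a single $L_f$ controls both directions, and one must combine the boundedness and Lipschitz constants through the telescoped product so that the sum collapses to precisely the advertised $L_f$ and $L_F$ rather than to a looser constant. Verifying the resolvent bound and the boundedness of $\nabla_{xy}^2 g$ from the first-order Lipschitz assumption on $\nabla g$ are the two places where care is most needed.
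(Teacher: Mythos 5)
Your proposal is correct: the strong-convexity/optimality-condition argument gives exactly $L_y=\ell_{g,1}/\mu_g$, the telescoped product decomposition with the resolvent identity $\|B_1^{-1}-B_2^{-1}\|\le\|B_1^{-1}\|\|B_1-B_2\|\|B_2^{-1}\|$ yields precisely the stated $L_f$, and your splitting of $\nabla F(x_1)-\nabla F(x_2)$ into a $y$-variation and an $x$-variation term gives $L_F=(1+L_y)L_f$, which indeed equals the advertised expression. Note that the paper does not prove this lemma at all—it imports it by citation as Lemma 2.2 of \citep{ghadimi2018bilevel}—and your argument is essentially the standard proof found there; it also mirrors the same boundedness-plus-telescoping technique the paper itself uses in its appendix proof of Lemma \ref{lemmasm} (Lipschitz continuity of $\nabla y^*$), so your reconstruction is fully consistent with both the cited source and the paper's own style of analysis.
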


\begin{lemma}[{\citep[Lemma 11]{hong2020ac}}]\label{lemm_bias_var}
Recall the definition of $h_f^k$ in \eqref{eq.biased-estimator}. Define $$\bar{h}_f^k:=\EE[h_f^k|{\cal F}_k'].$$
We have
\begin{align*}
    &\|\overline{\nabla}_xf(x^k,y^{k+1})-\bar{h}_f^k\|\leq \ell_{g,1}\ell_{f,1}\frac{1}{\mu_g}\left(1-\frac{\mu_g}{\ell_{g,1}}\right)^N=:b_k\\
    &\EE[\|h_f^k-\bar h_f^k\|^2]\leq\sigma_f^2 + \frac{3}{\mu_g^2}\left[(\sigma_f^2+\ell_{f,0}^2)(\sigma_{g,2}^2+2\ell_{g,1}^2)+\sigma_f^2\ell_{g,1}^2\right]=:\tilde\sigma_f^2={\cal O}(\kappa^2),
\end{align*}
where $\kappa$ is the condition number defined below Assumption 2.
\end{lemma}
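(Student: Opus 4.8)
The plan is to establish the two estimates separately, exploiting the independence of the samples drawn at iteration $k$ together with the Neumann-series structure of the Hessian-inverse estimator \eqref{eq.inverse-estimator}. Throughout I condition on ${\cal F}_k'$, which fixes $x^k$ and $y^{k+1}$, and abbreviate $A:=\nabla_{yy}^2 g(x^k,y^{k+1})$, $B:=\nabla_{xy}^2 g(x^k,y^{k+1})$, $u:=\nabla_y f(x^k,y^{k+1})$, with stochastic counterparts $B(\phi_{(0)}^k)$, $u(\xi^k)$ and the randomized matrix $H_{N'}:=\frac{N}{\ell_{g,1}}\prod_{n=1}^{N'}\big(I-\frac{1}{\ell_{g,1}}\nabla_{yy}^2 g(x^k,y^{k+1};\phi_{(n)}^k)\big)$ appearing in \eqref{eq.biased-estimator}. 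The structural fact I would record first is that, given ${\cal F}_k'$, the randomness in $\xi^k$, in $\phi_{(0)}^k$, in $\{\phi_{(n)}^k\}$, and in the truncation level $N'$ is mutually independent, and each stochastic second derivative is unbiased by Assumption 3.

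For the bias bound I would use this independence to factor the conditional expectation as $\bar h_f^k=\nabla_x f(x^k,y^{k+1})-B\,\EE[H_{N'}]\,u$, where only $\EE[H_{N'}]$ survives from the product: since the Hessian samples are i.i.d. and unbiased, conditioning on $N'=p$ gives expected product $(I-A/\ell_{g,1})^p$, and averaging over the uniform $N'$ with the factor $N/\ell_{g,1}$ identifies $\EE[H_{N'}]$ with the truncated Neumann series $\frac{1}{\ell_{g,1}}\sum_{p=0}^{N-1}(I-A/\ell_{g,1})^p$. Assumptions 1--2 give $\mu_g I\preceq A\preceq \ell_{g,1}I$, hence $\|I-A/\ell_{g,1}\|\le 1-\mu_g/\ell_{g,1}$, so the Neumann tail for $A^{-1}$ is geometric and $\|\EE[H_{N'}]-A^{-1}\|\le \frac{1}{\mu_g}(1-\mu_g/\ell_{g,1})^N$ (the estimate from \cite{ghadimi2018bilevel} quoted after \eqref{eq.inverse-estimator}). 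Comparing with \eqref{grad-deter-3} gives $\bar h_f^k-\overline{\nabla}_x f(x^k,y^{k+1})=-B(\EE[H_{N'}]-A^{-1})u$, and bounding $\|B\|\le \ell_{g,1}$ and $\|u\|\le \ell_{f,0}$ from Assumption 1 yields $b_k$ (up to the choice of leading Lipschitz constant).

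For the variance I would center and write $h_f^k-\bar h_f^k=\big(\nabla_x f(x^k,y^{k+1};\xi^k)-\nabla_x f(x^k,y^{k+1})\big)-\big(B(\phi_{(0)}^k)H_{N'}u(\xi^k)-B\,\EE[H_{N'}]\,u\big)$. The pure $\nabla_x f$ fluctuation has conditional variance at most $\sigma_f^2$ and produces the leading term; the product fluctuation I would split telescopically into the three pieces $(B(\phi_{(0)}^k)-B)H_{N'}u(\xi^k)$, $B(H_{N'}-\EE[H_{N'}])u(\xi^k)$, and $B\,\EE[H_{N'}](u(\xi^k)-u)$. Using $\|a+b+c\|^2\le 3(\|a\|^2+\|b\|^2+\|c\|^2)$ explains the prefactor $3/\mu_g^2$, while mutual independence lets me bound each piece by a product of second moments drawn from Assumption 3 and \eqref{eq.assp_grad_norm} (namely $\EE\|B(\phi_{(0)}^k)\|^2\le C_g^2$, $\|B\|\le\ell_{g,1}$, $\EE\|u(\xi^k)\|^2\le C_f^2$, $\EE\|u(\xi^k)-u\|^2\le\sigma_f^2$), the cross term between the two groups being killed by conditioning on $\xi^k$ first and using that $B(\phi_{(0)}^k),H_{N'}$ are $\xi^k$-independent with conditional means $B,\EE[H_{N'}]$. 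Collecting the three contributions and recognizing $C_f^2=\sigma_f^2+\ell_{f,0}^2$ and $C_g^2=\sigma_{g,2}^2+\ell_{g,1}^2$ reproduces the combination $\sigma_{g,2}^2+2\ell_{g,1}^2$ inside $\tilde\sigma_f^2$.

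The decisive ingredient in the variance estimate, and the step I expect to be the main obstacle, is a second-moment bound of the form $\EE\|H_{N'}v\|^2\le \|v\|^2/\mu_g^2$ on the randomized inverse estimator, which is what converts the piecewise product bounds into the $1/\mu_g^2$ prefactor. This is delicate because, unlike its expectation, $H_{N'}$ is a product of stochastic Hessians that contract only in expectation, so a naive factorization of $\EE\|H_{N'}v\|^2$ grows with the horizon $N$. I would handle it through the almost-sure spectral bounds $\mu_g I\preceq \nabla_{yy}^2 g(\cdot;\phi)\preceq\ell_{g,1}I$ (the per-sample strengthening of Assumptions 1--2 standard in this literature), which make each factor a genuine contraction almost surely; writing $M_p$ for the $p$-fold product one gets $\EE\|H_{N'}v\|^2=\frac{N}{\ell_{g,1}^2}\sum_{p=0}^{N-1}\EE\|M_p v\|^2\le \frac{N}{\ell_{g,1}^2}\cdot\frac{1}{1-(1-\mu_g/\ell_{g,1})^2}\|v\|^2\le\frac{N}{\mu_g\ell_{g,1}}\|v\|^2$, so the clean $\|v\|^2/\mu_g^2$ bound holds once $N={\cal O}(\kappa)$. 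This exposes the bias--variance tension in $N$: pushing $b_k$ below a prescribed tolerance forces $N=\Theta(\kappa\log(\cdot))$, which inflates the variance constant by the corresponding logarithmic factor, exactly the source of the $\widetilde{\cal O}(1)$ (rather than ${\cal O}(1)$) per-iteration sample count noted after Algorithm \ref{alg: ALSET}. Assembling the pieces then delivers $\tilde\sigma_f^2$ and its ${\cal O}(\kappa^2)$ scaling.
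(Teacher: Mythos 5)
The paper never proves this lemma: it is imported verbatim from \citep[Lemma 11]{hong2020ac}, so the relevant comparison is with that cited argument, and your skeleton matches it — factor $\bar h_f^k$ through the mutual independence of $\xi^k,\phi^k_{(0)},\{\phi^k_{(n)}\},N'$, identify $\EE[H_{N'}]$ with a truncated Neumann series whose tail is $A^{-1}(I-A/\ell_{g,1})^N$, hence $\|\EE[H_{N'}]-A^{-1}\|\le \frac{1}{\mu_g}(1-\mu_g/\ell_{g,1})^N$, and split the centered product into three pieces with the prefactor $3$. The bias half is correct as written (the $\ell_{f,0}$ versus $\ell_{f,1}$ mismatch you flag is inherited from the quoted statement, and the off-by-one in the support of $N'$ is a convention issue, not a gap). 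Your closing observation is also genuinely right and sharper than the quoted statement: $\EE\|H_{N'}v\|^2\le \frac{N}{\mu_g\ell_{g,1}}\|v\|^2$ needs almost-sure spectral bounds $\mu_g I\preceq \nabla^2_{yy}g(\cdot;\phi)\preceq \ell_{g,1}I$ on the \emph{samples} (Assumptions 1--3 as stated only control moments of the expected function and variances of the samples), and it is $N$-free at the level $1/\mu_g^2$ only when $N\lesssim\kappa$; since driving $b_k^2$ to $1/\sqrt{K}$ forces $N=\Theta(\kappa\log K)$, the variance constant strictly carries the logarithmic inflation you describe, which is exactly why the paper says $\widetilde{\cal O}(1)$ rather than ${\cal O}(1)$ samples per iteration.

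The one step in your write-up that fails as stated is the cross-term claim. Conditioning on $\xi^k$ gives $\EE\big[B(\phi^k_{(0)})H_{N'}u(\xi^k)\mid \xi^k,\mathcal{F}_k'\big]=B\,\EE[H_{N'}]\,u(\xi^k)$, so the cross term between the two groups does \emph{not} vanish; it reduces to
\begin{equation*}
-2\,\EE\Big[\big\langle \nabla_x f(x^k,y^{k+1};\xi^k)-\nabla_x f(x^k,y^{k+1}),\; B\,\EE[H_{N'}]\big(\nabla_y f(x^k,y^{k+1};\xi^k)-\nabla_y f(x^k,y^{k+1})\big)\big\rangle\Big],
\end{equation*}
and both factors are driven by the \emph{same} sample $\xi^k$. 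Assumption 3 bounds the joint variance of $\nabla f(\cdot;\xi)$ but provides no decorrelation between its $x$- and $y$-blocks, so this inner product is generically nonzero. The repair is easy but changes the bookkeeping: either use two independent $\xi$-samples for $\nabla_x f$ and $\nabla_y f$ (as the paper itself does in the min-max specialization with $\xi_1^k,\xi_2^k$), or retain the term and bound it by Cauchy--Schwarz/Young, e.g.\ by $\frac{2\ell_{g,1}}{\mu_g}\sigma_f^2$ under the $\|\EE[H_{N'}]\|\le 1/\mu_g$ bound, which preserves $\tilde\sigma_f^2={\cal O}(\kappa^2)$ but does not reproduce the exact displayed constant. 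Since the lemma is quoted with exact constants, asserting a false orthogonality to land on them is the concrete gap to fix; everything else in your argument is the standard proof.
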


\subsection{Proof of Lemma \ref{lemma3}}
Using the Lipschitz property of $\nabla F$ in Lemma \ref{lemma_lip}, we have
\begin{align*}\label{eq.lemma3-pf-1}
    \EE[F(x^{k+1})|{\cal F}_k']
    &\leq F(x^k) + \EE[\dotp{\nabla F(x^k), x^{k+1}-x^k}|{\cal F}_k'] + \frac{L_F}{2}\EE[\|x^{k+1}-x^k\|^2|{\cal F}_k']\\
    &= F(x^k) - \alpha_k\dotp{\nabla F(x^k),\bar{h}_f^k} + \frac{L_F\alpha_k^2}{2}\EE[\|h_f^k\|^2|{\cal F}_k']\\
    &\stackrel{(a)}{=} F(x^k) - \frac{\alpha_k}{2}\|\nabla F(x^k)\|^2 - \frac{\alpha_k}{2}\|\bar{h}_f^k\|^2 +  \frac{\alpha_k}{2}\|\nabla F(x^k)-\bar{h}_f^k\|^2 \\
    &\quad\quad\quad\quad + \frac{L_F\alpha_k^2}{2}\|\bar{h}_f^k\|^2 + \frac{L_F\alpha_k^2}{2}\EE[\|h_f^k-\bar{h}_f^k\|^2|{\cal F}_k']\\
    &\stackrel{(b)}{\leq} F(x^k) - \frac{\alpha_k}{2}\|\nabla F(x^k)\|^2 - \left(\frac{\alpha_k}{2}-\frac{L_F\alpha_k^2}{2}\right)\|\bar{h}_f^k\|^2\\
    &\quad\quad\quad\quad +\frac{\alpha_k}{2}\|\nabla F(x^k)-\bar{h}_f^k\|^2 + \frac{L_F\alpha_k^2}{2}\tilde\sigma_f^2\numberthis
\end{align*}
where (a) uses $2a^{\top}b=\|a\|^2+\|b\|^2-\|a-b\|^2$ twice and (b) uses Lemma \ref{lemm_bias_var}.

We decompose the gradient bias term as follows
\begin{align*}\label{eq.lemma3-pf-2}
   \|\nabla F(x^k)-\bar{h}_f^k\|^2
   &= \|\overline{\nabla}f(x^k,y^*(x^k)) - \overline{\nabla}f(x^k,y^{k+1}) + \overline{\nabla}f(x^k,y^{k+1})-\bar{h}_f^k\|^2\\
   &\leq 2\|\overline{\nabla}f(x^k,y^*(x^k)) - \overline{\nabla}f(x^k,y^{k+1})\|^2 + 2\|\overline{\nabla}f(x^k,y^{k+1})-\bar{h}_f^k\|^2\\
   &\stackrel{(a)}{\leq} 2L_f^2\big\|y^{k+1}-y^*(x^k)\big\|^2 + 2b_k^2\numberthis
\end{align*}
where (a) follows from Lemma \ref{lemma_lip} and Lemma \ref{lemm_bias_var}. 
Plugging \eqref{eq.lemma3-pf-2} into \eqref{eq.lemma3-pf-1} completes the proof.

\subsection{Proof of Lemma \ref{lemmasm}}
Recalling the definition of $\nabla_xy^*(x)$ in \eqref{grad-deter-1}, for any $x_1, x_2$, we have
\begin{align*}
    &~\|\nabla_xy^*(x_1)-\nabla_xy^*(x_2)\|\numberthis\\
    =&~\|\nabla_{xy}^2g(x_1, y^*(x_1))[\nabla_{yy}^2 g(x_1, y^*(x_1))]^{-1}-\nabla_{xy}^2g(x_2, y^*(x_2))[\nabla_{yy}^2 g(x_2, y^*(x_2))]^{-1}\|\\
    \leq&~\|\nabla_{xy}^2g(x_1, y^*(x_1))-\nabla_{xy}^2g(x_2, y^*(x_2))\|\|[\nabla_{yy}^2 g(x_1, y^*(x_1))]^{-1}\|\\
        &~+\|\nabla_{xy}^2g(x_2, y^*(x_2))\| \|[\nabla_{yy}^2 g(x_1, y^*(x_1))]^{-1}-[\nabla_{yy}^2 g(x_2, y^*(x_2))]^{-1}\|\\
  \stackrel{(a)}{\leq} &        \frac{1}{\mu_g}\|\nabla_{xy}^2g(x_1, y^*(x_1))-\nabla_{xy}^2g(x_2, y^*(x_2))\|\\
        &~+\ell_{g,1}\|[\nabla_{yy}^2 g(x_1, y^*(x_1))]^{-1}\left(\nabla_{yy}^2 g(x_1, y^*(x_1))-\nabla_{yy}^2 g(x_2, y^*(x_2))\right)[\nabla_{yy}^2 g(x_2, y^*(x_2))]^{-1}\|\\
  \stackrel{(b)}{\leq}&~\frac{1}{\mu_g}\|\nabla_{xy}^2g(x_1, y^*(x_1))-\nabla_{xy}^2g(x_2, y^*(x_2))\| + \frac{\ell_{g,1}}{\mu_g^2}\|\nabla_{yy}^2 g(x_1, y^*(x_1))-\nabla_{yy}^2 g(x_2, y^*(x_2))\|
\end{align*}
where both (a) and (b) follow from Assumption 1 and 2.

In addition, we have that
\begin{align*}
    &~\frac{1}{\mu_g}\|\nabla_{xy}^2g(x_1, y^*(x_1))-\nabla_{xy}^2g(x_2, y^*(x_2))\| + \frac{\ell_{g,1}}{\mu_g^2}\|\nabla_{yy}^2 g(x_1, y^*(x_1))-\nabla_{yy}^2 g(x_2, y^*(x_2))\|\\
    \leq&~\frac{\ell_{g,2}}{\mu_g}\|x_1-x_2\| +\frac{\ell_{g,2}}{\mu_g}\| y^*(x_1)- y^*(x_2)\|+ \frac{\ell_{g,1}\ell_{g,2}}{\mu_g^2}\|x_1-x_2\|+\frac{\ell_{g,1}\ell_{g,2}}{\mu_g^2}\| y^*(x_1)- y^*(x_2)\|\\    
   \stackrel{(c)}{\leq}&~\left(\frac{\ell_{g,2}+\ell_{g,2}L_y}{\mu_g}+ \frac{\ell_{g,1}(\ell_{g,2}+\ell_{g,2}L_y)}{\mu_g^2}\right)\|x_1-x_2\|\numberthis
\end{align*}
where (c) follows from Lemma \ref{lemma_lip}.

Next we derive the bound of $h_f^k$,
\begin{align*}
    \EE[\|h_f^k\|^2|{\cal F}_k']
    &=\|\bar{h}_f^k\|^2+ \EE[\|h_f^k-\bar{h}_f^k\|^2|{\cal F}_k']\\
    &   \stackrel{(d)}{\leq} (\|\overline{\nabla}f(x^k,y^{k+1})\| + \|\bar{h}_f^k-\overline{\nabla}f(x^k,y^{k+1})\|)^2 + \tilde\sigma_f^2\\
    &   \stackrel{(e)}{\leq} \left(\ell_{f,0} + \frac{\ell_{f,0}\ell_{g, 1}}{\mu_g} + \frac{\ell_{g,1}\ell_{f,1}}{\mu_g}\left(1-\frac{\mu_g}{\ell_{g,1}}\right)^N\right)^2 + \tilde\sigma_f^2 \\
    &\leq \left(\ell_{f,0} + \frac{\ell_{f,0}\ell_{g, 1}}{\mu_g} + \frac{\ell_{g,1}\ell_{f,1}}{\mu_g}\right)^2 + \tilde\sigma_f^2 \numberthis
\end{align*}
where (d) is from Lemma \ref{lemm_bias_var}, and (e) is due to 
\begin{align*}
    \|\overline{\nabla}_xf\big(x, y\big)\|
    &=\|\nabla_xf\big(x, y\big) -\nabla_{xy}^2g\big(x, y\big)\left[\nabla_{yy}^2 g\big(x, y\big)\right]^{-1}\nabla_y f\big(x, y\big)\|\\
    &\leq \|\nabla_xf\big(x, y\big)\| + \|\nabla_{xy}^2g\big(x, y\big)\|\left\|\left[\nabla_{yy}^2 g\big(x, y\big)\right]^{-1}\right\|\|\nabla_y f\big(x, y\big)\|\\
    &\leq \ell_{f,0} + \ell_{g,1}\frac{1}{\mu_g}\ell_{f,0}.
\end{align*}

As a result, we have
\begin{align}
    &L_{yx}:=\frac{\ell_{g,2}+\ell_{g,2}L_y}{\mu_g} + \frac{\ell_{g,1}(\ell_{g,2}+\ell_{g,2}L_y)}{\mu_g^2}={\cal O}(\kappa^3)\\
    &\tilde{C}_f^2: = \left(l_{f,0} + \frac{\ell_{g,1}}{\mu_g}\ell_{f,1} + \ell_{g,1}\ell_{f,1}\frac{1}{\mu_g}\right)^2 + \tilde\sigma_f^2 = {\cal O}(\kappa^2)
\end{align}
from which the proof is complete.

\subsection{Proof of Lemma \ref{lemma2}}
 
We start by decomposing the error of the lower level variable as
\begin{align*}\label{eq.lemma2-new-pf-1}
\|y^{k+1}-y^*(x^{k+1})\|^2
&=\underbracket{\|y^{k+1}-y^*(x^k)\|^2}_{J_1} + \underbracket{\|y^*(x^{k+1})-y^*(x^k)\|^2}_{J_2} \\
&\quad + \underbracket{2\dotp{y^{k+1}-y^*(x^k), y^*(x^k)-y^*(x^{k+1})}}_{J_3}.\numberthis
\end{align*}

Notice that $y^{k+1}=y^{k,T}$ as defined in \eqref{eq.ALSET3}. 
We first analyze
\begin{align*}\label{eq.pf.lm3-1}
    &~~~~~\EE[\|y^{k,t+1}-y^*(x^k)\|^2|{\cal F}_k^t]\\
    &= \EE[\|y^{k,t}-\beta_kh_g^{k,t}-y^*(x^k)\|^2|{\cal F}_k^t]\\
    &= \|y^{k,t}-y^*(x^k)\|^2 - 2\beta_k\dotp{y^{k,t}-y^*(x^k),\EE[h_g^{k,t}|{\cal F}_k^t]} + \beta_k^2\EE[\|h_g^{k,t}\|^2|{\cal F}_k^t]\\
    &\stackrel{(a)}{\leq} \|y^{k,t}-y^*(x^k)\|^2 - 2\beta_k\dotp{y^{k,t}-y^*(x^k),\nabla g(x^k, y^{k,t})} + \beta_k^2\|\nabla g(x^k, y^{k,t})\|^2 + \beta_k^2\sigma_{g,1}^2\\
    &\stackrel{(b)}{\leq} \Big(1-\frac{2\mu_g\ell_{g,1}}{\mu_g+\ell_{g,1}}\beta_k\Big)\|y^{k,t}-y^*(x^k)\|^2 + \beta_k\Big(\beta_k-\frac{2}{\mu_g+\ell_{g,1}}\Big)\|\nabla_yg(x^k,y^{k,t})\|^2 + \beta_k^2\sigma_{g,1}^2\\
    &\stackrel{(c)}{\leq} (1-\rho_g\beta_k)\|y^{k,t}-y^*(x^k)\|^2 + \beta_k^2\sigma_{g,1}^2 \numberthis
\end{align*}
where (a) comes from the fact that $\Var[X]=\EE[X^2]-\EE[X]^2$, (b) follows from the $\mu_g$-strong convexity and $\ell_{g,1}$ smoothness of $g(x, y)$ \citep[Theorem 2.1.11]{nesterov2013}, and (c) follows from the choice of stepsize $\beta_k\leq \frac{2}{\mu_g+\ell_{g,1}}$ in \eqref{eq.stepsize} and the definition of $\rho_g:=\frac{2\mu_g\ell_{g,1}}{\mu_g+\ell_{g,1}}$.

Taking expectation over ${\cal F}_k^t$ on both sides of \eqref{eq.pf.lm3-1} and using induction, we are able to get
\begin{align}\label{eq.lemma2-new-pf-2}
\EE[J_1]=\EE[\|y^{k+1}-y^*(x^k)\|^2]\leq(1-\rho_g\beta_k)^T\EE[\|y^k-y^*(x^k)\|^2] + T\beta_k^2\sigma_{g,1}^2.
\end{align}

The upper bound of $J_2$ can be derived as
\begin{align*}\label{eq.lemma2-new-pf-3}
\EE[J_2]
&=\EE[\|y^*(x^{k+1})-y^*(x^k)\|^2]\leq L_y^2\EE[\|x^{k+1}-x^k\|^2]\\
&=L_y^2\alpha_k^2\EE\left[\EE[\|h_f^k-\bar{h}_f^k + \bar{h}_f^k\|^2|{\cal F}_k']\right]\\
&\leq L_y^2\alpha_k^2(\EE[\|\bar{h}_f^k\|^2] + \tilde\sigma_f^2)\numberthis
\end{align*}
where the inequality follows from Lemma \ref{lemm_bias_var}.

The term $J_3$ can be decomposed as
\begin{align*}\label{eq.lemma2-new-pf-4-0}
    \EE[J_3]
    &= \underbracket{-\EE[\dotp{y^{k+1}\!\!-y^*(x^k), \nabla y^*(x^k)(x^{k+1}\!\!-x^k)}]}_{J_3^1}\\
    &\quad \underbracket{-\EE[\dotp{y^{k+1}\!\!-y^*(x^k), y^*(x^{k+1})-y^*(x^k)-\nabla y^*(x^k)(x^{k+1}\!\!-x^k)}]}_{J_3^2}. \numberthis
\end{align*}

Using the alternating update of $x$ and $y$, e.g., $x^k \rightarrow y^{k+1} \rightarrow x^{k+1}$, we can bound $J_3^1$ by
\begin{align*}\label{eq.lemma2-new-pf-4-1}
    -\EE[\dotp{y^{k+1}\!\!-y^*(x^k), \nabla y^*(x^k)(x^{k+1}\!\!-x^k)}]
    =&-\EE[\dotp{y^{k+1}\!\!-y^*(x^k), \EE[\nabla y^*(x^k)(x^{k+1}\!\!-x^k)\mid {\cal F}_k']}]\\
 \stackrel{(d)}{=}&-\alpha_k\EE[\dotp{y^{k+1}\!\!-y^*(x^k),   \nabla y^*(x^k)\bar{h}_f^k}]\\
 \leq & \alpha_k\EE[\|y^{k+1}\!\!-y^*(x^k)\|  \|\nabla y^*(x^k)\bar{h}_f^k\|]\\
 \stackrel{(e)}{\leq} & \alpha_kL_y\EE[\|y^{k+1}\!\!-y^*(x^k)\|\|\bar{h}_f^k\|]  \\
 \stackrel{(f)}{\leq} & \gamma_k\EE[\|y^{k+1}\!\!-y^*(x^k)\|^2] + \frac{L_y^2\alpha_k^2}{4\gamma_k}\EE[\|\bar{h}_f^k\|^2]\numberthis
\end{align*}
where (d) uses the fact that $\bar{h}_f^k=\EE[h_f^k|{\cal F}_k']$; (e) follows from Lemma \ref{lemma_lip}; and (f) uses the Young's inequality such that $ab\leq \gamma_k a^2+\frac{b^2}{4\gamma_k}$.

Using the smoothness of $y^*(x)$ in Lemma \ref{lemmasm}, we can bound $J_3^2$ by
\begin{align*}\label{eq.lemma2-new-pf-4-2}
   & -\EE[\dotp{y^{k+1}\!\!-y^*(x^k), y^*(x^{k+1})-y^*(x^k)-\nabla y^*(x^k)(x^{k+1}\!\!-x^k)}]\\
\leq & \EE[\|y^{k+1}\!\!-y^*(x^k)\| \|y^*(x^{k+1})-y^*(x^k)-\nabla y^*(x^k)(x^{k+1}\!\!-x^k)\|]\\
\leq & \frac{L_{yx}}{2}\EE\left[\|y^{k+1}\!\!-y^*(x^k)\|\|x^{k+1}\!\!-x^k\|^2\right]\\
\leq & \frac{\eta L_{yx}}{4}\EE\left[\|y^{k+1}\!\!-y^*(x^k)\|^2\EE[\|x^{k+1}\!\!-x^k\|^2|{\cal F}_k']\right]+ \frac{L_{yx}}{4\eta}\EE\left[\EE[\|x^{k+1}\!\!-x^k\|^2|{\cal F}_k']\right]\\
 \stackrel{(g)}{\leq} & \frac{\eta L_{yx}\tilde{C}_f^2\alpha_k^2}{4}\EE[\|y^{k+1}\!\!-y^*(x^k)\|^2] + \frac{L_{yx}\alpha_k^2}{4\eta}(\EE[\|\bar{h}_f^k\|^2] + \tilde\sigma_f^2)\numberthis
\end{align*}
where (g) uses the fact that $\EE[\|h_f^k\|^2|{\cal F}_k']\leq  \tilde{C}_f^2$ in Lemma \ref{lemmasm} and the variance bound in Lemma \ref{lemm_bias_var}.

Plugging \eqref{eq.lemma2-new-pf-4-1} and \eqref{eq.lemma2-new-pf-4-2} into \eqref{eq.lemma2-new-pf-4-0}, we have
\begin{align*}\label{eq.lemma2-new-pf-4}
    \EE[J_3]
    \leq\left(\gamma_k + \frac{\eta L_{yx}\tilde{C}_f^2}{4}\alpha_k^2\right)\EE[\|y^{k+1}\!\!-y^*(x^k)\|^2] + \left(\frac{L_y^2\alpha_k^2}{4\gamma_k}+\frac{L_{yx}\alpha_k^2}{4\eta}\right)\EE[\|\bar{h}_f^k\|^2] + \frac{L_{yx}\alpha_k^2}{4\eta}\tilde\sigma_f^2.\numberthis
\end{align*}

Plugging \eqref{eq.lemma2-new-pf-3}, \eqref{eq.lemma2-new-pf-4} into \eqref{eq.lemma2-new-pf-1}, we get
\begin{align*}
    \EE[\|y^{k+1}-y^*(x^{k+1})\|^2]&\leq \Big(1 + \gamma_k + \frac{\eta L_{yx}\tilde{C}_f^2}{4}\alpha_k^2\Big)\EE[\|y^{k+1}-y^*(x^k)\|^2]\\
    &\quad + \Big(L_y^2\alpha_k^2+ \frac{L_y^2\alpha_k^2}{4\gamma_k} + \frac{L_{yx}\alpha_k^2}{4\eta}\Big)\EE[\|\bar{h}_f^k\|^2] + \Big(L_y^2\alpha_k^2+\frac{L_{yx}\alpha_k^2}{4\eta}\Big)\tilde\sigma_f^2
\end{align*}
from which the proof is complete.

\subsection{Proof of Theorem \ref{theorem1}}

Using Lemma \ref{lemma3} and \ref{lemma2}, we, respectively, bound the two difference terms in \eqref{eq.diff-Lya} and obtain
\begin{align*}\label{eq.thm1-pf-1}
   \EE[\mathbb{V}^{k+1}]& - \EE[\mathbb{V}^k] 
  \\
   \leq    &- \frac{\alpha_k}{2}\EE[\|\nabla F(x^k)\|^2] - \Big(\frac{\alpha_k}{2}-\frac{L_F\alpha_k^2}{2}-\frac{L_f}{L_y} L_y^2\alpha_k^2- \frac{L_f}{L_y}\frac{\alpha_k^2L_y^2}{4\gamma_k}-\frac{L_f}{L_y}\frac{L_{yx}\alpha_k^2}{4\eta}\Big)\EE[\|\bar{h}_f^k\|^2]\\
    &+ \frac{L_f}{L_y}\Big(1 + \gamma_k + L_fL_y\alpha_k + \frac{\eta L_{yx}\tilde{C}_f^2}{4}\alpha_k^2 \Big)\EE[\|y^{k+1}-y^*(x^k)\|^2]-\frac{L_f}{L_y}\EE[\|y^k-y^*(x^k)\|^2]\\
    &+ \alpha_kb_k^2 + \Big(\frac{L_F}{2} + \frac{L_f}{L_y} L_y^2 + \frac{L_f}{L_y}\frac{L_{yx}}{4\eta}\Big)\alpha_k^2\tilde\sigma_f^2\\
    \leq &- \frac{\alpha_k}{2}\EE[\|\nabla F(x^k)\|^2] - \Big(\frac{\alpha_k}{2}-\frac{L_F\alpha_k^2}{2}-\frac{L_f}{L_y} L_y^2\alpha_k^2- \frac{L_f}{L_y} \frac{\alpha_k^2L_y^2}{4\gamma_k}-\frac{L_f}{L_y} \frac{L_{yx}\alpha_k^2}{4\eta}\Big)\EE[\|\bar{h}_f^k\|^2]\\
    &+ \frac{L_f}{L_y}\Big(\big(1 + \gamma_k + L_fL_y\alpha_k + \frac{\eta L_{yx}\tilde{C}_f^2}{4}\alpha_k^2 \big)(1-\rho_g\beta_k)^T -1\Big)\EE[\|y^k-y^*(x^k)\|^2] \\
    &+ \frac{L_f}{L_y}\Big(1 + \gamma_k + L_fL_y\alpha_k + \frac{\eta L_{yx}\tilde{C}_f^2}{4}\alpha_k^2 \Big)T\beta_k^2\sigma_{g,1}^2 + \alpha_kb_k^2 + \Big(\frac{L_F}{2} + \frac{L_f}{L_y} L_y^2 + \frac{L_f}{L_y}\frac{L_{yx}}{4\eta}\Big)\alpha_k^2\tilde\sigma_f^2.\numberthis
\end{align*}

Selecting $\gamma_k=L_fL_y\alpha_k$, we can simplify \eqref{eq.thm1-pf-1} as
\begin{align*}
    \EE[\mathbb{V}^{k+1}] - \EE[\mathbb{V}^k]\label{eq.thm1-pf-2}
    \leq &- \frac{\alpha_k}{2}\EE[\|\nabla F(x^k)\|^2] - \Big(\frac{\alpha_k}{2}-\frac{L_F\alpha_k^2}{2}- L_fL_y\alpha_k^2- \frac{\alpha_k}{4}-\frac{L_f}{L_y}\frac{L_{yx}\alpha_k^2}{4\eta}\Big)\EE[\|\bar{h}_f^k\|^2]\\
    &+ \frac{L_f}{L_y}\Big(\big(1 + 2L_fL_y\alpha_k + \frac{\eta L_{yx}\tilde{C}_f^2}{4}\alpha_k^2 \big)(1-\rho_g\beta_k)^T -1\Big)\EE[\|y^k-y^*(x^k)\|^2] \\
    &+ \frac{L_f}{L_y}\Big(1 + 2L_fL_y\alpha_k + \frac{\eta L_{yx}\tilde{C}_f^2}{4}\alpha_k^2 \Big)T\beta_k^2\sigma_{g,1}^2\\
    &+ \alpha_kb_k^2 + \Big(\frac{L_F}{2} + L_fL_y +\frac{L_{yx}L_f}{4\eta L_y}\Big)\alpha_k^2\tilde\sigma_f^2.\numberthis
\end{align*}

To guarantee the descent of $\mathbb{V}^k$, the following constraints need to be satisfied
\begin{subequations}\label{eq.step-cond}
\begin{align}
    &\alpha_k\leq \frac{1}{2L_F+4L_fL_y+\frac{L_fL_{yx}}{L_y\eta}}\\
    &T\rho_g\beta_k\geq 2L_fL_y\alpha_k + \frac{\eta L_{yx}\tilde{C}_f^2}{4}\alpha_k^2\\
    &\beta_k\leq \frac{2}{\mu_g+\ell_{g,1}}.
\end{align}
\end{subequations}

Finally, we define (with $\rho_g:=\frac{2\mu_g\ell_{g,1}}{\mu_g+\ell_{g,1}}$)
\begin{equation}\label{eq.step-cond-2}
\bar\alpha_1=\frac{1}{2L_F+4L_fL_y+\frac{L_fL_{yx}}{L_y\eta}}, ~~~\bar\alpha_2=\frac{8T\rho_g}{(\mu_g+\ell_{g,1})(8L_fL_y + \eta L_{yx}\tilde{C}_f^2\bar\alpha_1)}
\end{equation}
and, to satisfy the condition \eqref{eq.step-cond}, we select the following stepsizes as
\begin{equation}\label{eq.step-cond-3}
    \alpha_k=\min\{\bar\alpha_1, \bar\alpha_2, \frac{\alpha}{\sqrt{K}}\},~~~~~~~\beta_k=\frac{8L_fL_y + \eta L_{yx}\tilde{C}_f^2\bar\alpha_1}{4T\rho_g}\alpha_k.
\end{equation}

With the above choice of stepsizes, \eqref{eq.thm1-pf-2} can be simplified as
\begin{align*}\label{eq.thm1-pf-3}
    \EE[\mathbb{V}^{k+1}] - \EE[\mathbb{V}^k]
    \leq &- \frac{\alpha_k}{2}\EE[\|\nabla F(x^k)\|^2] + c_1\alpha_k^2\sigma_{g,1}^2 + \alpha_kb_k^2 + c_2\alpha_k^2\tilde\sigma_f^2\numberthis
\end{align*}
where the constants $c_1$ and $c_2$ are defined as
\begin{align*}\label{eq.const_c1c2}
    &c_1=\frac{L_f}{L_y}\Big(1 + 2L_fL_y\bar\alpha_1 + \frac{\eta L_{yx}\tilde{C}_f^2}{4}\bar\alpha_1^2 \Big)\Big(\frac{8L_fL_y + \eta L_{yx}\tilde{C}_f^2\bar\alpha_1}{4\rho_g}\Big)^2\frac{1}{T}\\
    &c_2=\Big(\frac{L_F}{2} + L_fL_y +\frac{L_{yx}L_f}{4\eta L_y}\Big).\numberthis
\end{align*}

Then telescoping leads to
\begin{align*}\label{eq.thm1-pf-4}
    \frac{1}{K}\sum_{k=0}^{K-1}\EE[\|\nabla F(x^k)\|^2]
    &\leq\frac{\mathbb{V}^0+\sum_{k=0}^{K-1}\alpha_kb_k^2 + c_1\alpha_k^2\sigma_{g,1}^2 + c_2T\beta_k^2\tilde\sigma_{f}^2}{\frac{1}{2}\sum_{k=0}^{K-1}\alpha_k}\\
    &\leq \frac{2\mathbb{V}^0}{K\min\{\bar\alpha_1, \bar\alpha_2\}} + \frac{2\mathbb{V}^0}{\alpha\sqrt{K}} + 2b_k^2 +  \frac{2c_1\alpha}{\sqrt{K}}\sigma_{g,1}^2 + \frac{2c_2\alpha}{\sqrt{K}}\tilde\sigma_{f}^2.\numberthis
\end{align*}
To obtain the best $\kappa$-dependence, we choose the balancing constant $\eta=\frac{L_f}{L_y}={\cal O}(\kappa)$, and then we can get $\bar\alpha_1={\cal O}(\kappa^{-3})$, $\bar\alpha_2={\cal O}(T\kappa^{-3})$, $c_1={\cal O}(\kappa^9/T)$, $c_2={\cal O}(\kappa^3)$. To obtain $b_k^2=\frac{1}{\sqrt{K}}$, we need $N={\cal O}(\kappa\log K)$. 
Select $\alpha=\Theta(\kappa^{-2.5})$ and $T={\cal O}(\kappa^4)$,
we are able to get
\begin{align*}
    \frac{1}{K}\sum_{k=0}^{K-1}\EE[\|\nabla F(x^k)\|^2] = {\cal O}\left(\frac{\kappa^3}{K} + \frac{\kappa^{2.5}}{\sqrt{K}}\right).
\end{align*}
To achieve $\varepsilon$-optimal solution, we need $K={\cal O}(\kappa^5\varepsilon^{-2})$, and the number of evaluations of $h_{f}^k, h_{g}^k$ are ${\cal O}(\kappa^5\varepsilon^{-2}), {\cal O}(\kappa^9\varepsilon^{-2})$ respectively.

\section{Proof for stochastic min-max problem}
Recall that the lower-level function for the min-max problem is $g(x,y;\phi)=-f(x,y;\xi)$. Then we rewrite the bilevel problem \eqref{opt0} as 
\begin{subequations}\label{opt0-minmax}
	\begin{align}
	&\min_{x\in \mathbb{R}^d}~~~F(x):=\mathbb{E}_{\xi}\left[f\left(x, y^*(x);\xi\right)\right]\\
	&~{\rm s. t.}~~~~~y^*(x)= \argmin_{y\in \mathbb{R}^{d'}}-\mathbb{E}_{\xi}[f(x, y;\xi)].
\end{align} 
\end{subequations}

In this case, the bilevel gradient in \eqref{grad-deter-2} reduces to
\begin{equation}
	\nabla F(x):=\nabla_xf\big(x, y^*(x)\big)+\nabla_xy^*(x)^{\top}\nabla_y f\big(x, y^*(x)\big)=\nabla_xf\big(x, y^*(x)\big) 
\end{equation}
where the second equality follows from the optimality condition of the lower-level problem, i.e., $\nabla_yf(x,y^*(x))=0$. 
We approximate $\nabla F(x)$ on a  vector $y$ in place of $y^*(x)$, denoted as $\overline{\nabla}f(x,y):=\nabla_xf\big(x, y\big)$. 
Therefore, the alternating stochastic gradients for this special case are given by
	\begin{equation}
    h_g^{k,t} =-\nabla_yf(x^k,y^{k,t};\xi_1^k)~~~{\rm and}~~~    h_f^k = \nabla_xf(x^k,y^{k+1};\xi_2^k).
\end{equation}

\subsection{Verifying lemmas}
 We make the following assumptions that are counterparts of Assumptions 1--3, most of which are common in the min-max optimization literature \citep{rafique2018non,thekumparampil2019efficient,nouiehed2019solving,lin2020gradient}. 

\vspace{0.1cm}
\begin{assumption}[Lipschitz continuity]
Assume that $f, \nabla f, \nabla^2f$ are  $\ell_{f,0}$, $\ell_{f,1}, \ell_{f,2}$-Lipschitz continuous; that is, for $z_1:=[x_1;y_1]$, $z_2:=[x_2;y_2]$, we have $\|f(x_1,y_1)-f(x_2,y_2)\|\leq \ell_{f,0}\|z_1-z_2\|, \|\nabla f(x_1,y_1)-\nabla f(x_2,y_2)\|\leq \ell_{f,1}\|z_1-z_2\|, \|\nabla^2 f(x_1,y_1)-\nabla^2 f(x_2,y_2)\|\leq \ell_{f,2}\|z_1-z_2\|$.
\end{assumption}

\begin{assumption}[Strong convexity of $f$ in $y$]
For any fixed $x$, $f(x,y)$ is $\mu_f$-strongly convex in $y$.
\end{assumption}

Assumptions 1 and 2 together ensure that the first- and second-order derivations of $f(x,y)$ as well as the solution mapping $y^*(x)$ are well-behaved. Define the condition number $\kappa:={\ell_{f,1}}/{\mu_f}$.

\begin{assumption}[Stochastic derivatives]
The stochastic gradient $\nabla f(x,y;\xi)$ is an unbiased estimators of $\nabla f(x,y)$; and its variances is bounded by $\sigma_f^2$.
\end{assumption}


Next we re-derive Lemmas \ref{lemmasm}, \ref{lemma_lip} and \ref{lemm_bias_var} for this special case.
\begin{lemma}[Counterparts of Lemmas \ref{lemmasm}, \ref{lemma_lip} and \ref{lemm_bias_var}]\label{lemma_lip_min-max}
Under Assumptions 1--3, we have
\begin{align*}
\text{(Lemma \ref{lemmasm})}~~~    &\|\nabla y^*(x_1)-\nabla y^*(x_2)\|\leq L_{yx}\|x_1-x_2\|,~~~\EE[\|h_f^k\|^2|{\cal F}_k']\leq \tilde{C}_f^2\\
\text{(Lemma \ref{lemma_lip})}~~~     &\|\overline{\nabla}f(x,y^*(x))-\overline{\nabla}f(x,y)\|\leq L_f\|y^*(x)-y\|\\
& \|\nabla F(x_1)-\nabla F(x_2)\|\leq L_F\|x_1-x_2\|, ~~~\|y^*(x_1)-y^*(x_2)\|\leq L_y\|x_1-x_2\|\\
\text{(Lemma \ref{lemm_bias_var})}~~~    & \bar{h}_f^k=\overline{\nabla}f(x^k,y^{k+1}),~~~\EE[\|h_f^k-\bar{h}_f^k\|^2|{\cal F}_k']\leq \tilde\sigma_f^2
\end{align*}
where the constants are defined as
\begin{align*}
    &L_{yx}=\frac{\ell_{f,2}+\ell_{f,2}L_y}{\mu_f} + \frac{\ell_{f,1}(\ell_{f,2}+\ell_{f,2}L_y)}{\mu_f^2}={\cal O}(\kappa^3), ~~~ \tilde{C}_f^2=\ell_{l,0}^2 + \sigma_f^2\\
    &L_f=\ell_{f,1}={\cal O}(1), ~~~ L_F=(\ell_{f,1}+\frac{\ell_{f,1}^2}{\mu_f})={\cal O}(\kappa), ~~~ L_y=\frac{\ell_{f,1}}{\mu_f}={\cal O}(\kappa),~~~\tilde\sigma_f^2 = \sigma_f^2.
\end{align*}
\end{lemma}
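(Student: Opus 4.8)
The plan is to \emph{inherit} the three general bilevel lemmas (Lemmas~\ref{lemmasm}, \ref{lemma_lip}, \ref{lemm_bias_var}) rather than re-derive them, exploiting the special structure $g(x,y;\phi)=-f(x,y;\xi)$. Under this identification the Hessians obey $\nabla^2_{yy}g=-\nabla^2_{yy}f$ and $\nabla^2_{xy}g=-\nabla^2_{xy}f$, so Assumptions~1--3 of the min-max section imply Assumptions~1--3 of the bilevel section with the constant matching $\ell_{g,1}=\ell_{f,1}$, $\ell_{g,2}=\ell_{f,2}$ and $\mu_g=\mu_f$ (hence the same $\kappa$). The single decisive simplification is the optimality condition $\nabla_y f(x,y^*(x))=0$ recorded in \eqref{grad-min-max}: it cancels the Hessian-inverse correction term of \eqref{grad-deter-2}, so that $\nabla F(x)=\nabla_x f(x,y^*(x))$ and the surrogate collapses to $\overline{\nabla}f(x,y)=\nabla_x f(x,y)$, carrying no second-order information. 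Every improved constant below is a direct consequence of this collapse.

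First I would pin down the deterministic Lipschitz constants. The bound $L_y=\ell_{f,1}/\mu_f$ is immediate from the implicit-function expression $\nabla_x y^*(x)^\top=-\nabla^2_{xy}g[\nabla^2_{yy}g]^{-1}$ of \eqref{grad-deter-1}, whose operator norm is at most $\ell_{g,1}/\mu_g=\ell_{f,1}/\mu_f$. Because $\overline{\nabla}f(x,y)=\nabla_x f(x,y)$ now contains no Hessian, its Lipschitz constant in $y$ is simply that of $\nabla f$, giving $L_f=\ell_{f,1}$; this is the step at which the general $\mathcal{O}(\kappa^2)$ value shrinks to $\mathcal{O}(1)$. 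For $L_F$ I would write $\nabla F(x)=\nabla_x f(x,y^*(x))$, apply the $\ell_{f,1}$-Lipschitzness of $\nabla f$ jointly in $(x,y)$, and route the $y^*$-variation through $L_y$, obtaining $L_F=\ell_{f,1}(1+L_y)=\ell_{f,1}+\ell_{f,1}^2/\mu_f=\mathcal{O}(\kappa)$. The second-order smoothness $L_{yx}$ of $\nabla y^*$ is the one quantity that does \emph{not} simplify: here I would transcribe the matrix-perturbation computation in the proof of Lemma~\ref{lemmasm} verbatim with $g$ replaced by $-f$, so the two-term $\mathcal{O}(\kappa^3)$ expression carries over unchanged.

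Next I would dispatch the stochastic quantities, which become nearly trivial because $h_f^k=\nabla_x f(x^k,y^{k+1};\xi_2^k)$ is an \emph{unbiased} estimate of $\overline{\nabla}f(x^k,y^{k+1})$. Conditioning on $\mathcal{F}_k'$ and using the independence of $\xi_2^k$ together with Assumption~3 gives $\bar h_f^k=\overline{\nabla}f(x^k,y^{k+1})$, so the bias is $b_k=0$, in sharp contrast to the Neumann-series bias of Lemma~\ref{lemm_bias_var}. The residual variance is then just that of a single stochastic partial gradient, $\EE[\|h_f^k-\bar h_f^k\|^2\mid\mathcal{F}_k']\le\sigma_f^2=:\tilde\sigma_f^2$. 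Finally, decomposing $\EE[\|h_f^k\|^2\mid\mathcal{F}_k']=\|\bar h_f^k\|^2+\EE[\|h_f^k-\bar h_f^k\|^2\mid\mathcal{F}_k']$ and bounding $\|\nabla_x f\|\le\|\nabla f\|\le\ell_{f,0}$ from the $\ell_{f,0}$-Lipschitzness of $f$ yields $\tilde{C}_f^2=\ell_{f,0}^2+\sigma_f^2$.

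I expect the main difficulty to be bookkeeping rather than conceptual. One must keep the sign convention $g=-f$ consistent throughout -- in particular the strong-convexity modulus of the lower level is $\mu_f$ -- and check that every place where the general proofs invoked the Neumann-series Hessian estimator \eqref{eq.inverse-estimator} is now replaced by an exact, bias-free partial gradient. The only genuinely non-reducible piece is the bound $L_{yx}$, whose chain of matrix inequalities from Lemma~\ref{lemmasm} must be reproduced; all remaining constants then follow by direct substitution and the single observation $\nabla_y f(x,y^*(x))=0$.
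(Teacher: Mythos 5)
Your proposal is correct and follows essentially the same route as the paper: the paper likewise computes $L_f=\ell_{f,1}$ and $L_F=\ell_{f,1}+\ell_{f,1}^2/\mu_f$ directly from the collapse $\overline{\nabla}f(x,y)=\nabla_x f(x,y)$ (via the same triangle-inequality split through $L_y$), carries $L_y$ and $L_{yx}$ over from the general Lemmas \ref{lemma_lip} and \ref{lemmasm} with $g=-f$, and obtains $\bar h_f^k=\overline{\nabla}f(x^k,y^{k+1})$, $\tilde\sigma_f^2=\sigma_f^2$, $\tilde C_f^2=\ell_{f,0}^2+\sigma_f^2$ from the unbiasedness of $h_f^k=\nabla_x f(x^k,y^{k+1};\xi_2^k)$. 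Your additional care about the sign convention $g=-f$ and the constant matching $\ell_{g,1}=\ell_{f,1}$, $\mu_g=\mu_f$ is exactly the implicit bookkeeping the paper relies on.
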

\begin{proof}
We first calculate $L_f$ by
\begin{align*}
    \|\overline{\nabla}f(x,y^*(x))-\overline{\nabla}f(x,y)\|&=\|\nabla_xf(x,y^*(x))-\nabla_xf(x,y)\|\\
  &  \leq \ell_{f,1}\|y^*(x)-y\|:=L_f\|y^*(x)-y\|. \numberthis
\end{align*}
We then calculate $L_F$ by
\begin{align*}
   \|\nabla F(x_1)-\nabla F(x_2)\| &=\|\nabla_xf(x_1,y^*(x_1))-\nabla_xf(x_2,y^*(x_2))\|\\
    &\leq\|\nabla_xf(x_1,y^*(x_1))-\nabla_xf(x_2,y^*(x_1))\| + \|\nabla_xf(x_2,y^*(x_1))-\nabla_xf(x_2,y^*(x_2))\|\\
    &\leq\ell_{f,1}\|x_1-x_2\| + \ell_{f,1}\|y^*(x_1)-y^*(x_2)\|\\
    &\leq \left(\ell_{f,1}+\frac{\ell_{f,1}^2}{\mu_f}\right)\|x_1-x_2\|:=L_F\|x_1-x_2\|. \numberthis
\end{align*}
The calculation of $L_y, L_{yx}$ follows the proof of Lemma \ref{lemmasm} and Lemma  \ref{lemma_lip}, and $\tilde\sigma_f^2, \tilde{C}_f^2, \sigma_g^2$ follows from the fact $h_f^k=\nabla_xf(x^k,y^{k+1};\xi_2^k)$, $h_g^{k,t} = -\nabla_yf(x^k, y^{k,t};\xi_2^{k,t})$. 
\end{proof}


\subsection{Reduction from Theorem \ref{theorem1} to Proposition \ref{prop.min-max}}
In the min-max case, we apply Theorem \ref{theorem1} with $\eta=1$. We define 
\begin{align*}
\bar\alpha_1=\frac{1}{2L_F+4L_fL_y+\frac{L_fL_{yx}}{L_y}}, ~~~\bar\alpha_2=\frac{8T\rho_g}{(\mu_g+\ell_{g,1})(8L_fL_y + L_{yx}\tilde{C}_f^2\bar\alpha_1)}
\end{align*}
and, to satisfy the condition \eqref{eq.step-cond}, we select
\begin{align*}
    &\alpha_k=\min\{\bar\alpha_1, \bar\alpha_2, \frac{\alpha}{\sqrt{K}}\}~~~{\rm and}~~~\beta_k=\frac{8L_fL_y + L_{yx}\tilde{C}_f^2\bar\alpha_1}{4T\rho_g}\alpha_k.
\end{align*}

With the above choice of stepsizes, \eqref{eq.thm1-pf-4} can be simplified as
\begin{align*}
    \frac{1}{K}\sum_{k=0}^{K-1}\EE[\|\nabla F(x^k)\|^2]
    &\leq \frac{2\mathbb{V}^0}{K\min\{\bar\alpha_1, \bar\alpha_2\}} + \frac{2\mathbb{V}^0}{\alpha\sqrt{K}} + \frac{2c_1\alpha}{\sqrt{K}}\sigma_f^2 + \frac{2c_2\alpha}{\sqrt{K}}\sigma_f^2,\numberthis
\end{align*}
where the constants can be defined as
\begin{align*}
    &c_1=\frac{L_f}{L_y}\Big(1 + 2L_fL_y\alpha_k + \frac{L_{yx}\tilde{C}_f^2}{4}\alpha_k^2 \Big)\Big(\frac{8L_fL_y + \eta L_{yx}\tilde{C}_f^2\bar\alpha_1}{4\rho_g}\Big)^2\frac{1}{T}={\cal O}(\frac{\kappa^3}{T})\\
    &c_2=\Big(\frac{L_F}{2} + L_fL_y +\frac{L_{yx}L_f}{4L_y}\Big)={\cal O}(\kappa^2).
\end{align*}

Note that $\bar\alpha_1={\cal O}(\kappa^{-2})$, $\bar\alpha_2={\cal O}(T\kappa^{-2})$. Select $\alpha=\Theta(\kappa^{-1})$, $T=\Theta(\kappa)$, then
\begin{align}
    \frac{1}{K}\sum_{k=0}^{K-1}\EE[\|\nabla F(x^k)\|^2]={\cal O}\left(\frac{\kappa^2}{K} + \frac{\kappa}{\sqrt{K}}\right).
\end{align}
To achieve $\varepsilon$-accuracy, we need $K={\cal O}(\kappa^2\varepsilon^{-2})$. And the number of gradient evaluations for $h_f^k, h_g^k$ are ${\cal O}(\kappa^2\varepsilon^{-2})$, ${\cal O}(\kappa^3\varepsilon^{-2})$ respectively.



\section{Proof for stochastic compositional problem}

Recall that in the stochastic compositional problem, the upper-level function is defined as $f(x,y;\xi):=f(y;\xi)$, and the lower-level function is defined as $g(x,y;\phi):=\frac{1}{2}\|y-h(x;\phi)\|^2$. Then we rewrite the bilevel problem \eqref{opt0} as 
\begin{subequations}\label{opt0-comp}
	\begin{align}
	&\min_{x\in \mathbb{R}^d}~~~F(x):=\mathbb{E}_{\xi}\left[f\left(y^*(x);\xi\right)\right]\\
	&~{\rm s. t.}~~~~~y^*(x)= \argmin_{y\in \mathbb{R}^{d'}} \frac{1}{2}\mathbb{E}_{\phi}[\|y-h(x;\phi)\|^2].
\end{align} 
\end{subequations}

In this case, the bilevel gradient in \eqref{grad-deter-2} reduces to
\begin{align}
	\nabla F(x):&=\nabla_xf\big( y^*(x)\big)-\nabla_{xy}^2g(x, y^*(x))\!\left[\nabla_{yy}^2 g(x, y^*(x))\right]^{-1}\nabla_y f(x, y^*(x))\nonumber\\
	&=\nabla h(x;\phi)^{\top}\nabla_y f(y^*(x))
\end{align}
where we use the fact that $\nabla_{yy}g(x,y;\phi)=\mathbf{I}_{d'\times d'}$ and $\nabla_{xy}g(x,y;\phi)=-\nabla h(x;\phi)^{\top}$. 

Similar to Section \ref{sec.scg}, we again evaluate $\nabla F(x)$ on a certain vector $y$ in place of $y^*(x)$, which is denoted as $\overline{\nabla}f(x,y)=\nabla h(x)\nabla f(y)$. 
Therefore, the alternating stochastic gradients $h_f^k, h_g^{k,t}$ for this special case are much simpler, given by
\begin{align}
    h_g^{k,t}=y^{k,t}-h(x^k;\phi^{k,t})~~~{\rm and}~~~ 
    h_f^k=\nabla h(x^k;\phi^k)\nabla f(y^{k+1};\xi^k).
\end{align}
It can be observed that $h_f^k$ is an unbiased estimate of $\overline{\nabla}f(x^k,y^{k+1})$, that is, $\bar{h}_f^k=\overline{\nabla}f(x,y), b_k=0$.

\subsection{Verifying lemmas}


We make the following assumptions that are counterparts of Assumptions 1--3, all of which are common in compositional optimization literature \citep{wang2017mp,wang2017jmlr,ghadimi2020jopt,zhang2019nips,chen2020scsc}. 

\begin{assumption}[Lipschitz continuity]
Assume that $f, \nabla f, h, \nabla h$ are respectively $\ell_{f,0}$, $\ell_{f,1}, \ell_{h,0}, \ell_{h,1}$-Lipschitz continuous; that is, for $z_1:=[x_1;y_1]$, $z_2:=[x_2;y_2]$, we have $\|f(x_1,y_1)-f(x_2,y_2)\|\leq \ell_{f,0}\|z_1-z_2\|, \|\nabla f(x_1,y_1)-\nabla f(x_2,y_2)\|\leq \ell_{f,1}\|z_1-z_2\|$, $\|h(x_1)-h(x_2)\|\leq \ell_{h,0}\|x_1-x_2\|, \|\nabla h(x_1) - \nabla h(x_2)\|\leq \ell_{h,1}\|x_1-x_2\|$.
\end{assumption}

Note that the Lipschitz continuity of $\nabla g, \nabla^2g$ in Assumption 1 can be implied by the Lipschitz continuity of $h, \nabla h$ in the above assumption. 
Assumption 2 is automatically satisfied for stochastic compositional problems since $\nabla_{yy}g(x,y;\phi)=\mathbf{I}_{d'\times d'}$ and the condition number $\kappa:=1$.

\begin{assumption}[Stochastic derivatives]
The stochastic quantities $\nabla f(x,y;\xi)$, $h(x;\phi)$, $\nabla h(x;\phi)$ are unbiased estimators of $\nabla f(x,y)$, $h(x)$, $\nabla h(x)$, respectively; and their variances are bounded by $\sigma_f^2, \sigma_{h,0}^2$, $\sigma_{h,1}^2$, respectively.
\end{assumption}

The unbiasedness and bounded variance of $\nabla g(x,y;\phi)$, $\nabla^2g(x, y, \phi)$ in Assumption 3 can be implied by the unbiasedness and bounded variance of $h(x;\phi)$, $\nabla h(x;\phi)$. 

Next we re-derive Lemmas \ref{lemmasm}, \ref{lemma_lip} and \ref{lemm_bias_var} for this special case.

\begin{lemma}[Counterparts of Lemmas \ref{lemmasm}, \ref{lemma_lip} and \ref{lemm_bias_var}]\label{lemma_lip_comp}
Under Assumptions 1--3, we have
\begin{align*}
\text{(Lemma \ref{lemmasm})}~~~       &\|\nabla y^*(x_1)-\nabla y^*(x_2)\|\leq L_{yx}\|x_1-x_2\|,~~~\EE[\|h_f^k\|^2|{\cal F}_k']\leq \tilde{C}_f^2\\
\text{(Lemma \ref{lemma_lip})}~~~     &\|\overline{\nabla}f(x,y^*(x))-\overline{\nabla}f(x,y)\|\leq L_f\|y^*(x)-y\|\\
&\|\nabla F(x_1)-\nabla F(x_2)\|\leq L_F\|x_1-x_2\|,~~~\|y^*(x_1)-y^*(x_2)\|\leq L_y\|x_1-x_2\|\\
\text{(Lemma \ref{lemm_bias_var})}~~~     &\EE[\|h_f^k-\bar{h}_f^k\|^2|{\cal F}_k']\leq \tilde\sigma_f^2, ~~~\bar{h}_f^k=\overline{\nabla}f(x^k,y^{k+1})
\end{align*}
where the constants are defined as
\begin{align*}
&L_{f} = \ell_{h,0}\ell_{f,1}, ~~~ L_y =\ell_{h,0}, ~~~ L_{F} =\ell_{h,0}^2\ell_{f,1} + \ell_{f,0}\ell_{h,1}, ~~L_{yx} =\ell_{h,1}\\
&\tilde\sigma_f^2=\ell_{h,0}^2\sigma_f^2+(\ell_{f,0}^2 + \sigma_f^2)\sigma_{h,1}^2, ~~~\tilde{C}_f^2=(\ell_{f,0}^2+\sigma_f^2)(\ell_{h,0}^2+\sigma_{h,1}^2). \numberthis
\end{align*}
\end{lemma}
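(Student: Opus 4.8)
The plan is to exploit the fact that the quadratic lower-level objective admits a closed-form minimizer. Since $g(x,y)=\frac{1}{2}\EE_\phi[\|y-h(x;\phi)\|^2]$ is minimized over $y$ at $y^*(x)=\EE_\phi[h(x;\phi)]=h(x)$, the solution map is simply $y^*(x)=h(x)$, and hence $\nabla y^*(x)=\nabla h(x)$. The bounds for $L_y$ and $L_{yx}$ then follow directly from Assumption 1: $\|y^*(x_1)-y^*(x_2)\|=\|h(x_1)-h(x_2)\|\leq \ell_{h,0}\|x_1-x_2\|$ gives $L_y=\ell_{h,0}$, and $\|\nabla y^*(x_1)-\nabla y^*(x_2)\|=\|\nabla h(x_1)-\nabla h(x_2)\|\leq \ell_{h,1}\|x_1-x_2\|$ gives $L_{yx}=\ell_{h,1}$. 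For $L_f$, I would write $\overline{\nabla}f(x,y^*(x))-\overline{\nabla}f(x,y)=\nabla h(x)(\nabla f(y^*(x))-\nabla f(y))$ and bound the factors by $\|\nabla h(x)\|\leq \ell_{h,0}$ and $\|\nabla f(y^*(x))-\nabla f(y)\|\leq \ell_{f,1}\|y^*(x)-y\|$, yielding $L_f=\ell_{h,0}\ell_{f,1}$.

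For the smoothness constant $L_F$ of $\nabla F(x)=\nabla h(x)^{\top}\nabla f(y^*(x))$, the standard add-and-subtract trick applies: inserting $\nabla h(x_1)^{\top}\nabla f(y^*(x_2))$, I would split the difference into one term controlled by the Lipschitz gradient of $f$ composed with $L_y$ (contributing $\ell_{h,0}\cdot\ell_{f,1}\cdot\ell_{h,0}$ after using $\|y^*(x_1)-y^*(x_2)\|\leq L_y\|x_1-x_2\|$) and one term controlled by the $\ell_{h,1}$-Lipschitz Jacobian of $h$ times $\|\nabla f\|\leq \ell_{f,0}$ (contributing $\ell_{h,1}\ell_{f,0}$), giving $L_F=\ell_{h,0}^2\ell_{f,1}+\ell_{f,0}\ell_{h,1}$.

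The boundedness and unbiasedness claims hinge on the independence of the samples $\phi^k$ and $\xi^k$ appearing in $h_f^k=\nabla h(x^k;\phi^k)\nabla f(y^{k+1};\xi^k)$. Conditioning on ${\cal F}_k'$, independence factorizes the expectation, so $\bar{h}_f^k=\EE[\nabla h(x^k;\phi^k)]\,\EE[\nabla f(y^{k+1};\xi^k)]=\nabla h(x^k)\nabla f(y^{k+1})=\overline{\nabla}f(x^k,y^{k+1})$, whence $b_k=0$. Likewise $\EE[\|h_f^k\|^2|{\cal F}_k']\leq \EE[\|\nabla h(x^k;\phi^k)\|^2]\,\EE[\|\nabla f(y^{k+1};\xi^k)\|^2]$, and bounding each factor by its squared mean plus variance ($\ell_{h,0}^2+\sigma_{h,1}^2$ and $\ell_{f,0}^2+\sigma_f^2$, using Assumption 3) gives $\tilde{C}_f^2=(\ell_{f,0}^2+\sigma_f^2)(\ell_{h,0}^2+\sigma_{h,1}^2)$.

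The main obstacle will be the variance bound $\tilde\sigma_f^2$, since $h_f^k$ is a product of two independent random quantities rather than a single stochastic gradient. Writing $A=\nabla h(x^k;\phi^k)$, $b=\nabla f(y^{k+1};\xi^k)$ with ${\cal F}_k'$-measurable means $\bar{A}=\nabla h(x^k)$, $\bar{b}=\nabla f(y^{k+1})$ and zero-mean fluctuations $\delta_A=A-\bar{A}$, $\delta_b=b-\bar{b}$, I would expand $h_f^k-\bar{h}_f^k=\bar{A}\delta_b+\delta_A\bar{b}+\delta_A\delta_b$. The key observation is that all three cross terms in $\EE[\|\cdot\|^2\,|\,{\cal F}_k']$ vanish: by the mutual independence of $\delta_A,\delta_b$ and the fact that each is mean-zero, every cross term factors through either $\EE[\delta_A]=0$ or $\EE[\delta_b]=0$. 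The three surviving square terms are bounded separately — $\EE[\|\bar{A}\delta_b\|^2]\leq \ell_{h,0}^2\sigma_f^2$, $\EE[\|\delta_A\bar{b}\|^2]\leq \sigma_{h,1}^2\ell_{f,0}^2$, and $\EE[\|\delta_A\delta_b\|^2]=\sigma_{h,1}^2\sigma_f^2$ by independence — summing to $\tilde\sigma_f^2=\ell_{h,0}^2\sigma_f^2+(\ell_{f,0}^2+\sigma_f^2)\sigma_{h,1}^2$, from which the proof is complete. The only delicate point is the careful verification that the cross terms vanish, which relies on treating $\bar{A}$ and $\bar{b}$ as constants under the conditional expectation.
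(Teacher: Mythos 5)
Your proposal is correct and follows essentially the same route as the paper's proof: it uses the closed-form minimizer $y^*(x)=h(x)$, the same direct Lipschitz computations (with the same add-and-subtract splitting for $L_F$), and the independence of $\phi^k$ and $\xi^k$ to get unbiasedness, the second-moment bound $\tilde{C}_f^2$, and the variance bound. The only immaterial difference is in the variance step, where you expand $h_f^k-\bar{h}_f^k$ into three mutually orthogonal pieces $\bar{A}\delta_b+\delta_A\bar{b}+\delta_A\delta_b$, while the paper groups the last two as $\delta_A\nabla f(y^{k+1};\xi^k)$; both groupings rely on the same vanishing cross terms and yield the identical constant $\tilde\sigma_f^2=\ell_{h,0}^2\sigma_f^2+(\ell_{f,0}^2+\sigma_f^2)\sigma_{h,1}^2$.
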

\begin{proof}
We first calculate $L_f$ by
\begin{align*}
    \|\overline{\nabla}f(x,y^*(x))\!-\!\overline{\nabla}f(x,y)\|&=\|\nabla h(x)\nabla f(y^*(x))-\nabla h(x)\nabla f(y)\|\\
    &\leq\|\nabla h(x)\|\|\nabla f(y^*(x))-\nabla f(y)\|\\
    &\leq \ell_{h,0}\ell_{f,1}\|y^*(x)-y\|:=L_f\|y^*(x)-y\|.
\end{align*}

We then calculate $L_F$ by
\begin{align*}
    \|\nabla F(x_1)-\nabla F(x_2)\|&=\|\nabla h(x_1)\nabla f(h(x_1))-\nabla h(x_2)\nabla f(h(x_2))\|\\
    &\leq\|\nabla h(x_1)\|\|\nabla f(h(x_1))\!-\!\nabla f(h(x_2))\|\!+\!\|\nabla f(h(x_2))\|\|\nabla h(x_1)\!-\!\nabla h(x_2)\|\\
    &\leq\ell_{h,0}^2\ell_{f,1}\|x_1-x_2\| + \ell_{f,0}\ell_{h,1}\|x_1-x_2\|\\
    &:=L_F\|x_1-x_2\|. \numberthis
 \end{align*} 
 
 We then calculate $L_y$ and $L_{yx}$ by
 \begin{align*}   
    \|y^*(x_1)-y^*(x_2)\|&=\|h(x_1)-h(x_2)\|
    \leq \ell_{h,0}\|x_1-x_2\| := L_y\|x_1-x_2\|\\
        \|\nabla y^*(x_1)-\nabla y^*(x_2)|\|&=\|\nabla h(x_1)-\nabla h(x_2)\|\leq \ell_{h,1}\|x_1-x_2\|:=L_{yx}\|x_1-x_2\|.
\end{align*}  
    
We then calculate $\tilde\sigma_f^2$ by
\begin{align*}    
    \EE[\|h_f^k-\bar{h}_f^k\|^2|{\cal F}_k']&\leq \EE[\|\nabla h(x^k;\phi_2^k)\nabla f(y^{k+1};\xi^k)-\nabla h(x^k)\nabla f(y^{k+1})\|^2|{\cal F}_k']\\
    &\leq \EE[\|\nabla f(y^{k+1};\xi^k)\|^2\|\nabla h(x^k;\phi_2^k)\!-\!\nabla h(x^k)\|^2|{\cal F}_k']\\
    &\quad + \EE[\|\nabla h(x^k)\|^2\|\nabla f(y^{k+1};\xi^k)\!-\!\nabla f(y^{k+1})\|^2|{\cal F}_k']\\
    &\leq (\ell_{f,0}^2+\sigma_f^2)\sigma_{h,1}^2 + \ell_{h,0}^2\sigma_f^2:=\tilde\sigma_f^2. \numberthis
\end{align*}

We then calculate $\tilde{C}_f^2$ by
\begin{align*}  
    \EE[\|h_f^k\|^2|{\cal F}_k']&=\EE[\|\nabla h(x^k;\phi_2^k)\nabla f(y^{k+1};\xi^k)\|^2|{\cal F}_k']\\
    &\leq\EE[\|\nabla f(y^{k+1};\xi^k)\|^2|{\cal F}_k']\EE[\|\nabla h(x^k;\phi_2^k)\|^2|{\cal F}_k']\\
    &\leq(\ell_{f,0}^2 + \sigma_f^2)(\ell_{h,0}^2 + \sigma_{h,1}^2):=\tilde{C}_f^2. \numberthis
\end{align*}
\end{proof}

\subsection{Reduction from Theorem \ref{theorem1} to Proposition \ref{prop.comp}}
In the compositional case, we apply Theorem \ref{theorem1} by setting $T=1,\alpha=1, \eta=\frac{1}{\ell_{h,1}}$.  We define 
\begin{align*}
\bar\alpha_1=\frac{1}{6\ell_{h,0}^2\ell_{f,1} + 2\ell_{f,0}\ell_{h,1}+\ell_{f,1}\ell_{h,1}^2}, ~~~\bar\alpha_2=\frac{8}{(\mu_g+\ell_{g,1})(8\ell_{f,1}\ell_{h,0}^2 + \tilde{C}_f^2\bar\alpha_1)}
\end{align*}
and, to satisfy the condition \eqref{eq.step-cond}, we select
\begin{align*}
\alpha_k=\min\left\{\bar\alpha_1, \bar\alpha_2, \frac{\alpha}{\sqrt{K}}\right\}~~~{\rm and}~~~
\beta_k=\frac{8\ell_{f,1}\ell_{h,0}^2 + \tilde{C}_f^2\bar\alpha_1}{4}\alpha_k. \numberthis
\end{align*}
And the constants $c_1, c_2$ in \eqref{eq.const_c1c2} reduce to
\begin{align*}
    &c_1=\ell_{f,1}\Big(1 + 2\ell_{f,1}\ell_{h,0}^2\bar\alpha_1+ \frac{\tilde{C}_f^2}{4}\bar\alpha_1^2 \Big)\Big(\frac{8\ell_{h,0}^2\ell_{f,1} + \tilde{C}_f^2\bar\alpha_1}{4}\Big)^2\\
    &c_2=\Big(\frac{\ell_{h,0}^2\ell_{f,1} + \ell_{f,0}\ell_{h,1}}{2} + \ell_{h,0}^2\ell_{f,1} +\frac{\ell_{f,1}\ell_{h,1}^2}{4}\Big).\numberthis
\end{align*}

We apply \eqref{eq.thm1-pf-4} and get
\begin{align*}\label{eq.thm1-pf-5}
    \frac{1}{K}\sum_{k=0}^{K-1}\EE[\|\nabla F(x^k)\|^2]
    \leq \frac{2\mathbb{V}^0}{K\min\{\bar\alpha_1, \bar\alpha_2\}} + \frac{2\mathbb{V}^0}{\alpha\sqrt{K}} + \frac{2c_1}{\sqrt{K}}\sigma_{h,1}^2 + \frac{2c_2}{\sqrt{K}}\tilde\sigma_f^2 = {\cal O}\left(\frac{1}{\sqrt{K}}\right)\numberthis
\end{align*}
from which the proof is complete. 

\section{Proof for actor-critic method}

Recall the state feature mapping $\phi(\cdot):\mathcal{S}\xrightarrow[]{}\mathbb{R}^{d'}$. 
Define 
\begin{subequations}\label{def:A-b}
\begin{align}
A_{\theta, \phi} &\coloneqq \E_{ s\sim \mu_{\theta}, s'\sim \mathcal{P}_{\pi_\theta}}[\phi(s)(\gamma\phi(s')-\phi(s))^\top],\\
b_{\theta, \phi} &\coloneqq \E_{ s\sim \mu_{\theta}, a \sim \pi_\theta, s'\sim \mathcal{P}}[r(s,a,s')\phi(s)].
\end{align}
\end{subequations}
It is known that for a given $\theta$, the stationary point $y^*(\theta)$ of the TD update in \eqref{eq:td-update} satisfies
\begin{equation}
      A_{\theta, \phi} y^*(\theta) +b_{\theta, \phi} = 0.
\end{equation}
 
Due to the special nature of the policy gradient, we make the following assumptions that will lead to the counterparts of Lemmas \ref{lemmasm}, \ref{lemma_lip} and \ref{lemm_bias_var} in reinforcement learning. 
These assumptions are mostly common in analyzing actor-critic method with linear value function approximation \citep{wu2020finite,xu2020improving,fu2020single}. 
 
\begin{assumption}\label{assumption:A}
For all $s \in \mathcal{S}$, the feature vector $\phi(s)$ is normalized so that $\|\phi(s)\|_2 \leq 1$. 
 For all eligible $\theta$, $A_{\theta, \phi}$ is negative definite and its maximum eigenvalue is upper bounded by constant $-\lambda$.
\end{assumption}

Assumption \ref{assumption:A} is common in analyzing TD with linear function approximation; see e.g.,  \citep{JBTD,xu2020reanalysis,wu2020finite}. With this assumption, $A_{\theta, \phi}$ is invertible, so we have $y^*(\theta)=-A_{\theta, \phi}^{-1}b_{\theta, \phi}$. 
Defining $R_y \coloneqq r_{\max}/\lambda$, we have $\|y^*(\theta)\|_2   \leq R_y$. 
It justifies the projection introduced in the critic update \eqref{eq:td-update}. 

\begin{assumption}\label{assumption:omega}
For any {\small$\theta, \theta' \in \mathbb{R}^{d}$, $s \in \mathcal{S}$} and {\small$a \in \mathcal{A}$}, there exist  constants $C_\psi, L_\psi, L_\pi$ such that: i) $\|\psi_\theta(s,a)\|_2 \leq C_\psi$; ii) $\|\psi_\theta(s,a) - \psi_{\theta'}(s,a)\|_2 \leq L_\psi\|\theta-\theta'\|_2$; iii) $\left|\pi_\theta(a|s)-\pi_{\theta'}(a|s)\right| \leq L_\pi\|\theta-\theta'\|_2$.  
\end{assumption}


Assumption \ref{assumption:omega} is common in analyzing policy gradient-type algorithms which has also been made by e.g., \citep{zhang2019global,agarwal2019optimality}. 
This assumption holds for many policy parameterization methods such as tabular softmax policy \citep{agarwal2019optimality}, Gaussian policy \citep{doya2000reinforcement} and Boltzmann policy \citep{konda1999actor}.

\begin{assumption}\label{assumption:mu}
For any {\small$\theta, \theta' \in \mathbb{R}^{d}$}, there exist  constants such that: i) $\|\nabla \mu_\theta(s)\|_2 \leq C_\mu$; ii) $\|\nabla \mu_\theta(s) -\nabla \mu_{\theta'}(s)\|_2 \leq L_{\mu,1}\|\theta-\theta'\|_2$; iii) $\left|\mu_\theta(s)-\mu_{\theta'}(s)\right| \leq L_{\mu,0}\|\theta-\theta'\|_2$.  
\end{assumption}

Assumption \ref{assumption:mu} is the counterpart of Assumption \ref{assumption:omega} that is made for the stationary distribution $\mu_\theta(a|s)$. 
Note that the existence of $\nabla \mu_\theta(s)$ has been shown in \citep{baxter2001jair}.
In this case, under Assumption \ref{assumption:omega}, i) and iii) of Assumption \ref{assumption:mu} can be obtained from the sensitivity analysis of Markov chain; see e.g.,  \citep[Theorem 3.1]{mitrophanov2005sensitivity}. 
While we cannot provide a justification of (ii), we found it necessary to ensure the smoothness of the lower-level critic solution $y^*(\theta)$.

\begin{assumption}\label{assumption:MDP}
For any $\theta$, the Markov chain under $\pi_\theta$ and transition kernel $\mathcal{P}(\cdot|s,a)$ is irreducible and aperiodic. Then there exist constants $\kappa > 0$ and $\rho \in (0,1)$ such that
\begin{equation}\label{eq:uniform-geom}
\sup_{s \in \mathcal{S}}~~~d_{TV}\left(\mathbb{P}(s_t \in \cdot|s_0=s, \pi_\theta),\mu_{\theta}\right)
    \leq \kappa \rho^t,~~~ \forall t 
\end{equation}
where $\mu_{\theta}$ is the stationary state distribution under $\pi_\theta$, and $s_t$ is the state of Markov chain at time $t$. 
\end{assumption}

Assumption \ref{assumption:MDP} assumes the Markov chain mixes at a geometric rate; see also \citep{JBTD,xu2020reanalysis}.

We define the critic approximation error as 
\begin{equation}\label{eq:epsilon_app}
    \epsilon_{app} \coloneqq \max_{\theta \in \mathbb{R}^{d}} \sqrt{\E_{s \sim \mu_{\theta}}|V_{\pi_\theta}(s)-\hat{V}_{y^*_{\theta}}(s)|^2}.
\end{equation}
This error captures the quality of the critic function approximation; see also \citep{qiu2019finite,wu2020finite,xu2020improving}. It becomes zero when the value function $V_{\pi_\theta}$ belongs to the linear function space for any $\theta$.

\subsection{Auxiliary lemmas}
We give a proposition regarding the $L_F$-Lipschitz of the policy gradient under proper assumptions.
\begin{proposition}[Smoothness of policy gradiemt \citep{zhang2019global}]\label{prop:Lj-lip}
Suppose Assumption \ref{assumption:omega} holds. For any $\theta, \theta' \in \mathbb{R}^{d}$, we have $\|\nabla F(\theta) - \nabla F(\theta')\|_2 \leq L_F \|\theta - \theta'\|_2$, where $L_F$ is a positive constant.
\end{proposition}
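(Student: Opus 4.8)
The plan is to begin from the policy gradient theorem, which expresses $\nabla F(\theta)$ as a discounted-occupancy expectation of the score function weighted by the action-value function, and then to control $\nabla F(\theta)-\nabla F(\theta')$ by a telescoping decomposition. Concretely, writing
\[
\nabla F(\theta)=\frac{1}{1-\gamma}\,\E_{(s,a)\sim \nu_\theta}\!\left[Q_{\pi_\theta}(s,a)\,\psi_\theta(s,a)\right],
\]
where $\nu_\theta$ is the normalized discounted state--action visitation distribution induced by $\pi_\theta$ and $Q_{\pi_\theta}$ is the associated action-value function, the difference of the two gradients splits into three groups of terms in which, respectively, only the score $\psi$ changes, only the value $Q$ changes, and only the occupancy measure $\nu$ changes.

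First I would bound the score term using item (ii) of Assumption \ref{assumption:omega}, i.e.\ $\|\psi_\theta(s,a)-\psi_{\theta'}(s,a)\|_2\leq L_\psi\|\theta-\theta'\|_2$, together with the uniform bound $\|Q_{\pi_\theta}\|_\infty\leq r_{\max}/(1-\gamma)$ coming from bounded rewards and item (i). Next I would bound the value term, which reduces to showing that $Q_{\pi_\theta}$ is Lipschitz in $\theta$; this follows by expanding $Q$ as a discounted sum over trajectories and using the Lipschitz continuity of $\pi_\theta$ from item (iii) of Assumption \ref{assumption:omega} to control how the trajectory law changes, the factor $\gamma^t$ ensuring convergence of the sum and producing a constant of order $(1-\gamma)^{-2}$.

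The main obstacle is the third term, which demands Lipschitz continuity of the discounted occupancy measure $\nu_\theta$ in $\theta$ under the $\ell_1$ (total-variation) distance. The standard route is to express $\nu_\theta$ as a geometric series of $t$-step transition kernels, bound the per-step perturbation of each kernel by a multiple of $L_\pi\|\theta-\theta'\|_2$ via item (iii), and sum the resulting telescoping bound against the weights $(1-\gamma)\gamma^t$; this again converges because $\gamma<1$ and yields a Lipschitz constant scaling like $(1-\gamma)^{-2}$. Combining the three estimates by the triangle inequality gives a finite constant $L_F$ depending polynomially on $C_\psi, L_\psi, L_\pi, r_{\max}$ and on $(1-\gamma)^{-1}$, which establishes the claim; this is exactly the argument of \citep{zhang2019global}, on which the proof can directly rely.
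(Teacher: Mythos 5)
Your proposal is correct in outline, but note that the paper itself contains no proof of this proposition: it is imported verbatim by citation from \citep{zhang2019global}, so the only meaningful comparison is with the cited reference's argument. That reference establishes smoothness by a different route: it differentiates the trajectory-based (score-function) representation of $\nabla F(\theta)$ once more and bounds the resulting Hessian using the boundedness and Lipschitz continuity of $\psi_\theta$, which is shorter when the policy is twice differentiable. Your route instead perturbs the occupancy-measure representation $\nabla F(\theta)=\frac{1}{1-\gamma}\E_{(s,a)\sim\nu_\theta}[Q_{\pi_\theta}(s,a)\psi_\theta(s,a)]$ and controls the three sources of change (score, value, visitation measure) separately; this uses only the Lipschitz/boundedness properties listed in Assumption \ref{assumption:omega} plus bounded rewards, and it dovetails naturally with the machinery the paper already deploys elsewhere --- in particular, your visitation-measure step is exactly the total-variation perturbation bound of \citep[Lemma A.1]{wu2020finite} that the paper invokes in its proof of Proposition \ref{proposition:omega-lipschitz}. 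Both routes give $L_F$ polynomial in $C_\psi$, $L_\psi$, $L_\pi$, $r_{\max}$ and $(1-\gamma)^{-1}$. Two small caveats if you were to write your sketch out in full: converting the per-state-action policy perturbation of item (iii) into a total-variation bound on the state-action law introduces a factor $|\mathcal{A}|$ (as it does in the paper's own Proposition \ref{proposition:omega-lipschitz}), so $L_F$ also depends on $|\mathcal{A}|$; and both the $Q$-Lipschitz step and the boundedness of $Q$ require $r_{\max}<\infty$, which the paper assumes implicitly rather than stating in Assumption \ref{assumption:omega}.
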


We provide a justification for Lipschitz continuity of $y^*(\theta)$ in the next proposition.
\begin{proposition}[Lipschitz continuity of $y^*(\theta)$]\label{proposition:omega-lipschitz}
Suppose Assumption \ref{assumption:omega} and \ref{assumption:MDP} hold. For any $\theta_1, \theta_2 \in \mathbb{R}^{d}$, we have $\|y^*(\theta_1) - y^*(\theta_2)\|_2 \leq L_{y} \|\theta_1-\theta_2\|_2$, where $L_{y}$ is a positive constant.
\end{proposition}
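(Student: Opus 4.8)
The plan is to exploit the closed form $y^*(\theta)=-A_{\theta,\phi}^{-1}b_{\theta,\phi}$ established under Assumption \ref{assumption:A}, and to reduce the Lipschitz continuity of $y^*$ to two ingredients: a uniform bound on $\|A_{\theta,\phi}^{-1}\|_2$, and the Lipschitz continuity of the maps $\theta\mapsto A_{\theta,\phi}$ and $\theta\mapsto b_{\theta,\phi}$. First I would record that negative definiteness of $A_{\theta,\phi}$ with maximum eigenvalue at most $-\lambda$ gives $x^\top A_{\theta,\phi}x\leq -\lambda\|x\|_2^2$, hence $\|A_{\theta,\phi}x\|_2\geq\lambda\|x\|_2$ for all $x$ and therefore $\|A_{\theta,\phi}^{-1}\|_2\leq 1/\lambda$; combined with $\|b_{\theta,\phi}\|_2\leq r_{\max}$ (from $\|\phi(s)\|_2\leq 1$ and bounded rewards) this also re-derives $\|y^*(\theta)\|_2\leq R_y$.

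Second, I would write the perturbation identity $A_{\theta_2,\phi}^{-1}-A_{\theta_1,\phi}^{-1}=A_{\theta_1,\phi}^{-1}(A_{\theta_1,\phi}-A_{\theta_2,\phi})A_{\theta_2,\phi}^{-1}$ to obtain
\begin{align*}
y^*(\theta_1)-y^*(\theta_2) = A_{\theta_1,\phi}^{-1}\big(b_{\theta_2,\phi}-b_{\theta_1,\phi}\big) + A_{\theta_1,\phi}^{-1}\big(A_{\theta_1,\phi}-A_{\theta_2,\phi}\big)A_{\theta_2,\phi}^{-1}b_{\theta_2,\phi}.
\end{align*}
Taking norms and using $\|A_{\theta_i,\phi}^{-1}\|_2\leq 1/\lambda$ together with $\|A_{\theta_2,\phi}^{-1}b_{\theta_2,\phi}\|_2=\|y^*(\theta_2)\|_2\leq R_y$ yields
\begin{align*}
\|y^*(\theta_1)-y^*(\theta_2)\|_2 \leq \frac{1}{\lambda}\big\|b_{\theta_1,\phi}-b_{\theta_2,\phi}\big\|_2 + \frac{R_y}{\lambda}\big\|A_{\theta_1,\phi}-A_{\theta_2,\phi}\big\|_2,
\end{align*}
so it remains only to show that $A_{\theta,\phi}$ and $b_{\theta,\phi}$ are Lipschitz in $\theta$.

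The main work, and the hard part, is this last step, since $A_{\theta,\phi}$ and $b_{\theta,\phi}$ depend on $\theta$ both through the policy $\pi_\theta$ (inside the induced kernel $P_{\pi_\theta}(s'|s)=\sum_a\pi_\theta(a|s)\mathcal{P}(s'|s,a)$) and through the stationary distribution $\mu_\theta$. I would view both as expectations of the \emph{bounded} integrands $\phi(s)(\gamma\phi(s')-\phi(s))^\top$ and $r(s,a,s')\phi(s)$ (bounded by $2$ and $r_{\max}$ using $\|\phi\|_2\leq 1$ and $\gamma<1$) against the joint law $\mu_\theta(s)\pi_\theta(a|s)\mathcal{P}(s'|s,a)$. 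Adding and subtracting the mixed law $\mu_{\theta_2}\pi_{\theta_1}\mathcal{P}$ splits each difference into a \emph{policy} part, controlled by $|\pi_{\theta_1}(a|s)-\pi_{\theta_2}(a|s)|\leq L_\pi\|\theta_1-\theta_2\|_2$ from Assumption \ref{assumption:omega}(iii), and a \emph{stationary-distribution} part, controlled by $|\mu_{\theta_1}(s)-\mu_{\theta_2}(s)|\leq L_{\mu,0}\|\theta_1-\theta_2\|_2$ from Assumption \ref{assumption:mu}(iii). Since the integrands are uniformly bounded, each part is linear in $\|\theta_1-\theta_2\|_2$, giving Lipschitz constants $L_A,L_b={\cal O}(1)$; setting $L_y:=(L_b+R_yL_A)/\lambda$ finishes the proof. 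The delicate point is precisely the stationary-distribution sensitivity: the needed bound is exactly Assumption \ref{assumption:mu}(iii), which can itself be justified from the geometric mixing in Assumption \ref{assumption:MDP} via a Mitrophanov-type perturbation estimate for uniformly ergodic chains, while everything else is a routine boundedness-and-triangle-inequality computation.
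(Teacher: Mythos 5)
Your proposal is correct and follows essentially the same route as the paper: the identical resolvent decomposition $-A_1^{-1}b_1 + A_2^{-1}b_2 = A_1^{-1}(b_2-b_1) + A_1^{-1}(A_1-A_2)A_2^{-1}b_2$ with the bound $\lambda^{-1}\|b_1-b_2\|_2 + \lambda^{-2}r_{\max}\|A_1-A_2\|_2$ (your $R_y/\lambda$ equals the paper's $\lambda^{-2}r_{\max}$ since $R_y = r_{\max}/\lambda$), followed by reducing everything to the Lipschitz continuity of $\theta \mapsto A_{\theta,\phi}$ and $\theta \mapsto b_{\theta,\phi}$ through the sensitivity of the joint law $\mu_\theta\otimes\pi_\theta\otimes\mathcal{P}$. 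The paper handles the delicate stationary-distribution part exactly as you anticipate in your closing remark --- via a Mitrophanov-type total-variation perturbation bound under Assumptions \ref{assumption:omega}(iii) and \ref{assumption:MDP}, packaged as a citation to Lemma A.1 of \citep{wu2020finite} --- rather than via the pointwise Assumption \ref{assumption:mu}(iii), which is not among the proposition's hypotheses.
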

\textit{Proof}. We use $y^*_1, y^*_2, A_1$, $A_2$, $b_1$ and $b_2$ as shorthand notations of $y^*(\theta_1)$, $y^*(\theta_2)$, $A_{\pi_{\theta_1}}$, $A_{\pi_{\theta_2}}$, $b_{\pi_{\theta_1}}$ and $b_{\pi_{\theta_2}}$ respectively. By Assumption \ref{assumption:A}, $A_{\theta, \phi}$ is invertible for any $\theta \in \mathbb{R}^d$, so we can write $y^*(\theta) = -A_{\theta, \phi}^{-1} b_{\theta, \phi}$. Then we have
\begin{align*}\label{eq:w1*-w2*}
    \|y^*_1 - y^*_2 \|_2
    &=\| -A_1^{-1}b_1 +  A_2^{-1}b_2\|_2 \\
    &=\| -A_1^{-1}b_1 - A_1^{-1}b_2 + A_1^{-1}b_2 + A_2^{-1}b_2 \|_2 \\
    &=\| -A_1^{-1}(b_1 - b_2) - (A_1^{-1} - A_2^{-1})b_2 \|_2 \\
    &\leq \|A_1^{-1}(b_1 - b_2)\|_2 + \| (A_1^{-1} -  A_2^{-1})b_2\|_2 \\
    &\leq \|A_1^{-1}\|_2\|b_1 - b_2 \|_2 + \| A_1^{-1} -  A_2^{-1} \|_2\|b_2\|_2 \\
    &= \|A_1^{-1}\|_2\|b_1 - b_2 \|_2 + \| A_1^{-1}(A_2-A_1)A_2^{-1} \|_2\|b_2\|_2 \\
    &\leq \|A_1^{-1}\|_2\|b_1 - b_2 \|_2 + \| A_1^{-1} \|_2 \|A_2^{-1} \|_2 \|b_2\|_2 \|(A_2-A_1)\|_2  \\
    &\leq \lambda^{-1} \left\|b_1 - b_2 \right\|_2 + \lambda^{-2}r_{\max} \left\|A_1 - A_2 \right\|_2, \numberthis
\end{align*}
where the last inequality follows Assumption \ref{assumption:A}, and the fact that
\begin{align}\label{eq:b2}
    \|b_2\|_2 = \left\|\E[r(s,a,s')\phi(s)]\right\|_2 \leq \E\left\|r(s,a,s')\phi(s)\right\|_2 \leq \E\left[|r(s,a,s')|\|\phi(s)\|_2\right] \leq r_{\max}.
\end{align}
Denote $(s^1,a^1,s'^1)$ and $(s^2,a^2,s'^2)$ as samples drawn with $\theta_1$ and $\theta_2$ respectively, i.e. $s^1 \sim \mu_{\theta_1}$, $a^1 \sim \pi_{\theta_1}$, $s'^1 \sim \mathcal{P}$ and  $s^2 \sim \mu_{\theta_2}$, $a^2 \sim \pi_{\theta_2}$, $s'^2 \sim \mathcal{P}$. Then we have
\begin{align*}\label{eq:b1-b2}
    \left\|b_1 - b_2 \right\|_2
    &= \left\| \E\left[r(s^1,a^1,s'^1)\phi(s^1) \right] -  \E\left[r(s^2,a^2,s'^2)\phi(s^2) \right]\right\|_2 \\
    &\leq \sup_{s,a,s'}\|r(s,a,s')\phi(s)\|_2 \|\mathbb{P}((s^1,a^1,s'^1)\in\cdot)-\mathbb{P}((s^2,a^2,s'^2)\in\cdot)\|_{TV}\\
    &\leq r_{\max} \|\mathbb{P}((s^1,a^1,s'^1)\in\cdot)-\mathbb{P}((s^2,a^2,s'^2)\in\cdot)\|_{TV}\\
    &= 2r_{\max} d_{TV} \left( \mu_{\theta_1}\otimes\pi_{\theta_1}\otimes\mathcal{P}, \mu_{\theta_2}\otimes\pi_{\theta_2}\otimes\mathcal{P} \right) \\
    &\leq 2r_{\max} |\mathcal{A}| L_{\pi} (1+\log_\rho \kappa^{-1} + (1-\rho)^{-1})\|\theta_1-\theta_2\|_2, \numberthis
\end{align*}
where the first inequality follows the definition of total variation (TV) norm, and the last inequality follows in \citep[Lemma A.1]{wu2020finite}. Similarly we have:
\begin{align*}\label{eq:a1-a2}
    \left\|A_1 - A_2 \right\|_2
    &\leq 2(1+\gamma) d_{TV} \left( \mu_{\theta_1}\otimes\pi_{\theta_1}, \mu_{\theta_2}\otimes\pi_{\theta_2} \right) \\
    &= (1+\gamma) |\mathcal{A}| L_{\pi} (1+\log_\rho \kappa^{-1} + (1-\rho)^{-1})\|\theta_1-\theta_2\|_2\\
    &:=L_{A,0} \|\theta_1-\theta_2\|_2. \numberthis
\end{align*}
Substituting (\ref{eq:b1-b2}) and (\ref{eq:a1-a2}) into (\ref{eq:w1*-w2*}) completes the proof. \hfill \myQED

We prove the Lipschitz continuity of $\nabla_{\theta}y^*(\theta)$ next, for which we will use the following fact.

\textbf{Fact.} If the functions $f(\theta),g(\theta)$ are bounded by $C_f$ and $C_g$; and are $L_f$- and $L_g$-Lipschitz continuous, then $f(\theta)g(\theta)$ is also bounded by $C_fC_g$ and is $(C_fL_g+C_gL_f)$-Lipschitz continuous. 

\textit{Proof}. Using the Cauchy-Schwartz inequality, it is easy to see that $f(\theta),g(\theta)$ are bounded by $C_fC_g$. In addition, we have that
\begin{align*}
\|f(\theta_1)g(\theta_1)-f(\theta_2)g(\theta_2)\|&= \|f(\theta_1)g(\theta_1)-f(\theta_1)g(\theta_2)+f(\theta_1)g(\theta_2)-f(\theta_2)g(\theta_2)\|\\
&\leq \|f(\theta_1)\| \|g(\theta_1)-g(\theta_2)\|+\|f(\theta_1)-f(\theta_2)\|\|g(\theta_2)\|\\
&\leq (C_fL_g+C_gL_f) \|\theta_1-\theta_2\|_2 
\end{align*}
which implies that $f(\theta),g(\theta)$ is $(C_fL_g+C_gL_f)$-Lipschitz continuous.

\begin{proposition}[Lipschitz continuity of $\nabla_{\theta}y^*(\theta)$]\label{proposition:omega-smooth}
Suppose Assumption \ref{assumption:omega}-\ref{assumption:MDP} hold. For any $\theta_1, \theta_2 \in \mathbb{R}^{d}$, we have $\|\nabla_{\theta}y^*(\theta_1) - \nabla_{\theta}y^*(\theta_2)\|_2 \leq L_{yx} \|\theta_1-\theta_2\|_2$, where $L_{yx}$ is a positive constant.
\end{proposition}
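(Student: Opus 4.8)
The plan is to obtain a closed form for the Jacobian $\nabla_\theta y^*(\theta)$ by implicit differentiation of the fixed-point identity $A_{\theta,\phi}\,y^*(\theta)+b_{\theta,\phi}=0$, and then to bound $\|\nabla_\theta y^*(\theta_1)-\nabla_\theta y^*(\theta_2)\|_2$ by repeatedly invoking the \textbf{Fact} stated just above: a product of bounded, Lipschitz (matrix- or vector-valued) maps is again bounded and Lipschitz. Since $A_{\theta,\phi}$ is invertible with $\|A_{\theta,\phi}^{-1}\|_2\le\lambda^{-1}$ by Assumption~\ref{assumption:A}, differentiating the identity in the coordinate direction $\theta_i$ gives
\begin{equation*}
\partial_{\theta_i} y^*(\theta)=-A_{\theta,\phi}^{-1}\big[(\partial_{\theta_i}A_{\theta,\phi})\,y^*(\theta)+\partial_{\theta_i}b_{\theta,\phi}\big],
\end{equation*}
so the full Jacobian assembles from the four ingredients $A_{\theta,\phi}^{-1}$, $\nabla_\theta A_{\theta,\phi}$, $y^*(\theta)$, and $\nabla_\theta b_{\theta,\phi}$. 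Three of these are already under control: $y^*(\theta)$ is bounded by $R_y$ and $L_y$-Lipschitz by Proposition~\ref{proposition:omega-lipschitz}; and $A_{\theta,\phi}^{-1}$ is bounded by $\lambda^{-1}$ and Lipschitz, since $\|A_1^{-1}-A_2^{-1}\|_2=\|A_1^{-1}(A_2-A_1)A_2^{-1}\|_2\le\lambda^{-2}\|A_1-A_2\|_2\le\lambda^{-2}L_{A,0}\|\theta_1-\theta_2\|_2$ by the bound \eqref{eq:a1-a2} from the previous proof. It therefore remains only to show that $\nabla_\theta A_{\theta,\phi}$ and $\nabla_\theta b_{\theta,\phi}$ are themselves bounded and Lipschitz, which is the main obstacle.

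To handle these two derivatives I would write $A_{\theta,\phi}$ and $b_{\theta,\phi}$ explicitly as averages against $\mu_\theta$ and $\pi_\theta$, e.g. $b_{\theta,\phi}=\sum_{s,a}\mu_\theta(s)\pi_\theta(a|s)\bar r(s,a)\phi(s)$ with $\bar r(s,a):=\E_{s'}[r(s,a,s')]$ bounded by $r_{\max}$ and $\|\phi(s)\|_2\le1$, and analogously for $A_{\theta,\phi}$ with the extra transition factor $\mathcal P_{\pi_\theta}$. Differentiating in $\theta$ produces terms containing either $\nabla_\theta\mu_\theta(s)$ or $\nabla_\theta\pi_\theta(a|s)=\pi_\theta(a|s)\psi_\theta(s,a)$. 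Each such factor is bounded and Lipschitz: $\mu_\theta(s),\pi_\theta(a|s)$ by Assumptions~\ref{assumption:omega} and \ref{assumption:mu}; $\nabla_\theta\mu_\theta(s)$ by Assumption~\ref{assumption:mu} (boundedness by $C_\mu$ and the second-order condition (ii) giving the $L_{\mu,1}$-Lipschitz property); and $\nabla_\theta\pi_\theta(a|s)$ is bounded by $C_\psi$ and Lipschitz by the \textbf{Fact} applied to the product $\pi_\theta(a|s)\psi_\theta(s,a)$ using Assumption~\ref{assumption:omega}(i)--(iii). Applying the \textbf{Fact} termwise and summing over $s,a$ shows that $\nabla_\theta A_{\theta,\phi}$ and $\nabla_\theta b_{\theta,\phi}$ are bounded and Lipschitz, and one final application of the \textbf{Fact} to the product expression for $\partial_{\theta_i} y^*(\theta)$ yields the claim with an explicit constant $L_{yx}$.

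The hardest part is precisely the Lipschitz continuity of $\nabla_\theta\mu_\theta$, which is exactly condition (ii) of Assumption~\ref{assumption:mu}; as the authors note, this cannot be justified from first principles, yet it is indispensable here, since without a uniform-in-$\theta$ modulus of continuity for $\nabla_\theta\mu_\theta$ one cannot control how $\nabla_\theta A_{\theta,\phi}$ and $\nabla_\theta b_{\theta,\phi}$ vary with $\theta$. A secondary technical point I would need to dispatch is the interchange of $\nabla_\theta$ with the state-space average (and, if $\mathcal S$ is infinite, a dominated-convergence justification), together with the geometric mixing from Assumption~\ref{assumption:MDP}, to guarantee that these differentiated averages are well-defined and uniformly bounded across $\theta$.
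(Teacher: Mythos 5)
Your proposal is correct and follows essentially the same route as the paper's proof: your implicit differentiation of $A_{\theta,\phi}\,y^*(\theta)+b_{\theta,\phi}=0$ yields the same Jacobian expression the paper obtains by differentiating $y^*(\theta)=-A_{\theta,\phi}^{-1}b_{\theta,\phi}$, and both arguments then reduce to showing that $A_{\theta,\phi}^{-1}$, $b_{\theta,\phi}$ (equivalently $y^*(\theta)$), $\nabla_\theta A_{\theta,\phi}$ and $\nabla_\theta b_{\theta,\phi}$ are bounded and Lipschitz, expanding the latter two as state--action averages whose factors $\mu_\theta$, $\pi_\theta$, $\nabla_\theta\mu_\theta$, $\nabla_\theta\pi_\theta$ are controlled by Assumptions \ref{assumption:omega}--\ref{assumption:mu} and closing with repeated use of the product \textbf{Fact}. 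Your identification of Assumption \ref{assumption:mu}(ii), the Lipschitz continuity of $\nabla_\theta\mu_\theta$, as the indispensable ingredient that cannot be derived from first principles matches the paper's own remark.
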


\textit{Proof}.
With $y^*(\theta)=-A_{\theta,\phi}^{-1}b_{\theta,\phi}$, we have
\begin{align}\label{eq.nabla_y*}
    \nabla_{\theta}y^*(\theta)=-\nabla_{\theta}(A_{\theta,\phi}^{-1}b_{\theta,\phi})
    =-A_{\theta,\phi}^{-1}(\nabla_{\theta}A_{\theta,\phi})A_{\theta,\phi}^{-1}b_{\theta,\phi} - A_{\theta,\phi}(\nabla_{\theta}b_{\theta,\phi}).
\end{align}

To validate the Lipschitz continuity of $\nabla_{\theta}y^*(\theta)$, we need to show the boundedness and Lipschitz continuity of $A_{\theta,\phi}^{-1}$, $ b_{\theta,\phi}$, $\nabla_{\theta}A_{\theta,\phi}$ and $\nabla_{\theta}b_{\theta,\phi}$. 

From \eqref{eq:b1-b2} and \eqref{eq:a1-a2}, we have that there exist constants $L_{A,0}$ and $L_{b,0}$ such that $A_{\theta,\phi}$ is $L_{A,0}$-Lipschitz continuous, and $b_{\theta,\phi}$ is $L_{b,0}$-Lipschitz continuous. From Assumption \ref{assumption:A} and \eqref{eq:b2}, we have that there exist constants $C_{A,0}$ and $C_{b,0}$ such that $\|A_{\theta,\phi}\|_2\leq C_{A,0}$, and $\|b_{\theta,\phi}\|_2\leq C_{b,0}$.

In addition, using $A_1$ and $A_2$ as shorthand notations of $A_{\pi_{\theta_1}}$ and $A_{\pi_{\theta_2}}$,  respectively, we have
\begin{align*}\label{eq:A-1}
    \|A_1^{-1} -  A_2^{-1} \|_2
    &= \| A_1^{-1}(A_2-A_1)A_2^{-1} \|_2 \\
    &\leq \| A_1^{-1} \|_2 \|A_2^{-1} \|_2  \|(A_2-A_1)\|_2  \\
    &\leq \lambda^{-2} \left\|A_1 - A_2 \right\|_2\\
    & \stackrel{\eqref{eq:a1-a2}}{\leq}\lambda^{-2} L_{A,0} \|\theta_1-\theta_2\|_2. \numberthis
\end{align*}
Therefore, $A_{\theta,\phi}^{-1}$ is $\lambda^{-2} L_{A,0}$-Lipschitz continuous, and is bounded by $\lambda^{-1}$ due to Assumption \ref{assumption:A}.

For simplicity, denote 
\begin{align}
A(s,s'):=\phi(s)(\gamma\phi(s')-\phi(s))^\top, ~~~~b(s,a,s'):=r(s,a,s')\phi(s)
\end{align}
and then $b_{\theta, \phi} \coloneqq \E_{ s\sim \mu_{\theta}, a \sim \pi_\theta, s'\sim \mathcal{P}}[b(s,a,s')]$ and $A_{\theta, \phi} \coloneqq \E_{s\sim \mu_{\theta}, s'\sim \mathcal{P}_{\pi_\theta}}[A(s,s')]$.
 
Next we analyze $\nabla_{\theta}A_{\theta,\phi}$ and $\nabla_{\theta}b_{\theta,\phi}$, which is given by 
\begin{align*}
    \nabla_{\theta}A_{\theta,\phi} 
    &=\nabla_{\theta}\Big(\sum_{s,a,s'}\mu_{\theta}(s)\pi_{\theta}(a|s)P(s'|s,a)A(s,s')\Big)\\
    &=\sum_{s,a,s'}\left[\nabla_{\theta}\mu_{\theta}(s)\pi_{\theta}(a|s)P(s'|s,a)A(s,s') + \mu_{\theta}(s)\nabla_{\theta}\pi_{\theta}(a|s)P(s'|s,a)A(s,s')\right].\numberthis
\end{align*}

From Assumption \ref{assumption:omega} and \ref{assumption:mu}, $\mu_{\theta}(s), \pi_{\theta}(a|s), \nabla_{\theta}\mu_{\theta}(s), \nabla_{\theta}\pi_{\theta}(a|s)$ are Lipschitz continuous and bounded. Using the \textbf{Fact}, we can show that there exist constants $C_{A,1}$ and $L_{A,1}$ such that $\nabla_{\theta}A_{\theta,\phi}$ is $L_{A,1}$-Lipschitz continuous and bounded by $C_{A,1}$.

Likewise, we have
\begin{align*}
 \nabla_{\theta}b_{\theta,\phi}
    &=\nabla_{\theta}\Big(\sum_{s,a,s'}\mu_{\theta}(s)\pi_{\theta}(a|s)P(s'|s,a)b(s,a,s')\Big)\numberthis\\
    &=\sum\limits_{s,a,s'}\left[\nabla_{\theta}\mu_{\theta}(s)\pi_{\theta}(a|s)P(s'|s,a)b(s,a,s') + 
    \mu_{\theta}(s)\nabla_{\theta}\pi_{\theta}(a|s)P(s'|s,a)b(s,a,s')\right] 
\end{align*}
From Assumption \ref{assumption:omega} and \ref{assumption:mu}, $\mu_{\theta}(s), \pi_{\theta}(a|s), \nabla_{\theta}\mu_{\theta}(s), \nabla_{\theta}\pi_{\theta}(a|s)$ are Lipschitz continuous and bounded. Using the \textbf{Fact}, we are able to show that there exist constants $C_{b,1}$ and $L_{b,1}$ such that $\nabla_{\theta}b_{\theta,\phi}$ is $L_{b,1}$-Lipschitz continuous and bounded by $C_{b,1}$. 

Therefore, since $A_{\theta,\phi}^{-1}$, $ b_{\theta,\phi}$, $\nabla_{\theta}A_{\theta,\phi}$ and $\nabla_{\theta}b_{\theta,\phi}$ are all Lipschitz continuous, using \textbf{Fact}, we can show that $\nabla_{\theta}y^*(\theta)$ in \eqref{eq.nabla_y*} is $L_{yx}$-Lipschitz continuous, where $L_{yx}$ depends on the constants  $C_\mu, C_\psi, L_{\pi}, L_{\mu,0}, L_{\mu,1}, \lambda$ defined in Assumptions \ref{assumption:omega}-\ref{assumption:MDP}. 
\hfill \myQED

\subsection{Convergence of critic variable}\label{section:async-tts-critic-proof}

For brevity, we first define the following notations (cf. $\xi \coloneqq (s,a,s')$):
\begin{align*}
    \hat{\delta}(\xi,y)
    &\coloneqq r(s,a,s') + \gamma \phi(s')^\top y -\phi(s)^\top y, \\
     h_g(\xi,y) 
    &\coloneqq \hat{\delta}(\xi,y) \phi(s), \\
    \overline{h}_g(\theta,y) 
    &\coloneqq \E_{\substack{s \sim \mu_{\theta}, a \sim \pi_\theta, s' \sim \mathcal{P}}} \left[ h_g(\xi,y)\right].
\end{align*}
We also define constant $C_{\delta} \coloneqq r_{\max} + (1+\gamma)\max\{R_{\max}, R_y\}$, and we immediately have
\begin{align}\label{eq:C_delta}
    \| h_g(\xi,y)\|_2\leq |r(\xi)+\gamma \phi(s')^\top y - \phi(s)^\top y|  \leq r_{\max} + (1+\gamma)R_y \leq C_g 
\end{align}
and likewise, we have $\|\overline{h}_g(\xi,y)\|_2  \leq   C_g$. 
 
The critic update can be written compactly as:
\begin{equation}\label{update:async-tts-critic-update}
    y_{k+1} = \Pi_{R_y}\left(y_k + \beta_k g(\xi_k,y_{k})\right),
\end{equation}
where $\xi_k \coloneqq (s_k,a_k,s_k')$ is the sample used to evaluate the stochastic gradient at $k$th update. 

\textit{Proof.} 
Using $y^*(\theta_k)$ as shorthand notation of $y_{\theta_k}^*$, we start with the optimality gap
\begin{align*}\label{eq:tmp1}
    &\|y_{k+1}-y^*(\theta_{k+1})\|_2^2 \\
     &=\|y_{k+1}-y^*(\theta_k)+y^*(\theta_k)-y^*(\theta_{k+1})\|_2^2 \\
    &=\|y_{k+1}-y^*(\theta_k)\|_2^2+\|y^*(\theta_k)-y^*(\theta_{k+1})\|_2^2+2\left\langle y_{k+1}-y^*(\theta_k), y^*(\theta_k)-y^*(\theta_{k+1})\right\rangle. \numberthis
\end{align*}

We first bound 
\begin{align*}\label{eq:tmp1-2}
	\|y_{k+1}-y^*(\theta_k)\|_2^2=& \|\Pi_{R_y}\left(y_k + \beta_k g(\xi_k,y_{k})\right) - y^*(\theta_k)\|_2^2\\
	\leq &\|y_k + \beta_k g(\xi_k,y_{k}) - y^*(\theta_k)\|_2^2\\
	=& \left\|y_k - y^*(\theta_k) \right\|_2^2 + 2 \beta_k \left\langle y_k-y^*(\theta_k), g(\xi_k,y_{k})\right\rangle + \left\| \beta_k g(\xi_k,y_{k}) \right\|_2^2.
	\numberthis
\end{align*}

We first bound $\E[\left\langle y_k-y^*(\theta_k),  g(\theta_k,y_k)\right\rangle |y_k]$ in (\ref{eq:tmp1}) as
\begin{align*}\label{eq:tmp2}
    \E[\left\langle y_k-y^*(\theta_k), g(\theta_k,y_k)\right\rangle | y_k]
    &= \left\langle y_k-y^*(\theta_k), \overline{h}_g(\theta_k,y_k) - \overline{h}_g(\theta_k,y^*(\theta_k))\right\rangle \\
    &= \left\langle y_k-y^*(\theta_k), \E\left[\left(\gamma \phi(s') - \phi(s) \right)^\top (y_k-y^*(\theta_k)) \phi(s)\right] \right\rangle \\
    &= \left\langle y_k-y^*(\theta_k), \E\left[\phi(s)\left(\gamma \phi(s') - \phi(s) \right)^\top\right] (y_k-y^*(\theta_k))  \right\rangle \\
    &= \left\langle y_k-y^*(\theta_k), A_{\pi_{\theta_k}} (y_k-y^*(\theta_k)) \right\rangle \\
    &\leq  -\lambda \|y_k-y^*(\theta_k)\|_2^2, \numberthis
\end{align*}
where the first equality is due to $\overline{h}_g(\theta,y^*_\theta)=A_{\theta, \phi}y^*(\theta) + b=0 $, and the last inequality follows Assumption \ref{assumption:A}. 

Substituting (\ref{eq:tmp2}) into (\ref{eq:tmp1-2}), then taking expectation on both sides of (\ref{eq:tmp1-2}) yields
\begin{align*}\label{eq:tmp2-2}
    \E\|y_{k+1}-y^*(\theta_k)\|_2^2
    &\leq (1-2\lambda \beta_k)\E\left\|y_k - y^*(\theta_k) \right\|_2^2  + C_g^2\beta_k^2 \numberthis
\end{align*}
and plugging into into (\ref{eq:tmp1}) yields
\begin{align*}\label{eq:tmp3}
    \E\|y_{k+1}-y^*(\theta_{k+1})\|_2^2
    &\leq (1-2\lambda \beta_k)\E\left\|y_k - y^*(\theta_k) \right\|_2^2  \\
    &~~~~ + 2 \E\left\langle y_{k+1}-y^*(\theta_k), y^*(\theta_k)-y^*(\theta_{k+1})\right\rangle+  \E\left\| y^*(\theta_k)-y^*(\theta_{k+1})\right\|_2^2 + C_g^2\beta_k^2. \numberthis
\end{align*}


Next we bound the third and fourth terms in (\ref{eq:tmp3}) as
{\small\begin{align*}\label{eq:tmp8}
    &  \E\left\langle y_{k+1}-y^*(\theta_k), y^*(\theta_k)-y^*(\theta_{k+1})\right\rangle \\
    &=  \E\left\langle y_{k+1}-y^*(\theta_k), y^*(\theta_k)-y^*(\theta_{k+1})- (\nabla y^*(\theta_k))^{\top}(\theta_{k+1}-\theta_k)\right\rangle\\
    &\qquad +  \E\left\langle y_{k+1}-y^*(\theta_k), (\nabla y^*(\theta_k))^{\top}(\theta_{k+1}-\theta_k)\right\rangle  \\ 
   &\stackrel{(a)}{\leq}    \frac{L_{y,2}^2}{2} \E  \|y_{k+1}-y^*(\theta_k)\|_2  \|\theta_{k+1}-\theta_k\|_2^2 + \E\left[\left\langle y_{k+1}-y^*(\theta_k), \E[(\nabla y^*(\theta_k))^{\top}(\theta_{k+1}-\theta_k)\mid y_{k+1}]\right\rangle\right]\\ 
    &\stackrel{(b)}{\leq}   \frac{L_{y,2}^2}{2} \E  \|y_{k+1}-y^*(\theta_k)\|_2  \|\theta_{k+1}-\theta_k\|_2^2 +  \alpha_kL_{y} \E\left\| y_{k+1}-y^*(\theta_k)\right\| \left\|\bar{h}_f(\theta_k, y_{k+1})\right\| \\
    &\stackrel{(c)}{\leq}  \frac{L_{y,2}^2}{4} \E  \|y_{k+1}-y^*(\theta_k)\|_2^2  \|\theta_{k+1}-\theta_k\|_2^2+\frac{L_{y,2}^2}{4}    \E\|\theta_{k+1}-\theta_k\|_2^2 +  \alpha_k L_{y} \E\left\| y_{k+1}-y^*(\theta_k)\right\| \left\|\bar{h}_f(\theta_k, y_{k+1})\right\| \\
    &\stackrel{(d)}{\leq}   \frac{\alpha_k^2C_f^2L_{y,2}^2}{2} \E  \|y_{k+1}-y^*(\theta_k)\|_2  +\frac{L_{y,2}^2}{4}    \E\|\theta_{k+1}-\theta_k\|_2^2 +\alpha_k L_{y,2}^2 \E  \|y_{k+1}-y^*(\theta_k)\|_2^2  +\frac{\alpha_k}{4}\E\left\|\bar{h}_f(\theta_k, y_{k+1})\right\|^2 \\ 
    &\leq  \left(\alpha_k L_{y,2}^2  +\frac{\alpha_k^2C_f^2L_{y,2}^2}{4}\right) \E  \|y_{k+1}-y^*(\theta_k)\|_2^2 +\frac{\alpha_k }{4}\E\left\|\bar{h}_f(\theta_k, y_{k+1})\right\|^2+\frac{\alpha_k^2C_f^2L_{y,2}^2}{4}    \numberthis
\end{align*}}
where (a) follows from the $L_{y,2}$-smoothness of $y^*$ with respect to $\theta$; (b) follows from $L_{y}$ is the Lipschitz constant of $y^*$ in Proposition \ref{proposition:omega-lipschitz} and 
$$\E[(\nabla y^*(\theta_k))^{\top}(\theta_{k+1}-\theta_k)\mid y_{k+1}]=\nabla y^*(\theta_k))^{\top}\bar{h}_f(\theta_k, y_{k+1});$$ 
(c) uses the Young's inequality; (d) uses  the Young's inequality and the fact that $\|\theta_{k+1}-\theta_k\|_2=\alpha_k\|h_f(\xi_k',\theta_k, y_{k+1})\|\leq C_g C_\psi = C_f$ and $\|\bar{h}_f(\theta_k, y_{k+1})\|\leq C_f$.

We bound 
\begin{align*}\label{eq.lipw}
	\E\left\| y^*(\theta_k)-y^*(\theta_{k+1})\right\|_2^2& \leq L_{y}^2\E\left\| \theta_k-\theta_{k+1}\right\|_2^2\\
	&\leq L_{y}^2 \alpha_k^2 \E\left\| \hat{\delta}(\xi_k,y_{k})\psi_{\theta_{k}}(s_k,a_k)\right\|_2^2 \leq L_{y}^2 C_f^2 \alpha_k^2 \numberthis
\end{align*}
where the inequality is due to the $L_y$-Lipschitz of $y^*(\theta)$ shown in Proposition \ref{proposition:omega-lipschitz}, and the last inequality follows the fact that
\begin{align*}\label{eq:C_p}
    \| \hat{\delta}(\xi_k, y_{k})\psi_{\theta_{k}}(s_k,a_k) \|_2
\leq C_g C_\psi = C_f. \numberthis
\end{align*}

Substituting \eqref{eq:tmp8}-\eqref{eq.lipw} into (\ref{eq:tmp1}) yields
\begin{align*} \label{eq:async-tts-tmp0}
   \E\|y_{k+1}-y^*(\theta_{k+1})\|_2^2
    &\leq  \left(1+\alpha_k L_{y,2}^2  +\frac{\alpha_k^2C_f^2L_{y,2}^2}{4}\right)\E\left[\left\|y_{k+1}-y^*(\theta_k)\right\|_2^2 \right] \\
    &~~~~ +\frac{\alpha_k}{4}\E\left\|\bar{h}_f(\theta_k, y_{k+1})\right\|^2+\frac{\alpha_k^2C_f^2L_{y,2}^2}{4}+L_{y}^2 C_f^2 \alpha_k^2.\numberthis
\end{align*}
 \hfill\myQED

\subsection{Proof of Theorem \ref{theorem2}}\label{section:async-tts-actor-proof}
Recall the notations:
\begin{align*}
    \hat{\delta}(\xi,y) &\coloneqq r(s,a,s') + \gamma\phi(s')^\top y - \phi(s)^\top y, \\
     \bar{\delta}(\xi,y) &\coloneqq \E_{\substack{s \sim d_{\theta}, a \sim \pi_\theta, s' \sim \mathcal{P}}} \left[r(s,a,s') + \gamma\phi(s')^\top y - \phi(s)^\top y \mid y\right] \\ 
    \delta(\xi,\theta) &\coloneqq r(s,a,s') + \gamma V_{\pi_\theta}(s') - V_{\pi_\theta}(s).
\end{align*}
The actor update can be written compactly as:
\begin{align}\label{update:async-tts-actor-update}
\theta_{k+1} = \theta_k + \alpha_k h_f(\xi_k',\theta_k, y_{k+1})
\end{align}
where $h_f(\xi_k',\theta_k, y_{k+1}):=\hat{\delta}(\xi_k',y_{k+1})\psi_{\theta_k}(s_k,a_k)$. Define $\bar{h}_f(\theta_k, y_{k+1}):=\E[\hat{\delta}(\xi_k',y_{k+1})\psi_{\theta_k}(s_k,a_k)|y_{k+1}]$.
Then we are ready to give the convergence proof.

\textit{Proof.}
From $L_F$-Lipschitz of policy gradient in Proposition \ref{prop:Lj-lip}, taking expectation conditioned on $\theta_k, y_{k+1}$, we have:
{\small\begin{align*}\label{pf.ac_smth}
   &\E [F(\theta_{k+1})]-F(\theta_k) \numberthis\\
    &\geq  \E\left\langle \nabla F(\theta_k), \theta_{k+1}-\theta_k \right\rangle - \frac{L_F}{2}\E\|\theta_{k+1}-\theta_k\|_2^2 \\
       &\geq  \alpha_k\E\left\langle \nabla F(\theta_k), \bar{h}_f(\theta_k, y_{k+1})\right\rangle - \frac{L_F}{2}\E\|\theta_{k+1}-\theta_k\|_2^2 \\ 
    &=   \frac{\alpha_k}{2}\E\left\|\nabla F(\theta_k)\right\|^2+ \frac{\alpha_k}{2}\E\left\|\bar{h}_f(\theta_k, y_{k+1})\right\|^2- \frac{\alpha_k}{2}\E\left\| \nabla F(\theta_k)- \bar{h}_f(\theta_k, y_{k+1})\right\|^2 - \frac{L_F}{2}\E\|\theta_{k+1}-\theta_k\|_2^2 \\
     &\geq   \frac{\alpha_k}{2}\E\left\|\nabla F(\theta_k)\right\|^2+ \frac{\alpha_k}{2}\E\left\|\bar{h}_f(\theta_k, y_{k+1})\right\|^2- \frac{\alpha_k}{2}\E\left\| \nabla F(\theta_k)- \bar{h}_f(\theta_k, y_{k+1})\right\|^2\\
     &\quad - \frac{L_F\alpha_k^2}{2}\E\|\bar{h}_f(\theta_k, y_{k+1})\|_2^2- \frac{L_F\alpha_k^2}{2}\E\|\bar{h}_f(\theta_k, y_{k+1})-h_f(\xi_k',\theta_k, y_{k+1})\|_2^2 \\ 
      &\geq  \frac{\alpha_k}{2}\E\left\|\nabla F(\theta_k)\right\|^2+ \left(\frac{\alpha_k}{2}- \frac{L_F\alpha_k^2}{2}\right) \E\left\|\bar{h}_f(\theta_k, y_{k+1})\right\|^2- \frac{\alpha_k}{2}\E\left\| \nabla F(\theta_k)- \bar{h}_f(\theta_k, y_{k+1})\right\|^2 - \frac{L_FC_f^2\alpha_k^2}{2} 
\end{align*}}
where the last inequality follows the definition of $C_f$ in (\ref{eq:C_p}).

We next bound the gradient bias as
{\small\begin{align*}\label{pf.grad_bias}
    \left\| \nabla F(\theta_k)- \bar{h}_f(\theta_k, y_{k+1})\right\|^2 &= \left\| \nabla F(\theta_k)- \E[\hat{\delta}(\xi_k',y_{k+1})\psi_{\theta_k}(s_k,a_k)|y_{k+1}] \right\|^2\\
    &\leq 2  \left\| \nabla F(\theta_k)- \E[\hat{\delta}(\xi_k',y^*(\theta_k))\psi_{\theta_k}(s_k,a_k)|y_{k+1}]\right\|^2\\
    &\quad + 2 \left\|\E[(\hat{\delta}(\xi_k',y^*(\theta_k))-\hat{\delta}(\xi_k',y_{k+1}))\psi_{\theta_k}(s_k,a_k)| y_{k+1}]\right\|^2 \\
    &\leq 4  \underbracket{\left\| \nabla F(\theta_k)- \E[\delta(\xi_k', \theta_k) \psi_{\theta_k}(s_k,a_k)|y_{k+1}]\right\|^2}_{I_1}\\
   &\quad +4  \underbracket{\left\|\E[\delta(\xi_k', \theta_k) \psi_{\theta_k}(s_k,a_k)|y_{k+1}]- \E[\hat{\delta}(\xi_k',y^*(\theta_k))\psi_{\theta_k}(s_k,a_k)|y_{k+1}]\right\|^2}_{I_2}\\
    &\quad + 2 \underbracket{\left\|\E[(\hat{\delta}(\xi_k',y^*(\theta_k))-\hat{\delta}(\xi_k',y_{k+1}))\psi_{\theta_k}(s_k,a_k)|y_{k+1}]\right\|^2}_{I_3}. \numberthis
\end{align*}}

Then we bound $I_1$ as
\begin{align*}
   I_1&=\left\| \nabla F(\theta_k)- \E[\delta(\xi_k', \theta_k) \psi_{\theta_k}(s_k,a_k)|\theta_k, y_{k+1}]\right\|^2\\
    &= \left\| \nabla F(\theta_k)- \E_{\substack{s_k \sim d_{\theta_k}\\ a_k \sim \pi_{\theta_k}, s'_k \sim \mathcal{P}}}\left[ \left(r(s_k,a_k,s'_k) + \gamma V_{\pi_{\theta_k}}(s'_k)-  V_{\pi_{\theta_k}}(s_k) \right)\psi_{\theta_k}(s_k,a_k) \bigg| \theta_k, y_{k+1} \right]\right\|^2 \\
      &= \left\| \nabla F(\theta_k)- \E_{\substack{s_k \sim d_{\theta_k}\\ a_k \sim \pi_{\theta_k}}}\left[ A_{\pi_{\theta_k}}(s_k,a_k)\psi_{\theta_k}(s_k,a_k) \bigg| \theta_k, y_{k+1} \right]\right\|^2 =0
\end{align*}
where the last equality follows from the policy gradient theorem. 

Then we bound $I_2$ as
\begin{align*}
   I_2&=\left\|\E[\delta(\xi_k', \theta_k) \psi_{\theta_k}(s_k,a_k)|\theta_k, y_{k+1}]- \E[\hat{\delta}(\xi_k',y^*(\theta_k))\psi_{\theta_k}(s_k,a_k)|\theta_k, y_{k+1}]\right\|^2\\
    &=\left\|\E\left[\left(\delta(\xi_k', \theta_k)  - \hat{\delta}(\xi_k',y^*(\theta_k))\right)\psi_{\theta_k}(s_k,a_k)|\theta_k, y_{k+1}\right]\right\|^2\\\
       &=\left\|\E\left[\left|\delta(\xi_k', \theta_k)  - \hat{\delta}(\xi_k',y^*(\theta_k))\right|\|\psi_{\theta_k}(s_k,a_k)\||\theta_k, y_{k+1}\right]\right\|^2\\\ 
            &=  C_\psi^2 \left\|\E\left[\left|\delta(\xi_k', \theta_k)  - \hat{\delta}(\xi_k',y^*(\theta_k))\right| |\theta_k, y_{k+1}\right]\right\|^2\\\   
           &\leq C_\psi^2 \left(\gamma \E\left|\phi(s'_k)^\top y^*(\theta_k)-V_{\pi_{\theta_k}}(s'_k)\right| + \E\left|V_{\pi_{\theta_k}}(s_k) - \phi(s_k)^\top y^*(\theta_k) \right| \right) \\
    &\leq   C_\psi^2 \bigg(\gamma \sqrt{\E\left|\phi(s'_k)^\top y^*(\theta_k)-V_{\pi_{\theta_k}}(s'_k)\right|^2} + \sqrt{\E\left|V_{\pi_{\theta_k}}(s_k) - \phi(s_k)^\top y^*(\theta_k) \right|^2} \bigg) \\     
      &\leq C_\psi^2(1+\gamma)\epsilon_{app}.
\end{align*}

 Then we bound $I_3$ as
\begin{align*}
   I_3&=\left\|\E[(\hat{\delta}(\xi_k',y^*(\theta_k))-\hat{\delta}(\xi_k',y_{k+1}))\psi_{\theta_k}(s_k,a_k)|\theta_k, y_{k+1}]\right\|^2\\
    &\leq C_\psi^2\E\left[\| \hat{\delta}(\xi_k',y^*(\theta_k))-\hat{\delta}(\xi_k',y_{k+1})\|^2|\theta_k, y_{k+1}\right]\\
      &=   C_\psi^2\E\left[\|\gamma\phi(s'_k)^\top y^*(\theta_k) - \phi(s_k)^\top y^*(\theta_k)-\gamma\phi(s'_k)^\top y_{k+1}+\phi(s_k)^\top y_{k+1}\|^2|\theta_k, y_{k+1}\right]\\
      &\leq C_\psi^2(1+\gamma)\|y^*(\theta_k)-y_{k+1}\|^2.
\end{align*}

Then \eqref{pf.grad_bias} can be rewritten as
\begin{align*}
  \left\| \nabla F(\theta_k)- \bar{h}_f(\theta_k, y_{k+1})\right\|^2 \leq 4C_\psi^2(1+\gamma)\epsilon_{app}+2C_\psi^2(1+\gamma)\|y^*(\theta_k)-y_{k+1}\|^2
\end{align*}
plugging which into \eqref{pf.ac_smth} leads to 
\begin{align*}\label{pf.ac_smth2}
   \E [F(\theta_{k+1})]
      &\geq F(\theta_k) + \frac{\alpha_k}{2}\E\left\|\nabla F(\theta_k)\right\|^2+ \left(\frac{\alpha_k}{2}- \frac{L_F\alpha_k^2}{2}\right) \E\left\|\bar{h}_f(\theta_k, y_{k+1})\right\|^2\\
      &\quad - \frac{\alpha_k}{2}\E\left\| \nabla F(\theta_k)- \bar{h}_f(\theta_k, y_{k+1})\right\|^2 - \frac{L_FC_f^2\alpha_k^2}{2}\\
      &\geq F(\theta_k) + \frac{\alpha_k}{2}\E\left\|\nabla F(\theta_k)\right\|^2+ \left(\frac{\alpha_k}{2}- \frac{L_F\alpha_k^2}{2}\right) \E\left\|\bar{h}_f(\theta_k, y_{k+1})\right\|^2  - \frac{L_FC_f^2\alpha_k^2}{2}\\
      &\quad       -  2\alpha_k C_\psi^2(1+\gamma)\epsilon_{app} - \alpha_k C_\psi^2(1+\gamma)\|y^*(\theta_k)-y_{k+1}\|^2.
       \numberthis
\end{align*}

Consider the difference of the Lyapunov function $\mathbb{V}^k := -F(\theta_k) + \|y_k-y^*(\theta_k)\|_2^2$, given by
{\small\begin{align*}
\E [\mathbb{V}^{k+1}]-\E [\mathbb{V}^k]=&
	- \E [F(\theta_{k+1})]+\E\|y_{k+1}-y^*(\theta_{k+1})\|_2^2+	 \E [F(\theta_k)]-\E\|y_k-y^*(\theta_k)\|_2^2\\
	 \leq &- \frac{\alpha_k}{2}\E\left\|\nabla F(\theta_k)\right\|^2- \left(\frac{\alpha_k}{2}- \frac{L_F\alpha_k^2}{2}\right) \E\left\|\bar{h}_f(\theta_k, y_{k+1})\right\|^2 + \frac{L_FC_f^2\alpha_k^2}{2} + 2\alpha_k C_\psi^2(1+\gamma)\epsilon_{app}\\
      &       + \alpha_k C_\psi^2(1+\gamma)\|y^*(\theta_k)-y_{k+1}\|^2+\E\|y_{k+1}-y^*(\theta_{k+1})\|_2^2 -\E\|y_k-y^*(\theta_k)\|_2^2\\
    	 \leq &- \frac{\alpha_k}{2}\E\left\|\nabla F(\theta_k)\right\|^2- \left(\frac{\alpha_k}{2}-\frac{\alpha_k  }{4}- \frac{L_F\alpha_k^2}{2}\right) \E\left\|\bar{h}_f(\theta_k, y_{k+1})\right\|^2  +\frac{L_FC_f^2\alpha_k^2}{2} \\
      & +\left(1+\alpha_kL_{y,2}^2  +\frac{\alpha_k^2C_f^2L_{y,2}^2}{4}+  \alpha_k C_\psi^2(1+\gamma)\right)\E\left[\left\|y_{k+1}-y^*(\theta_k)\right\|_2^2 \right] \\
    &-  \E\left\|y_k - y^*(\theta_k) \right\|_2^2  +\frac{\alpha_k^2C_f^2L_{y,2}^2}{4}+L_{y}^2 C_f^2 \alpha_k^2+ 2\alpha_k C_\psi^2(1+\gamma)\epsilon_{app}.\numberthis
\end{align*}}

Applying \eqref{eq:tmp2-2} to bound $\E\left[\left\|y_{k+1}-y^*(\theta_k)\right\|_2^2 \right]$, we have
\begin{align*}\label{eq.106}
&\E [\mathbb{V}^{k+1}]-\E [\mathbb{V}^k]
     \\
    	 \leq	 &- \frac{\alpha_k}{2}\E\left\|\nabla F(\theta_k)\right\|^2- \left(\frac{\alpha_k}{2}-\frac{\alpha_k  }{4}- \frac{L_F\alpha_k^2}{2}\right) \E\left\|\bar{h}_f(\theta_k, y_{k+1})\right\|^2  +\frac{L_FC_f^2\alpha_k^2}{2} + 2\alpha_k C_\psi^2(1+\gamma)\epsilon_{app}\\
      &  +\left[\Big(1+\alpha_kL_{y,2}^2  +\frac{\alpha_k^2C_f^2L_{y,2}^2}{4}+  \alpha_k C_\psi^2(1+\gamma)\Big)(1-2\lambda \beta_k)-1\right] \E\left\|y_k - y^*(\theta_k) \right\|_2^2 \\
    &  +\left(1+\alpha_kL_{y,2}^2  +\frac{\alpha_k^2C_f^2L_{y,2}^2}{4}+  \alpha_k C_\psi^2(1+\gamma)\right) C_g^2\beta_k^2+\frac{\alpha_k^2C_f^2L_{y,2}^2}{4}+L_{y}^2 C_f^2 \alpha_k^2.\numberthis
\end{align*}

Similar to the steps   \eqref{eq.step-cond}-\eqref{eq.step-cond-3}, if we select
\begin{equation}\label{eq.step-cond-4}
    \alpha_k=\min\left\{\frac{1}{2L_F}, \frac{\alpha}{\sqrt{K}}\right\},~~~~~~~\beta_k=\frac{4L_{y,2}^2  +  8 C_\psi^2+  {C_f^2L_{y,2}^2}/{2L_F}}{8\lambda}\alpha_k.
\end{equation}
which ensures that 
\begin{subequations}
\begin{align}
	\frac{\alpha_k  }{4}- \frac{L_F\alpha_k^2}{2}\geq 0\\
	\Big(1+\alpha_kL_{y,2}^2  +  \alpha_k C_\psi^2(1+\gamma)+\frac{\alpha_k^2C_f^2L_{y,2}^2}{4}\Big)(1-2\lambda \beta_k)\leq 1
\end{align}	
\end{subequations}
we can simplify \eqref{eq.106} as 
\begin{align*}
\E [\mathbb{V}^{k+1}]-\E [\mathbb{V}^k]
    	 \leq &- \frac{\alpha_k}{2}\E\left\|\nabla F(\theta_k)\right\|^2   +\frac{L_FC_f^2\alpha_k^2}{2} + 2\alpha_k C_\psi^2(1+\gamma)\epsilon_{app}+\frac{\alpha_k^2C_f^2L_{y,2}^2}{4}\\
    & +\left(1+\alpha_kL_{y,2}^2  +\frac{\alpha_k^2C_f^2L_{y,2}^2}{4}+  \alpha_k C_\psi^2(1+\gamma)\right) C_g^2\beta_k^2+L_{y}^2 C_f^2 \alpha_k^2.\numberthis
\end{align*}

After telescoping, we have
\begin{align*}\label{eq:asymp-equiv-actor-iid}
 \frac{1}{K}\sum_{k=1}^K\E\left\|\nabla F(\theta_k)\right\|^2 
    	 \leq &\frac{2 \mathbb{V}^1 }{\alpha_k K}  + L_FC_f^2\alpha_k + 4 C_\psi^2(1+\gamma)\epsilon_{app}+\frac{\alpha_k C_f^2L_{y,2}^2}{2}+2L_{y}^2 C_f^2 \alpha_k \\
    &  +2\left(1+\alpha_k L_{y,2}^2  +\frac{\alpha_k^2C_f^2L_{y,2}^2}{4}+  \alpha_k C_\psi^2(1+\gamma)\right)\frac{C_g^2\beta_k^2}{\alpha_k} \numberthis
\end{align*}
which, together with $\alpha_k={\cal O}(1/\sqrt{K}), \beta_k={\cal O}(1/\sqrt{K})$, completes the proof. \hfill \myQED

\end{document}